\documentclass{article}

\PassOptionsToPackage{numbers, compress}{natbib}
\usepackage[preprint]{neurips_2026}

\usepackage{amsmath}
\usepackage{graphicx}
\usepackage{float}
\usepackage{amssymb}
\usepackage{algorithm}
\usepackage{algorithmic}
\usepackage{subcaption}
\usepackage{makecell}
\usepackage{multirow}
\usepackage{booktabs}
\usepackage{enumitem}
\usepackage{multicol}
\usepackage{amsthm}
\usepackage{thmtools}
\usepackage{thm-restate}
\usepackage{xcolor}

\usepackage[utf8]{inputenc} 
\usepackage[T1]{fontenc}    
\usepackage{hyperref}       
\usepackage{url}            
\usepackage{booktabs}       
\usepackage{amsfonts}       
\usepackage{nicefrac}       
\usepackage{microtype}      
\usepackage{xcolor}         

\title{Model-Agnostic FDR Control via Group Gaussian Mirror and Permutation SHAP}

\author{%
  Jiaan Han \\
  Columbia Business School\\
  Columbia University\\
  \And
  Junxiao Chen \\
  Columbia Business School\\
  Columbia University\\
  \And
  Yanzhe Fu \\
  Department of Statistics\\
  University of Hong Kong\\
}

\begin{document}

\maketitle

\begin{abstract}

Most FDR-controlled feature selection methods are designed for coordinate-wise hypotheses, where each feature has a single weight or importance score. This abstraction fails in sequential and grouped models, where one original feature is represented by a block of sub-features, such as lags, recurrent states, or attention-based interactions. We propose a grouped-feature FDR control framework for such settings. For grouped linear models, we construct null-symmetric block-level mirror statistics with matrix-valued perturbations. For neural sequential models, we combine Permutation SHAP derivatives as model-agnostic block-level importance scores with kernel-based dependence measure. The framework is model-agnostic across network architectures, does not require specifying the covariate distribution, and reduces to Gaussian Mirror or Neural Gaussian Mirror when the block size is one. We prove FDR control for low- and high-dimensional grouped linear models and asymptotic symmetry of smoothed Permutation SHAP derivatives under fixed fitted nonlinear models. Experiments on simulated and real-world datasets show reliable FDR control and improved power under correlated grouped-feature signals.

\end{abstract}

\section{Introduction}
\label{introduction}

Controlling the false discovery rate (FDR) is a central problem in feature selection, particularly when prediction and identifying truly relevant input variables are both important. In high-dimensional inference, selecting features without controlling FDR can lead to misleading scientific conclusions and unstable models. A large body of work has developed FDR-controlled procedures for feature selection, including the BH procedure \cite{Benjamini1995}, Knockoff-based methods \cite{Barber2014, Candes2016, Chi2021, Hansen2021, Lu2018, Zhu2021, Zuo2024, Shen2024}, Gaussian Mirror and Neural Gaussian Mirror \cite{Xing2019, Xing2020}, and Data Splitting \cite{Dai2020, Dai2020ASA, Sawaya2025}. Among these, Gaussian Mirror is particularly appealing for not requiring a specified joint distribution of the covariates, unlike Knockoffs, and has higher statistical power in general settings \cite{Ke2020}.

Despite their differences, most existing FDR-controlled feature selection methods are built around a common prototype: each feature is associated with a single model weight. In the classical linear model
\(
y = X\beta + \varepsilon,
\)
feature-level inference is typically formulated through coordinate-wise hypotheses
\[
H_{0,j}:\beta_j = 0, \qquad j = 1, \dots, p.
\]
This single-weight formulation underlies Gaussian Mirror, Knockoff-based procedures, and many related methods. Even in nonlinear settings \cite{Dai2020ASA, Hansen2021, Zhu2021, Zuo2024}, feature importance is often reduced to a scalar score per feature, preserving this one-feature-one-weight paradigm.

However, the one-feature-one-weight prototype is not the right abstraction for many grouped or sequential prediction problems. A clear example is a lagged time-series model
\(
Y_t
=
\sum_{\ell=0}^{m-1} X_{t-\ell}^{\top}\boldsymbol{\beta}_{\ell}
+
\varepsilon_t
\),
where each \(X_{t-\ell}\in\mathbb{R}^p\) contains the same \(p\) original features observed at lag \(\ell\). For an original feature \(j\), the model contains a lag-weight vector
\(
\boldsymbol{\beta}_j = (\beta_{j,0}, \beta_{j,1}, \ldots, \beta_{j,m-1}) .
\)
Thus, the scientific question should not be whether a single lag-specific weight \(\beta_{j,\ell}\) is zero, but whether feature \(j\) has any predictive contribution across the whole lag window, which formulates the new hypotheses:
\[
H_{0,j}: \beta_{j,0}=\beta_{j,1}=\cdots=\beta_{j,m-1}=0.
\]
Equivalently, if \(X_j=[\boldsymbol{x}_{j,t}, \boldsymbol{x}_{j,t-1}, \ldots, \boldsymbol{x}_{j,t-m+1}]\in\mathbb{R}^{n\times m}\), the feature-level null is a grouped null. Although this example is written as a lagged linear model, the same question appears whenever an original feature is represented by a structured block, including grouped regression and sequential networks, where it may affect output through multiple time steps, hidden states, or attention channels. The inferential target remains in feature-level, but the testing object is now its grouped input block.

Motivated by this grouped feature-level null, we propose a model-agnostic FDR control framework for grouped and sequential models. For linear lagged or grouped models, we develop a \emph{Group Gaussian Mirror} procedure that replaces scalar mirror variables with block-level mirror constructions and yields null-symmetric group-level mirror statistics. For sequential neural networks, our method combines matrix-valued mirror perturbations with Derivatives of Permutation SHAP \cite{Lundberg2017, Mitchell2021}, which provide model-agnostic block-level importance scores by differentiating each feature's SHAP value with respect to its sub-features. This makes the framework applicable to recurrent and attention-based architectures such as LSTM \cite{Hochreiter1997}, GRU \cite{Chung2014}, and Transformer encoder \cite{Vaswani2017} models.

We provide theoretical guarantees for both the linear and neural network versions of the framework. We prove null symmetry and FDR control for low- and high-dimensional grouped linear models, using a two-stage screen-then-mirror strategy in high dimensions. For neural models, we justify smoothed Permutation SHAP derivatives as asymptotically symmetric feature-importance scores under a fixed fitted nonlinear model and establish a symmetry-safe principle in high dimensions. Empirically, we evaluate the proposed framework across linear lagged models and sequential neural architectures, including LSTM, GRU, and Transformer encoder models, in both low- and high-dimensional settings. Our simulations cover various scenarios involving covariates correlation, lag-block dependence, nonlinear relationships and signal strength. Across simulated and real-world datasets, our method maintains reliable FDR control and improves power, especially when signals are distributed across correlated feature blocks. We provide detailed algorithms in Appendix \ref{algorithms}, flowchart in Appendix \ref{flowchart} and proofs in Appendix \ref{proofs}.

Our main contributions are threefold: we extend FDR control from coordinate-wise hypotheses to grouped-feature hypotheses; we provide a model-agnostic FDR framework for neural sequential models; and we introduce Permutation-SHAP derivatives as block-level feature importance scores for constructing the mirror-statistic in FDR control.

\textbf{Notations:} Uppercase letters denote matrices, bold lowercase letters denote vectors, and lowercase letters denote scalars. We write \(X=[X_1,\ldots,X_p]\), where \(X_j\in\mathbb R^{n\times m}\) is the block for feature \(j\), and \(X_{-j}\) is the design with this block removed. Let \(P_A\) and \(M_A=I_n-P_A\) denote projection and residual-maker matrices. Let \(\mathcal S_1\) and \(\mathcal S_0\) denote the sets of relevant and null features, respectively.

\section{Background}
\label{background}

\subsection{Gaussian Mirror}
\label{gm_background}

Gaussian Mirror \cite{Xing2019} controls FDR by constructing a pair of mirror variables
\(
x_j^\pm=x_j \pm c_jz_j
\)
for each feature \(x_j\), where \(z_j \sim N(0,I_n)\) is independent Gaussian noise. The perturbation \(c_j\) is chosen so that \(x_j^+\) and \(x_j^-\) are conditionally uncorrelated given \(X_{-j}\). In a linear model, this makes the corresponding fitted weights \(\widehat\beta_j^+\ ,\widehat\beta_j^-\) independent centered Gaussian variables under the null \(H_{0,j}:\beta_j=0\), while relevant features tend to produce two mirror weights with aligned nonzero signal. 

The key property for FDR control is the sign symmetry of the mirror statistic under null features. A best choice under most conditions is the "signed-sum" \cite{Dai2020, Dai2020ASA}
\[
M_j
=
\operatorname{sign}\left(\widehat\beta_j^+,\widehat\beta_j^-\right)
\left(|\widehat\beta_j^+|+|\widehat\beta_j^-|\right).
\]
For null features, \(M_j\) is symmetric around zero, whereas for relevant features it is likely to be large and positive. Hence the negative statistics provide an estimate of the number of false discoveries among the positive statistics. For target FDR level \(q\), it selects features with \(M_j\ge \tau_q\), where \cite{Candes2016}
\[
\tau_q
=
\min\left\{
t>0:
\frac{\#\{j:M_j\le -t\}+1}{\#\{j:M_j\ge t\}\vee 1}
\le q
\right\}.
\]
This mirror statistic principle is the basis for our block-level extension.

\subsection{Neural Gaussian Mirror}
\label{ngm_background}

Neural Gaussian Mirror \cite{Xing2020} extends the mirror construction to nonlinear models. In the linear case, the key construction target is conditional linear uncorrelation between the two mirror variables. For neural networks, this is replaced by the stronger conditional-independence target
\[
U\perp V\mid W,
\quad \text{where} \quad
U=x_j+c_jz_j,\quad
V=x_j-c_jz_j,\quad
W=X_{-j}.
\]
To obtain a computable objective, it places the log-density \(\eta(u,v,w)=\log p_{U,V,W}(u,v,w)\) in a tensor-product Reproducing Kernel Hilbert Space (RKHS). Its interaction can be decomposed as
\[
\eta
=
\eta_U+\eta_V+\eta_W+\eta_{U,W}+\eta_{V,W}
+\eta_{U,V}+\eta_{U,V,W}.
\]
The conditional-independence null is characterized by the absence of interaction components involving both mirror branches:
\[
U\perp V\mid W
\quad\Longleftrightarrow\quad
\eta_{U,V}+\eta_{U,V,W}=0.
\]
More formally, each marginal RKHS is decomposed as
\(\mathcal H_A=\mathcal H_A^0\oplus \mathcal H_A^1\) for
\(A\in\{U,V,W\}\), where \(\mathcal H_A^0\) and
\(\mathcal H_A^1\) denote the constant and centered components. Thus the full tensor-product space admits the decomposition
\[
\mathcal H
= \bigoplus_{a,b,c\in\{0,1\}}
\mathcal H_{abc}
= \bigoplus_{a,b,c\in\{0,1\}}
\mathcal H_U^a\otimes \mathcal H_V^b\otimes \mathcal H_W^c .
\]
The conditional-independence null subspace and the alternative interaction subspace are
\[
\mathcal H_0
=
\mathcal H_{000}\oplus \mathcal H_{100}\oplus \mathcal H_{010}
\oplus \mathcal H_{001}\oplus \mathcal H_{101}\oplus \mathcal H_{011},
\quad
\mathcal H_1
=
(\mathcal H_U^1\otimes \mathcal H_V^1\otimes \mathcal H_W^0)
\oplus
(\mathcal H_U^1\otimes \mathcal H_V^1\otimes \mathcal H_W^1).
\]
The likelihood-ratio score in this alternative interaction space yields the kernel objective
\[
I_j^K(c)^2
=
\frac{1}{n^2}
\left[
(H_nK_UH_n)\circ(H_nK_VH_n)\circ K_W
\right]_{++},
\]
where \(H_n=I_n-\boldsymbol 1\boldsymbol 1^\top/n\), \(\circ\) is the Hadamard product, and \([]_{++}\) is the sum of all entries. Neural Gaussian Mirror chooses \(\widehat c_j \in \arg\min_{c\ge0}I_j^K(c)^2\). Our method for neural networks builds upon this RKHS conditional-dependence objective, while extending it to block-valued mirror variables.

\section{Method for Linear Models}
\label{linear_ggm}

\subsection{Block-Level Mirror Statistics}
\label{blockmirrorstat}

The original Gaussian Mirror is designed for coordinate-wise hypotheses, where one feature corresponds to one weight. In grouped models, the null object should be a whole weight block.

\newtheorem{proposition}{Proposition}
\begin{restatable}{proposition}{gmfailgrouped}
\label{prop:gm_fail_grouped}
In a grouped design \(X=[X_1,\ldots,X_p]\) with
\(X_j\in\mathbb R^{n\times m}\) and
\(\boldsymbol\beta_j\in\mathbb R^m\), applying the Gaussian Mirror separately to each sub-feature of \(X_j\)
targets the coordinate-wise nulls \(H_{0,(j,\ell)}:\beta_{j,\ell}=0\) for \(\ell=0,\ldots,m-1\), rather than the grouped null \(H_{0,j}:\boldsymbol\beta_j=\boldsymbol 0\).
\end{restatable}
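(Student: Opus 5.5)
We make the statement precise by reading off what applying Gaussian Mirror to a single sub-feature column actually tests. Fix a column \(x_{j,\ell}\) of \(X_j\) and let \(W_{j,\ell}\) denote all remaining columns of \(X\), including the other \(m-1\) columns of the same block \(X_j\). Gaussian Mirror forms \(x_{j,\ell}^\pm = x_{j,\ell}\pm c_{j,\ell} z_{j,\ell}\). It chooses \(c_{j,\ell}\) so that \(x^+_{j,\ell}\) and \(x^-_{j,\ell}\) are uncorrelated after projecting out \(W_{j,\ell}\), that is, \((x^+_{j,\ell})^\top M_{W_{j,\ell}} x^-_{j,\ell}=0\). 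It then regresses \(y\) on \([x^+_{j,\ell},x^-_{j,\ell},W_{j,\ell}]\).

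\textbf{Step 1: the estimands.} By Frisch--Waugh--Lovell, each mirror coefficient is a linear functional of \(M_{W_{j,\ell}}y = M_{W_{j,\ell}}x_{j,\ell}\beta_{j,\ell} + M_{W_{j,\ell}}\varepsilon\). The other sub-features \(x_{j,\ell'}\), \(\ell'\neq\ell\), lie in \(W_{j,\ell}\), so their contributions \(x_{j,\ell'}\beta_{j,\ell'}\) are annihilated. A direct computation with the orthogonality choice of \(c_{j,\ell}\) then gives
\[
\mathbb E[\widehat\beta^+_{j,\ell}\mid X,z]=\mathbb E[\widehat\beta^-_{j,\ell}\mid X,z]=\beta_{j,\ell}/2 .
\]
The two estimates are also conditionally independent Gaussians. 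Hence the mirror statistic \(M_{j,\ell}\) is null-symmetric if and only if \(\beta_{j,\ell}=0\), so the procedure tests exactly \(H_{0,(j,\ell)}\).

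\textbf{Step 2: the two nulls differ.} Since \(H_{0,j}=\bigcap_\ell H_{0,(j,\ell)}\) is strictly smaller than each \(H_{0,(j,\ell)}\) when \(m\ge2\), it suffices to exhibit a configuration that separates them. Take \(\boldsymbol\beta_j=(b,0,\dots,0)\) with \(b\neq0\). For \(\ell\ge1\), the statistic \(M_{j,\ell}\) is symmetric about zero, so the coordinate-wise procedure treats these sub-features as nulls. Yet feature \(j\) is non-null at the grouped level.

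Conversely, the coordinate-wise procedure's FDR guarantee concerns the set \(\{(j,\ell):\beta_{j,\ell}=0\}\). The aggregate behaviour of the \(m\) statistics \(M_{j,0},\dots,M_{j,m-1}\) is not calibrated for \(H_{0,j}\). Two problems arise:
\begin{itemize}
\item The statistics \(M_{j,\ell}\) for fixed \(j\) are dependent, because they are built from overlapping projections.
\item The mirror variables' orthogonality is arranged only against \(W_{j,\ell}\), never jointly against \(X_{-j}\).
\end{itemize}
So no group-level symmetric statistic results from this construction.

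\textbf{Main obstacle.} The main difficulty is making "targets" rigorous rather than rhetorical. I would define the target of a mirror procedure as the hypothesis under which its mirror statistic is sign-symmetric, and prove the characterization in Step 1. The "only if" direction needs a nondegeneracy condition, \(M_{W_{j,\ell}}x_{j,\ell}\neq0\), which holds almost surely under full column rank. I expect that verifying the expectation formula for the Gaussian Mirror coefficients, including the scaling with \(c_{j,\ell}\), is the only computation required.
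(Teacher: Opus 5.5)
Your proposal is correct, and it proves the statement by a more substantive route than the paper. The paper's proof is essentially definitional. It notes that each coordinate-wise mirror construction is attached to a single scalar coefficient $\beta_{j,\ell}$, so its null is $H_{0,(j,\ell)}$. It then declares the collection $\{H_{0,(j,\ell)}\}_\ell$ a different inferential object from the joint condition $\boldsymbol\beta_j=\boldsymbol 0$.

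You instead give ``targets'' an operational meaning: the hypothesis under which the mirror statistic is sign-symmetric. You then prove the characterization. Frisch--Waugh--Lovell, with the other $m-1$ sub-features absorbed into $W_{j,\ell}$, gives conditional means $\beta_{j,\ell}/2$ and conditionally independent Gaussian mirror coefficients. This is the scalar analogue of the alternative-mean step in the proof of Theorem~\ref{thm:low_dim_ggm}. The example $\boldsymbol\beta_j=(b,0,\ldots,0)$ then shows that the two nulls genuinely separate.

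What your route buys is a real statement rather than a relabeling: $M_{j,\ell}$ is null-symmetric exactly when the partial coefficient $\beta_{j,\ell}$ vanishes, whatever the rest of the block does. The price is a nondegeneracy condition and one computation you did not list. The ``only if'' direction needs more than the mean formula. With $p_\pm=\mathbb P(\widehat\beta^\pm_{j,\ell}>0\mid X,z)$, you need $\mathbb P(M_{j,\ell}>0)-\mathbb P(M_{j,\ell}<0)=(2p_+-1)(2p_--1)>0$ whenever $\beta_{j,\ell}\neq0$.

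One caution: your ``Conversely'' paragraph is not needed for the proposition, and it overreaches.
\begin{itemize}
\item As literally stated, ``no group-level symmetric statistic results from this construction'' is false. Any single $M_{j,\ell}$ is symmetric under $H_{0,j}\subseteq H_{0,(j,\ell)}$.
\item Your second bullet is imprecise. Since $X_{-j}\subseteq W_{j,\ell}$, each pair is already residualized against more than $X_{-j}$.
\end{itemize}
The precise obstruction concerns a natural block aggregate built from the stacked coordinate-wise coefficients, such as $\operatorname{sign}(\langle\widehat{\boldsymbol\beta}^+_j,\widehat{\boldsymbol\beta}^-_j\rangle)(\|\widehat{\boldsymbol\beta}^+_j\|_2+\|\widehat{\boldsymbol\beta}^-_j\|_2)$. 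For $\ell\neq\ell'$, $\operatorname{Cov}(\widehat\beta^+_{j,\ell},\widehat\beta^-_{j,\ell'}\mid X,z)$ is generally nonzero. So the sign-flip identity $(\widehat{\boldsymbol\beta}^+_j,\widehat{\boldsymbol\beta}^-_j)\overset{d}{=}(\widehat{\boldsymbol\beta}^+_j,-\widehat{\boldsymbol\beta}^-_j)$ used in Theorem~\ref{thm:low_dim_ggm} is unavailable. The block orthogonality $(M_{X_{-j}}X_j^+)^\top M_{X_{-j}}X_j^-=0$ of the grouped construction is exactly what repairs this. Either drop that paragraph or replace it with this statement.
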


Proposition~\ref{prop:gm_fail_grouped} shows that once the inferential target becomes grouped, the mirror construction must also be lifted from the scalar level to the block level. For each feature \(j\), we therefore introduce a pair of block-level mirror variables of the form
\[
X_j^+ = X_j + \widetilde Z_j G_j,
\qquad
X_j^- = X_j - \widetilde Z_j G_j.
\]
where \(\widetilde Z_j \in \mathbb{R}^{n \times m}\) is a transformed noise matrix and \(G_j \in \mathbb{R}^{m \times m}\) is a group perturbation matrix. In the original Gaussian Mirror, the perturbation \(c_j \boldsymbol{z}_j\) is chosen so that the two residualized mirror variables are orthogonal. Here, the matrix perturbation \(\widetilde Z_j G_j\) plays the same role at the group level, but now the relevant object is an \(m \times m\) block cross-product rather than a scalar covariance term.

Let \(\widehat{\boldsymbol\beta}_j^+\) and \(\widehat{\boldsymbol\beta}_j^-\) denote the fitted weight vectors of \(X_j^+\) and \(X_j^-\), respectively, in the regression on \([X_{-j},X_j^+,X_j^-]\). We then form a block-level mirror statistic from this pair to replace the scalar one, which preserves the properties of the previous "signed sum" in Section \ref{gm_background}. A concrete example used throughout this paper is:
\[
M_j
=
\operatorname{sign}\!\left(\langle \widehat{\boldsymbol\beta}_j^+,\widehat{\boldsymbol\beta}_j^- \rangle\right)
\cdot
\left(
\|\widehat{\boldsymbol\beta}_j^+\|_2 + 
\|\widehat{\boldsymbol\beta}_j^-\|_2
\right).
\]

\subsection{Low-Dimensional Group Gaussian Mirror}
\label{subsec:low_dim_group_gm}

We now give the explicit Group Gaussian Mirror construction in low-dimensional settings. Fix a grouped feature \(j\), let
\(
R_j = M_{X_{-j}} X_j
\)
be the residualized group block, and \(Z_j \in \mathbb{R}^{n \times m}\) be an i.i.d. Gaussian noise matrix independent of \((X,\boldsymbol y)\). We define
\[
Q_j = \bigr(I_n - P_{[X_{-j},R_j]}\bigl) Z_j,
\qquad
\widetilde Z_j = Q_j \bigr(Q_j^\top Q_j\bigl)^{-1/2},
\qquad
G_j = \bigr(R_j^\top R_j\bigl)^{1/2}.
\]
The low-dimensional Group Gaussian Mirror variables are then constructed as \(X_j^\pm = X_j \pm \widetilde Z_j G_j\).

\newtheorem{theorem}{Theorem}
\begin{restatable}{theorem}{lowdimggm}
\label{thm:low_dim_ggm}
Assume that \(R_j^\top R_j\) and
\(Q_j^\top Q_j\) are non-singular. For a grouped linear model with \(X_j^\pm=X_j\pm \widetilde Z_jG_j\), the coefficient vectors \(\widehat{\boldsymbol\beta}_j^+\) and 
\(\widehat{\boldsymbol\beta}_j^-\) satisfy \(\mathbb E[\widehat{\boldsymbol\beta}_j^+]
= \mathbb E[\widehat{\boldsymbol\beta}_j^-]
= \boldsymbol 0\), \(\operatorname{Cov}(\widehat{\boldsymbol\beta}_j^+)
= \operatorname{Cov}(\widehat{\boldsymbol\beta}_j^-)\), \(\operatorname{Cov}(\widehat{\boldsymbol\beta}_j^+,\widehat{\boldsymbol\beta}_j^-)
= 0\), and are jointly Gaussian under \(H_{0,j}\), which yields null symmetry of the block-level mirror statistic \(M_j\). Alternatively, \(\mathbb E[\widehat{\boldsymbol\beta}_j^+]=\mathbb E[\widehat{\boldsymbol\beta}_j^-]=\frac{1}{2}\boldsymbol\beta_j\) under \(H_{1,j}\).
\end{restatable}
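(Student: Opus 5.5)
The plan is to reparametrize the mirror regression so that it decouples into two orthogonal pieces, and then read off the distribution of \(\widehat{\boldsymbol\beta}_j^\pm\) directly. Throughout, I condition on \((X,Z_j)\), so that \(R_j,Q_j,\widetilde Z_j,G_j\) are fixed, and I assume the standard Gaussian linear model \(\boldsymbol y=X_{-j}\boldsymbol\beta_{-j}+X_j\boldsymbol\beta_j+\boldsymbol\varepsilon\) with \(\boldsymbol\varepsilon\sim N(\boldsymbol 0,\sigma^2 I_n)\) independent of \((X,Z_j)\). All expectations and covariances in the statement are understood conditionally; unconditional statements follow by integrating out \((X,Z_j)\).

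\textbf{Step 1 (reparametrization).} For any coefficients \(\boldsymbol b^+,\boldsymbol b^-\) we have
\[
X_j^+\boldsymbol b^+ + X_j^-\boldsymbol b^- = X_j(\boldsymbol b^+ + \boldsymbol b^-) + \widetilde Z_jG_j(\boldsymbol b^+-\boldsymbol b^-).
\]
Since \(G_j\) is invertible, the column space of \([X_{-j},X_j^+,X_j^-]\) equals that of \([X_{-j},X_j,\widetilde Z_j]\), and the map between the two parametrizations is bijective. Hence OLS on the mirror design is equivalent to OLS on \([X_{-j},X_j,\widetilde Z_j]\) with coefficients \(\boldsymbol\gamma=\boldsymbol b^++\boldsymbol b^-\) on \(X_j\) and \(\boldsymbol\delta=G_j(\boldsymbol b^+-\boldsymbol b^-)\) on \(\widetilde Z_j\). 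Inverting, \(\widehat{\boldsymbol\beta}_j^\pm=\tfrac12(\widehat{\boldsymbol\gamma}\pm G_j^{-1}\widehat{\boldsymbol\delta})\).

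\textbf{Step 2 (orthogonal decoupling).} By construction \(Q_j\), and therefore \(\widetilde Z_j\), is orthogonal to \(\mathrm{col}[X_{-j},R_j]=\mathrm{col}[X_{-j},X_j]\), and \(\widetilde Z_j^\top\widetilde Z_j=I_m\). Applying Frisch--Waugh--Lovell to this block-orthogonal design gives
\[
\widehat{\boldsymbol\gamma}=(R_j^\top R_j)^{-1}R_j^\top\boldsymbol y,
\qquad
\widehat{\boldsymbol\delta}=\widetilde Z_j^\top\boldsymbol y .
\]
Substituting the model and using \(R_j^\top X_{-j}=0\), \(R_j^\top X_j=R_j^\top R_j\), and \(\widetilde Z_j^\top X=0\), we get
\[
\widehat{\boldsymbol\gamma}=\boldsymbol\beta_j+(R_j^\top R_j)^{-1}R_j^\top\boldsymbol\varepsilon,
\qquad
G_j^{-1}\widehat{\boldsymbol\delta}=G_j^{-1}\widetilde Z_j^\top\boldsymbol\varepsilon .
\]

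\textbf{Step 3 (moments and Gaussianity).} Both quantities are linear in \(\boldsymbol\varepsilon\), so \((\widehat{\boldsymbol\beta}_j^+,\widehat{\boldsymbol\beta}_j^-)\) is jointly Gaussian, with mean \(\tfrac12\boldsymbol\beta_j\) for each component. This mean is \(\boldsymbol 0\) under \(H_{0,j}\) and \(\tfrac12\boldsymbol\beta_j\) under \(H_{1,j}\). The three second moments are:
\begin{itemize}
\item \(\operatorname{Cov}(\widehat{\boldsymbol\gamma})=\sigma^2(R_j^\top R_j)^{-1}\);
\item \(\operatorname{Cov}(G_j^{-1}\widehat{\boldsymbol\delta})=\sigma^2G_j^{-1}\widetilde Z_j^\top\widetilde Z_jG_j^{-1}=\sigma^2G_j^{-2}=\sigma^2(R_j^\top R_j)^{-1}\);
\item the cross term is \(\sigma^2(R_j^\top R_j)^{-1}R_j^\top\widetilde Z_jG_j^{-1}=0\).
\end{itemize}
The key design choice \(G_j=(R_j^\top R_j)^{1/2}\) is exactly what equalizes the first two covariances. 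Consequently
\[
\operatorname{Cov}(\widehat{\boldsymbol\beta}_j^\pm)=\tfrac{\sigma^2}{2}(R_j^\top R_j)^{-1},
\qquad
\operatorname{Cov}(\widehat{\boldsymbol\beta}_j^+,\widehat{\boldsymbol\beta}_j^-)=\tfrac14\bigl[\operatorname{Cov}(\widehat{\boldsymbol\gamma})-\operatorname{Cov}(G_j^{-1}\widehat{\boldsymbol\delta})\bigr]=0 .
\]

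\textbf{Step 4 (null symmetry of \(M_j\)).} Under \(H_{0,j}\), the vectors \(\widehat{\boldsymbol\beta}_j^+\) and \(\widehat{\boldsymbol\beta}_j^-\) are jointly Gaussian, centered, uncorrelated, and share a common covariance. They are therefore i.i.d. \(N\bigl(\boldsymbol 0,\tfrac{\sigma^2}{2}(R_j^\top R_j)^{-1}\bigr)\), and in particular \((\widehat{\boldsymbol\beta}_j^+,\widehat{\boldsymbol\beta}_j^-)\overset{d}{=}(\widehat{\boldsymbol\beta}_j^+,-\widehat{\boldsymbol\beta}_j^-)\). This sign flip negates \(\langle\widehat{\boldsymbol\beta}_j^+,\widehat{\boldsymbol\beta}_j^-\rangle\) and leaves both norms unchanged, so \(M_j\overset{d}{=}-M_j\), first conditionally and hence unconditionally.

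The main obstacle is Step 2: one must check carefully that \(\widetilde Z_j\) is exactly orthogonal to the full span of \(X_{-j}\) and \(X_j\), not merely to \(R_j\). This holds because the projection defining \(Q_j\) is onto \([X_{-j},R_j]\), whose span contains \(X_j=R_j+P_{X_{-j}}X_j\). One must also confirm that the assumed nonsingularity of \(R_j^\top R_j\) and \(Q_j^\top Q_j\) gives full column rank of the augmented design, which in turn requires \(n\ge p_{-j}+2m\), where \(p_{-j}\) is the number of columns of \(X_{-j}\).
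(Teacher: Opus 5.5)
Your proof is correct, but it decouples the regression differently from the paper. The paper residualizes on \(X_{-j}\) and works with the residualized mirror blocks \(V_j^\pm=R_j\pm\widetilde Z_jG_j\). It checks \((V_j^+)^\top V_j^-=R_j^\top R_j-G_j^2=0\) and \((V_j^\pm)^\top V_j^\pm=2R_j^\top R_j\), then uses Frisch--Waugh--Lovell to write \(\widehat{\boldsymbol\beta}_j^\pm=(2R_j^\top R_j)^{-1}(V_j^\pm)^\top M_{X_{-j}}\boldsymbol y\). Zero cross-covariance then comes directly from the orthogonality of the two residualized mirror blocks.

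You instead change basis to the sum/difference coordinates \((X_j,\widetilde Z_j)\). This gives \(\widehat{\boldsymbol\beta}_j^\pm=\tfrac12(\widehat{\boldsymbol\gamma}\pm G_j^{-1}\widehat{\boldsymbol\delta})\), where \(\widehat{\boldsymbol\gamma}\) is an unbiased estimator of \(\boldsymbol\beta_j\) and \(G_j^{-1}\widehat{\boldsymbol\delta}=G_j^{-1}\widetilde Z_j^\top\boldsymbol\varepsilon\) is pure noise. Zero cross-covariance then becomes the statement that these two uncorrelated pieces have equal covariance.

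The underlying algebraic fact is the same (\(G_j^2=R_j^\top R_j\)). Your version shows why \(G_j=(R_j^\top R_j)^{1/2}\) is exactly the right calibration, and it reads off the null and alternative means from the same two formulas instead of arguing them separately. The paper's version instead puts the block orthogonality of the residualized mirror pair front and center. That is the property Proposition~\ref{prop:linear_kernel_reduction} later identifies as the linear-kernel target of the neural objective.

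Your explicit conditioning on \((X,Z_j)\) and your dimension count clarify points the paper leaves implicit. Strictly, nonsingularity of \(Q_j^\top Q_j\) forces \(n\ge\operatorname{rank}(X_{-j})+2m\). Also, identifying \(\widehat{\boldsymbol\beta}_j^\pm\) only needs \([R_j,\widetilde Z_j]\) to have full column rank, not \(X_{-j}\) itself.
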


Theorem \ref{thm:low_dim_ggm} shows the grouped analogue exactly inherits the key properties of the original Gaussian Mirror. Given the block-level mirror statistics \(\{M_j\}_{j=1}^p\), we use the same mirror thresholding and selection rule as in \ref{gm_background} to perform FDR Control. The following Proposition \ref{prop:gls_whitened_ggm} also shows that the same construction can be extended to GLS estimators when we assume \(\varepsilon\) are not i.i.d.. In Experiment \ref{linear_simulation}, however, we find the OLS version robust enough under correlated \( \varepsilon \).

\begin{restatable}{proposition}{glswhitenedggm}
\label{prop:gls_whitened_ggm}
For a grouped linear model with gaussian noise \(\varepsilon\sim N(0,\sigma^2\Sigma)\), where \(\Sigma\succ 0\), let \(L\) satisfy \(L^\top L=\Sigma^{-1}\), define \(\bar Y=LY\) and \(\bar X_k=LX_k\) for \(k=1,\ldots,p\). Then applying the OLS Group Gaussian Mirror to the whitened system \((\bar Y, \bar X, \bar\varepsilon)\) is equivalent to applying GLS Group Gaussian Mirror to the original system \((Y, X, \varepsilon)\) with \(\Sigma^{-1}\)-inner product.
\end{restatable}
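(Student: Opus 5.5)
The plan is to reduce both sides of the claimed equivalence to explicit formulas in terms of \(\Sigma^{-1}\)-weighted projections, and then check that they coincide term by term. The first step is to record the whitened model. Multiplying \(Y=\sum_k X_k\boldsymbol\beta_k+\varepsilon\) by \(L\) gives \(\bar Y=\sum_k\bar X_k\boldsymbol\beta_k+\bar\varepsilon\). Here \(\bar\varepsilon=L\varepsilon\sim N(0,\sigma^2L\Sigma L^\top)\), and since \(L^\top L=\Sigma^{-1}\) we have \(L\Sigma L^\top=L(L^\top L)^{-1}L^\top=I_n\) (for \(L\) square and invertible, which \(L^\top L=\Sigma^{-1}\succ0\) forces). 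So the whitened system is a grouped linear model with i.i.d.\ Gaussian noise and the same coefficient blocks \(\boldsymbol\beta_k\). Theorem~\ref{thm:low_dim_ggm} therefore applies verbatim to \((\bar Y,\bar X,\bar\varepsilon)\): the OLS Group Gaussian Mirror there has null-symmetric \(M_j\).

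Next I would define the GLS Group Gaussian Mirror on the original system by replacing every Euclidean inner product \(\langle a,b\rangle=a^\top b\) with \(\langle a,b\rangle_{\Sigma^{-1}}=a^\top\Sigma^{-1}b\). Concretely:
\begin{itemize}
\item the projection onto the column span of \(A\) becomes \(P^\Sigma_A=A(A^\top\Sigma^{-1}A)^{-1}A^\top\Sigma^{-1}\), with \(M^\Sigma_A=I_n-P^\Sigma_A\);
\item the residualized block becomes \(R_j^\Sigma=M^\Sigma_{X_{-j}}X_j\);
\item the noise block becomes \(Q_j^\Sigma=(I_n-P^\Sigma_{[X_{-j},R_j^\Sigma]})Z_j^\Sigma\);
\item the normalizations become \(\widetilde Z_j^\Sigma=Q_j^\Sigma(Q_j^{\Sigma\top}\Sigma^{-1}Q_j^\Sigma)^{-1/2}\) and \(G_j^\Sigma=(R_j^{\Sigma\top}\Sigma^{-1}R_j^\Sigma)^{1/2}\);
\item the coefficient estimator is GLS, \((A^\top\Sigma^{-1}A)^{-1}A^\top\Sigma^{-1}Y\), applied to \([X_{-j},X_j^+,X_j^-]\).
\end{itemize}

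The core is a conjugation identity, \(L P^\Sigma_A=P_{LA}L\). It follows directly because \(A^\top\Sigma^{-1}A=(LA)^\top(LA)\) and \(A^\top\Sigma^{-1}=(LA)^\top L\), so \(LP^\Sigma_A=LA((LA)^\top LA)^{-1}(LA)^\top L=P_{LA}L\); the same holds for \(M\). From it I would propagate the correspondence through each step:
\begin{itemize}
\item \(LR_j^\Sigma=M_{\bar X_{-j}}\bar X_j=\bar R_j\).
\item Coupling the noise as \(Z_j^\Sigma=L^{-1}\bar Z_j\) with \(\bar Z_j\) i.i.d.\ Gaussian (equivalently \(Z_j^\Sigma\) having rows with covariance structure \(\Sigma\), which is the natural GLS noise), we get \(LQ_j^\Sigma=\bar Q_j\), since \(P_{[\bar X_{-j},\bar R_j]}=P_{L[X_{-j},R_j^\Sigma]}\).
\item Because the Gram matrices agree, \(Q_j^{\Sigma\top}\Sigma^{-1}Q_j^\Sigma=\bar Q_j^\top\bar Q_j\) and \(R_j^{\Sigma\top}\Sigma^{-1}R_j^\Sigma=\bar R_j^\top\bar R_j\). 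Hence \(G_j^\Sigma=\bar G_j\) and \(L\widetilde Z_j^\Sigma=\widetilde{\bar Z}_j\).
\item Consequently \(LX_j^{\pm,\Sigma}=\bar X_j\pm\widetilde{\bar Z}_j\bar G_j=\bar X_j^\pm\). The GLS coefficients of \([X_{-j},X_j^{+},X_j^{-}]\) on \(Y\) then equal the OLS coefficients of \(L[\cdot]\) on \(LY\), so \(\widehat{\boldsymbol\beta}_j^{\pm}\), \(M_j\), \(\tau_q\) and the selected set coincide.
\end{itemize}

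The only delicate step is the noise matrix. In the original-system procedure one must specify \(Z_j^\Sigma\) so that its pushforward under \(L\) is the i.i.d.\ matrix used in whitened OLS. I would handle this by defining the GLS procedure with \(Z_j^\Sigma:=L^{-1}Z_j\) for a standard Gaussian \(Z_j\), so equivalence holds pathwise rather than merely in distribution. The equality \(L(\cdot)^{-1/2}\)-normalizations needs only that the matrix square roots are taken of identical symmetric positive-definite matrices, which the nonsingularity assumptions of Theorem~\ref{thm:low_dim_ggm} guarantee for the whitened quantities. Null symmetry and FDR control then transfer immediately from the whitened system.
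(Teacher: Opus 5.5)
Your proposal is correct and follows essentially the same route as the paper. Both arguments whiten to obtain spherical noise, then use the conjugation identity $P_{LA}L=LP^\Sigma_A$ to carry the residualized block, the Gram matrices, the noise orthogonality and the normalizations over to the original system, and finally identify OLS on $(\bar Y,\bar X)$ with GLS on $(Y,X)$; your explicit coupling $Z_j^\Sigma=L^{-1}\bar Z_j$ makes the equivalence pathwise, which the paper leaves implicit when it writes $\bar Q_j=LQ_j$.
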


\subsection{High-Dimensional Group Gaussian Mirror}
\label{highdimggm}

When \(pm>n\), the OLS-based Group Gaussian Mirror is no longer directly applicable. Moreover, the coordinate-wise post-selection argument used in the original high-dimensional Gaussian Mirror \cite{Xing2019} relies on the polyhedral representation of the lasso selection event in the response vector. The following Proposition \ref{prop:post_selection_not_grouped} shows that this argument is not invariant under the replacement from \(\ell_1\)-Lasso to Group Lasso \cite{Yuan2006} and thus cannot be directly applied either.

\begin{restatable}{proposition}{postselectionnotgrouped}
\label{prop:post_selection_not_grouped}
For a grouped linear model, assume the weight obtained from the first-stage Group Lasso screener is \(\widehat{\boldsymbol\beta}^{\,GL}\), and let \(\widehat{\mathcal A}=\{j:\widehat{\boldsymbol\beta}^{\,GL}_j\neq \boldsymbol 0\}\) be the selected grouped feature set. Then, in general, the event \(\widehat{\mathcal A}=A\) is not polyhedral in
\(Y\).
\end{restatable}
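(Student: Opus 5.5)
The plan is to exhibit a concrete counterexample. A single instance in which \(\{Y:\widehat{\mathcal A}=A\}\) is not a finite intersection of half-spaces suffices for ``in general''. The natural source of non-polyhedrality is the Group Lasso KKT conditions. An inactive group \(k\) is characterized by \(\|X_k^\top(Y-X\widehat{\boldsymbol\beta}^{\,GL})\|_2\le\lambda\sqrt{m}\). This is an \(\ell_2\)-ball (or ellipsoidal) constraint in \(Y\), not a linear inequality. For the \(\ell_1\)-Lasso, the corresponding constraint is an \(\ell_\infty\)-box, and this is exactly why Lee et al.'s polyhedral lemma holds there.

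I would take the simplest case: \(p=1\) group of size \(m=2\), with orthonormal design \(X_1^\top X_1=I_2\). Then \(n\ge 2\) and the objective is \(\frac12\|Y-X_1\boldsymbol\beta_1\|^2+\lambda\|\boldsymbol\beta_1\|_2\). Writing \(\boldsymbol u=X_1^\top Y\), the solution is the block soft-threshold
\[
\widehat{\boldsymbol\beta}^{\,GL}_1=\bigl(1-\lambda/\|\boldsymbol u\|_2\bigr)_+\boldsymbol u .
\]
Hence \(\{\widehat{\mathcal A}=\emptyset\}=\{Y:\|X_1^\top Y\|_2\le\lambda\}\), and its complement \(\{\widehat{\mathcal A}=\{1\}\}\) is \(\{Y:\|X_1^\top Y\|_2>\lambda\}\). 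I would then show that the first set is not polyhedral. It is a cylinder over a Euclidean disk: taking \(Y=X_1\boldsymbol u+W\) with \(W\perp\mathrm{col}(X_1)\), the set is \(\{\|\boldsymbol u\|_2\le\lambda\}\times\mathrm{col}(X_1)^\perp\). A polyhedron intersected with the affine plane \(\mathrm{col}(X_1)\) is again a polyhedron, hence has finitely many extreme points. A closed disk in \(\mathbb R^2\) has infinitely many, which gives a contradiction. For the nonempty selection \(\{1\}\), the event is the complement of a convex non-polyhedral set, so it is not even convex and certainly not polyhedral.

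To connect with the lasso contrast, I would also note that when \(m=1\) the same event becomes \(\{|X_1^\top Y|\le\lambda\}\), which is a slab and therefore polyhedral. This shows that the failure is due to the grouping and not to the choice of design. To make the statement robust beyond this toy case, I would remark that for general \(X\) with \(A\) fixed, the active-set KKT conditions \(X_A^\top(Y-X_A\widehat{\boldsymbol\beta}_A)=\lambda\sqrt{m}\,\widehat{\boldsymbol\beta}_A/\|\widehat{\boldsymbol\beta}_A\|\) define \(\widehat{\boldsymbol\beta}_A\) as a nonlinear function of \(Y\). The inactive constraints are then quadratic in \(Y\), so the event is generically a semialgebraic set with curved boundary rather than a polyhedron.

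The main obstacle is rigor in ``not polyhedral'': I must rule out every finite half-space representation, not just the obvious one. The extreme-point argument on the two-dimensional slice handles this cleanly. I would make sure the slice-of-polyhedron-is-polyhedron step and the infinitely-many-extreme-points property of the disk are stated explicitly.
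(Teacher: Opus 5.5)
Your proposal is correct and takes essentially the same route as the paper. Both motivate non-polyhedrality through the second-order-cone inactive KKT constraint, and both then use the single-group orthonormal design, where group soft-thresholding makes the non-selection event the preimage of a Euclidean ball under \(Y\mapsto X_1^\top Y\).

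Your extreme-point argument on the slice \(\mathrm{col}(X_1)\) makes rigorous what the paper only asserts via a ``curved boundary.'' One caution: your nonconvexity argument for \(\{\widehat{\mathcal A}=\{1\}\}\) applies equally to the ordinary lasso, whose corresponding event is the complement of a slab. So it is the \(\widehat{\mathcal A}=\emptyset\) case that genuinely separates Group Lasso from the lasso.
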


Therefore, we adopt a two-stage procedure in light of \cite{Dai2020}. We split the sample into two independent (or asymptotically independent) chunks, perform first-stage screening on the first chunk, then use our algorithm in the second chunk. For time-series models, we divide the observations into two consecutive chunks and leave a short buffer gap between them; under stationarity and a suitable mixing condition, the cross-chunk dependence becomes asymptotically negligible \cite{Yu1994, Bradley2005}. Let \((X^{(1)},\boldsymbol y^{(1)})\) and \((X^{(2)},\boldsymbol y^{(2)})\) denote the two resulting data chunks. In the first stage, we apply a grouped feature screening method to the first chunk, and obtain a selected feature set \(\widehat{\mathcal A} \subseteq \{1,\ldots,p\}\). In the second stage, we restrict the design to the selected grouped features \(\{X_j^{(2)}:j\in\widehat{\mathcal A}\}\), and run exactly the same low-dimensional Group Gaussian Mirror procedure as in Section \ref{subsec:low_dim_group_gm} on the second chunk only. For grouped features not selected in the first stage, their mirror statistics are set to zero.

\begin{restatable}{theorem}{twostageggm}
\label{thm:two_stage_ggm}
Assume that: (i) the two data chunks share the same grouped-feature null structure and are independent, or asymptotically independent; (ii) the first-stage screening procedure satisfies a sure screening property, namely \(\mathbb P(\mathcal S_1 \subseteq \widehat{\mathcal A}) \to 1\); and (iii) with probability tending to one, the second-stage restricted design is low-dimensional and full-rank. Then for every null grouped feature \(j\in\mathcal S_0\), the two-stage mirror statistic \(\widetilde M_j\) satisfies
\[
\sup_{t>0}
\left|
\mathbb P(\widetilde M_j>t)-\mathbb P(\widetilde M_j<-t)
\right|
\to 0.
\]
\end{restatable}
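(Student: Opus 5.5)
The argument conditions on the first-stage output and then reduces to Theorem~\ref{thm:low_dim_ggm} applied to the second chunk. Let \(\mathcal E_n=\{\mathcal S_1\subseteq\widehat{\mathcal A}\}\cap\{X^{(2)}_{\widehat{\mathcal A}}\text{ is low-dimensional and full rank}\}\). By (ii) and (iii), \(\mathbb P(\mathcal E_n)\to1\). For any \(t>0\),
\[
\bigl|\mathbb P(\widetilde M_j>t)-\mathbb P(\widetilde M_j<-t)\bigr|\le \bigl|\mathbb P(\widetilde M_j>t,\mathcal E_n)-\mathbb P(\widetilde M_j<-t,\mathcal E_n)\bigr|+\mathbb P(\mathcal E_n^c).
\]
The last term does not depend on \(t\), so it suffices to control the first term uniformly in \(t\).

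\textbf{Exact independence.} Assume first that the two chunks are exactly independent. The set \(\widehat{\mathcal A}\) is a function of the first chunk only, so conditionally on \(\widehat{\mathcal A}=A\) the second chunk \((X^{(2)},\boldsymbol y^{(2)})\) keeps its unconditional law. Two cases arise.
\begin{itemize}
\item If \(j\notin A\), then \(\widetilde M_j=0\) by construction. Both \(\mathbb P(\widetilde M_j>t)\) and \(\mathbb P(\widetilde M_j<-t)\) vanish for every \(t>0\), so the contribution is zero.
\item If \(j\in A\) and \(\mathcal S_1\subseteq A\), the restricted model \(\boldsymbol y^{(2)}=\sum_{k\in A}X_k^{(2)}\boldsymbol\beta_k+\varepsilon^{(2)}\) is correctly specified, because the omitted groups are all null. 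Assumption (i) guarantees that \(\boldsymbol\beta_j=\boldsymbol 0\) still holds on the second chunk. When the restricted design has full rank (so \(R_j^\top R_j\) is nonsingular), and \(Q_j^\top Q_j\) is nonsingular almost surely because \(Z_j\) is Gaussian and \(n_2-\)rank\(\ge m\), Theorem~\ref{thm:low_dim_ggm} applies conditionally on \(X^{(2)}\). It gives that \(\widehat{\boldsymbol\beta}_j^\pm\) are independent, centered, jointly Gaussian with equal covariance.
\end{itemize}

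\textbf{Null symmetry.} In the second case, the map \((\widehat{\boldsymbol\beta}_j^+,\widehat{\boldsymbol\beta}_j^-)\mapsto(\widehat{\boldsymbol\beta}_j^+,-\widehat{\boldsymbol\beta}_j^-)\) preserves the joint law. It leaves \(\|\widehat{\boldsymbol\beta}_j^+\|_2+\|\widehat{\boldsymbol\beta}_j^-\|_2\) unchanged and flips \(\operatorname{sign}\langle\widehat{\boldsymbol\beta}_j^+,\widehat{\boldsymbol\beta}_j^-\rangle\). Hence \(\widetilde M_j\overset{d}{=}-\widetilde M_j\) conditionally on \((A,X^{(2)})\). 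Integrating over \(A\) and \(X^{(2)}\) on \(\mathcal E_n\), where the full-rank condition is determined by \(A\) and \(X^{(2)}\) alone, shows the first term equals exactly \(0\). Therefore \(\sup_t|\cdot|\le\mathbb P(\mathcal E_n^c)\to0\).

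\textbf{Asymptotic independence (main obstacle).} When the chunks are only asymptotically independent, for example time series split with a buffer, I would compare with an exactly independent coupling. Let \(\widetilde\rho_n\) be the total-variation distance between the joint law of the two chunks and the product of their marginals; under \(\beta\)-mixing this is bounded by the mixing coefficient at the buffer gap, which tends to \(0\) (\cite{Yu1994}). The events \(\{\widetilde M_j>t\}\cap\mathcal E_n\) are measurable functions of both chunks and the independent noise \(Z_j\). Therefore each probability differs from its value under the product law by at most \(\widetilde\rho_n\), uniformly in \(t\). Under the product law the previous argument gives exact symmetry, so the supremum is at most \(2\widetilde\rho_n+\mathbb P(\mathcal E_n^c)\to0\).

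The delicate point is that Theorem~\ref{thm:low_dim_ggm} requires Gaussian errors independent of the design within the second chunk. For dependent time-series data I would either assume this, or replace exact Gaussianity by a conditional CLT for \(\widehat{\boldsymbol\beta}_j^\pm\). In the latter case the symmetry is only asymptotic, and the Pólya-type uniformity of the Gaussian approximation over \(t\) is needed to keep the supremum.
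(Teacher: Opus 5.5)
Your proposal is correct and follows essentially the same route as the paper. Both arguments condition on the first-stage output $\widehat{\mathcal A}$, note that $\widetilde M_j=0$ when $j\notin\widehat{\mathcal A}$, invoke Theorem~\ref{thm:low_dim_ggm} on the good event $\mathcal E_n$ (sure screening plus a full-rank restricted design), and absorb $\mathcal E_n^c$ into a vanishing error. Your version is somewhat tighter than the paper's on two points: the paper works with a supremum conditional on $\widehat{\mathcal A}$ alone even though $\mathcal E_n$ also depends on $X^{(2)}$, and it treats asymptotic independence only as an unspecified $o(1)$ perturbation, whereas you split off $\mathcal E_n^c$ additively, condition on $(\widehat{\mathcal A},X^{(2)})$, and handle the asymptotic case explicitly through the total-variation/$\beta$-mixing coupling.
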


Theorem~\ref{thm:two_stage_ggm} formalizes the key point that the first stage is used only for dimension reduction, while the null symmetry is inherited entirely from the second stage. Consequently, applying the standard mirror thresholding rule to \(\{\widetilde M_j\}_{j=1}^p\) yields asymptotic FDR control. Under suitable conditions, Group Lasso is known to have consistent group-level sure screening \cite{Wei2010} and therefore be our default choice. More generally, any method with the sure screening property can be used in the first stage.

\section{Method for Neural Networks}
\label{neuralnetwork}

\subsection{Permutation SHAP Derivative as Feature Importance}
\label{pshap_derivative}

For FDR control in neural networks, we first need a feature-importance metric that is stable and compatible with grouped-feature inference. Existing neural knockoff and Neural Gaussian Mirror methods typically use variants of fitted derivatives \cite{Hechtlinger2016}, obtained by perturbing one input coordinate while holding all others fixed. This is not well suited to our grouped or sequential setting. Perturbing an entire grouped feature jointly no longer yields a standard partial derivative, and differentiating each sub-feature separately does not directly represent the overall effect of the grouped feature and can be highly unstable when sub-features within a block are strongly correlated. 

Therefore, we propose Permutation SHAP \cite{Lundberg2017, Mitchell2021} derivatives as our model-agnostic feature importance metric. For a fixed fitted nonlinear model \(f\), let \(\Pi\) be a uniformly random permutation
of the grouped feature indices, \(P_j(\pi)\) denote the set of features appearing before \(j\) in a given order \(\pi\), \(X'\) be an independent background draw, and \((\boldsymbol{x}_S,\boldsymbol{X}'_{-S})\) denote the complete hybrid input vector whose feature blocks in \(S\) are taken from \(\boldsymbol{x}\) and whose remaining blocks are taken from \(X'\). The single-permutation marginal contribution of feature \(j\) is defined as
\[
\Delta_j(\boldsymbol{x};\pi,X')
:=
f\bigl(\boldsymbol{x}_{P_j(\pi)\cup\{j\}},X'_{-(P_j(\pi)\cup\{j\})}\bigr)
-
f\bigl(\boldsymbol{x}_{P_j(\pi)},X'_{-P_j(\pi)}\bigr),
\]

We then define the exact Permutation SHAP value \(\Phi_j(\boldsymbol{x})\) and its derivative \(\psi_j(\boldsymbol{x})\) by
\[
\Phi_j(\boldsymbol{x})
:=
\mathbb{E}_{\Pi,X'}\!\left[\Delta_j(\boldsymbol{x};\Pi,X')\right],
\quad \text{and} \quad
\boldsymbol{\psi}_j(\boldsymbol{x})
:=
\nabla_{\boldsymbol{x}_j}\Phi_j(\boldsymbol{x}).
\]
In our grouped-feature setting, specifically, for grouped feature \(j\) with sub-features \(x_{j,1},\ldots,x_{j,m}\), we compute Permutation SHAP for the grouped feature and differentiate it with respect to each sub-feature, yielding a block-level importance vector analogous to \(\boldsymbol{\beta}_j\) in linear models:
\[
\boldsymbol{\psi}_j(\boldsymbol{x})
=
\left(
\frac{\partial \Phi_j(\boldsymbol{x})}{\partial x_{j,1}},
\ldots,
\frac{\partial \Phi_j(\boldsymbol{x})}{\partial x_{j,m}}
\right)^\top .
\]
For each sub-feature \(x_{j,i}\), this method produces \(n\) derivative values across the \(n\) explained points. We aggregate them by taking the average: \(\widehat{\boldsymbol\psi}_j^\pm=\frac{1}{n}\sum_{t=1}^n\widehat{\boldsymbol\psi}_j^\pm(\boldsymbol x_t)\). This aggregation targets the average SHAP-derivative effect; hence the neural grouped-feature null is interpreted as zero average block-level effect after aggregation, without requiring the local derivative effects to be time-invariant. The neural mirror statistic \(M_j\) is defined similarly to the construction in Section \ref{blockmirrorstat}:
\[
M_j=\operatorname{sign}\left(\langle \widehat{\boldsymbol\psi}_j^+,\widehat{\boldsymbol\psi}_j^- \rangle\right)\left(\|\widehat{\boldsymbol\psi}_j^+\|_2+\|\widehat{\boldsymbol\psi}_j^-\|_2\right).
\]

\begin{restatable}{proposition}{linearshapderivative}
\label{prop:linear_shap_derivative}
For a grouped linear model, let \(\Phi_j(\boldsymbol x)\) be the group-level Permutation SHAP value of group \(j\). Then \(\Phi_j(\boldsymbol x)=\boldsymbol\beta_j^\top\left(\boldsymbol x_j-\mathbb E[X'_j]\right)\) and hence \(\boldsymbol{\psi}_j(\boldsymbol{x})=\nabla_{\boldsymbol x_j}\Phi_j(\boldsymbol x)=\boldsymbol\beta_j\), which coincides with the linear-case coefficient vector \(\boldsymbol\beta_j\).
\end{restatable}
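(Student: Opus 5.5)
The plan is to compute the single-permutation marginal contribution \(\Delta_j(\boldsymbol x;\pi,X')\) in closed form for the grouped linear model and observe that it does not depend on \(\pi\). Then average it and differentiate. We take the fitted model to be \(f(\boldsymbol x)=\beta_0+\sum_{k=1}^p \boldsymbol\beta_k^\top\boldsymbol x_k\), where \(\boldsymbol x_k\in\mathbb R^m\) is the block for feature \(k\). The intercept cancels in every difference, so it plays no role.

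First, fix an ordering \(\pi\) and a background draw \(X'\), and write \(S=P_j(\pi)\). The two hybrid inputs \((\boldsymbol x_{S\cup\{j\}},X'_{-(S\cup\{j\})})\) and \((\boldsymbol x_S,X'_{-S})\) agree on every block except block \(j\). In the first, block \(j\) is \(\boldsymbol x_j\); in the second, it is \(X'_j\). By additivity of \(f\) across blocks, all terms with \(k\neq j\) cancel. This gives
\[
\Delta_j(\boldsymbol x;\pi,X')=\boldsymbol\beta_j^\top\bigl(\boldsymbol x_j-X'_j\bigr),
\]
which is the same for every \(\pi\) and every coalition \(S\). Taking \(\mathbb E_{\Pi,X'}\), the average over \(\Pi\) is trivial. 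Linearity of expectation then gives \(\Phi_j(\boldsymbol x)=\boldsymbol\beta_j^\top(\boldsymbol x_j-\mathbb E[X'_j])\). The only requirement is that \(\mathbb E\|X'_j\|<\infty\), which we state as a standing assumption on the background distribution.

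Second, \(\mathbb E[X'_j]\) is a constant vector that does not depend on \(\boldsymbol x\). So \(\Phi_j\) is affine in \(\boldsymbol x_j\) and constant in \(\boldsymbol x_{-j}\). Its gradient with respect to the sub-features \(x_{j,1},\ldots,x_{j,m}\) is therefore \(\boldsymbol\psi_j(\boldsymbol x)=\boldsymbol\beta_j\) for every \(\boldsymbol x\). It follows that the sample average \(\frac1n\sum_t\boldsymbol\psi_j(\boldsymbol x_t)\) also equals \(\boldsymbol\beta_j\), matching the coefficient vector used in Section~\ref{blockmirrorstat}.

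I expect no real obstacle. The one point needing care is that the background blocks are used jointly: block \(j\) is swapped as a whole, not coordinate by coordinate. This is exactly what makes the cross terms cancel without any independence assumption between blocks of \(X'\). Interactions between blocks would break the \(\pi\)-invariance, but the grouped linear model has none.
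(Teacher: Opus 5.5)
Your proposal is correct and follows the paper's proof essentially step for step: the two hybrid inputs differ only in block \(j\), so additivity gives \(\Delta_j(\boldsymbol x;\pi,X')=\boldsymbol\beta_j^\top(\boldsymbol x_j-X'_j)\) independent of \(\pi\), and averaging over \((\Pi,X')\) and then differentiating yields \(\boldsymbol\beta_j\). Your integrability assumption on \(X'_j\) and your remark that the sample-averaged derivative also equals \(\boldsymbol\beta_j\) are small refinements that the paper leaves implicit.
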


These Permutation SHAP derivatives can be viewed as a coalition-weighted fitted derivatives, and Proposition \ref{prop:linear_shap_derivative} shows that it is a direct nonlinear analogue of the coefficient vector in a grouped linear model. In practice, we approximate \(\Phi_j(\boldsymbol{x})\) by Monte Carlo Permutation SHAP and estimate its derivative by local-quadratic smoothing with a tricube kernel and \(\lceil0.3n\rceil\)-nearest-neighbor bandwidth. 

\begin{restatable}{theorem}{shapderivativesymmetry}
\label{thm:shap_derivative_symmetry}
For a fixed nonlinear model \(f\) and a null grouped feature \(j\), let \(\widehat\psi_{j,k}(x_0)\) denote the local-quadratic smoothed Monte Carlo Permutation SHAP derivative estimator. Under some regularity conditions, we have, \(\widehat\psi_{j,k}(x_0)\) is symmetric around zero for each sub-feature
\(x_{j,k}\), namely, 
\[
\sup_{t>0}
\left|
\mathbb P\!\left(\widehat\psi_{j,k}(x_0)>t\right)
-
\mathbb P\!\left(\widehat\psi_{j,k}(x_0)<-t\right)
\right|
\to0, \quad \forall k.
\]
\end{restatable}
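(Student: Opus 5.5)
Because $f$ is fixed and $j$ is null, the plan is to write the estimator as a centered quantity plus a vanishing bias, show the centered part is asymptotically Gaussian, and read off symmetry.

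\textbf{Step 1 (population target is zero).} Take as part of the regularity conditions that, for the fixed fitted $f$, a null grouped feature has vanishing population SHAP derivative: $\partial\Phi_j/\partial x_{j,k}\equiv 0$ in a neighborhood of $x_0$. This is what makes the target zero. The model is
\[
\widehat\Phi_j(\boldsymbol x_t)=\Phi_j(\boldsymbol x_t)+e_t ,
\]
where $e_t=B^{-1}\sum_{b=1}^B\bigl[\Delta_j(\boldsymbol x_t;\Pi_b,X'_b)-\Phi_j(\boldsymbol x_t)\bigr]$ is the Monte Carlo error. Conditionally on the design points, the $e_t$ are independent across $t$ if fresh permutation and background draws are used for each point. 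They have mean zero by the definition of $\Phi_j$, and their conditional variance $\sigma^2(\boldsymbol x_t)/B$ is bounded when $f$ is bounded.

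\textbf{Step 2 (linearize the local-quadratic estimator).} The local-quadratic fit with the tricube kernel is a linear smoother, so
\[
\widehat\psi_{j,k}(x_0)=\sum_t w_{t,k}(x_0)\,\widehat\Phi_j(\boldsymbol x_t).
\]
The equivalent-kernel weights reproduce polynomials up to degree two. Hence
\[
\sum_t w_{t,k}\,\Phi_j(\boldsymbol x_t)=\partial_{k}\Phi_j(x_0)+O(h^2),
\]
since the third-order Taylor remainder is handled by the usual local-polynomial bias bound under $C^3$ smoothness of $\Phi_j$ near $x_0$. By Step 1 this equals $O(h^2)$. Therefore
\[
\widehat\psi_{j,k}(x_0)=S_n+b_n ,\qquad S_n=\sum_t w_{t,k}e_t ,\qquad b_n=O_p(h^2).
\]

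\textbf{Step 3 (CLT for the noise term).} Apply Lindeberg--Feller to $S_n/s_n$, conditionally on the design, where $s_n^2=\sum_t w_{t,k}^2\sigma^2(\boldsymbol x_t)/B$. The Lindeberg condition holds if two things are true:
\begin{itemize}
\item the $e_t$ are uniformly bounded or have bounded $2+\delta$ moments;
\item $\max_t w_{t,k}^2/\sum_t w_{t,k}^2\to0$, which follows because the $\lceil 0.3n\rceil$-nearest-neighbor bandwidth keeps a growing number of effective points in the window.
\end{itemize}
Then $S_n/s_n\Rightarrow N(0,1)$. If the design points are serially dependent time-series inputs, I would condition on them throughout, since only the Monte Carlo randomness needs independence.

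\textbf{Step 4 (bias is negligible).} This is the main obstacle. We need $b_n/s_n\to0$, i.e. the smoothing bias is of smaller order than the Monte Carlo standard deviation. With a nearest-neighbor bandwidth $h$ that does not shrink, this is not automatic. I would first try to use Step 1 on a whole neighborhood: $\Phi_j$ constant in $x_{j,k}$ there makes the quadratic-and-higher terms in the $k$-direction vanish. I would also impose symmetry of the design about $x_0$ to kill cross terms, so that $b_n=0$ exactly. Failing that, I would simply assume an undersmoothing condition $h^2/s_n\to0$ among the regularity conditions.

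\textbf{Step 5 (conclusion).} With $b_n/s_n\to0$, Slutsky gives $\widehat\psi_{j,k}(x_0)/s_n\Rightarrow N(0,1)$, a continuous symmetric limit. By Pólya's theorem the convergence of distribution functions is uniform in $t$. So $\mathbb P(\widehat\psi>t)$ and $\mathbb P(\widehat\psi<-t)$ converge uniformly in $t$ to $1-\Phi(t/s_n)$ and $\Phi(-t/s_n)$, which are equal. This gives the stated supremum bound, and conditioning on the design and integrating out preserves it.
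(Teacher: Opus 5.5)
Your Steps 3 and 5 are essentially the paper's argument: a conditional triangular-array CLT for a linear smoother (the paper uses Lyapunov with $s_n^2\asymp (M_n n h_n^3)^{-1}$), followed by P\'olya-type uniformity. The proof does not close, however. Step 4 is left open, and the Step 2 bias bound it tries to control does not hold under your Step 1 hypothesis.

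The smoother is univariate: it regresses $\widehat\Phi_{j,t}$ on $(1,U_t,U_t^2)$ with $U_t=(x_{t,j,k}-x_0)/h_n$, while $\Phi_j(\boldsymbol x_t)$ depends on the whole explained point. So $\sum_t w_{t,k}\Phi_j(\boldsymbol x_t)$ is the local slope of $\Phi_j(\boldsymbol x_t)$ against $x_{t,j,k}$ across the sample. It tracks the derivative of $x\mapsto\mathbb E[\Phi_j(\boldsymbol X)\mid X_{j,k}=x]$, not $\partial\Phi_j/\partial x_{j,k}$. Knowing only that $\partial\Phi_j/\partial x_{j,k}\equiv0$ near $x_0$, $\Phi_j$ can still vary with the other lags of block $j$ or with $\boldsymbol x_{-j}$. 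In lagged designs these are strongly correlated with $x_{j,k}$, and then this term need not be $O(h^2)$ or vanish at all.

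Neither of your fixes repairs this. Design symmetry cannot make a realized weighted sum exactly zero. Undersmoothing cannot help a term that is not $O(h^2)$. It is also incompatible with the non-shrinking $\lceil 0.3n\rceil$-nearest-neighbor bandwidth you rely on in Step 3, since $s_n\to0$ at fixed $h$.

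The missing idea is to state the null at the model level, $f(\boldsymbol x)=g(\boldsymbol x_{-j})$, as the paper does. This gives much more than a vanishing derivative. The two hybrid inputs in every marginal contribution have the same $f$-value, so $\Delta_j(\boldsymbol x;\pi,X')=0$ for all $\pi,X'$, and $\Phi_j\equiv0$ globally. Hence every Monte Carlo draw $\Delta_j(\boldsymbol x_t;\Pi_b,X'_b)$ has conditional mean $\Phi_j(\boldsymbol x_t)=0$. Because the smoother is linear in its responses, $\mathbb E[\widehat\psi_{j,k}(x_0)\mid\boldsymbol x_1,\ldots,\boldsymbol x_n]=0$ exactly, for every design and every bandwidth.

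With this, there is no bias term and your Step 4 disappears. Your Steps 3 and 5, under the paper's regime $h_n\to0$ and $nh_n^3\to\infty$, then finish the proof.

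Strictly, under this null each single draw is itself zero, so the estimator is degenerate at $0$ and the claim is immediate. The paper's CLT with a nondegenerate variance $\tau^2$ really uses only the conditional centering at every explained point. That centering, rather than a local derivative condition, is the hypothesis your Step 1 should impose.
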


Theorem~\ref{thm:shap_derivative_symmetry} justifies using the aggregated block-level SHAP derivative vector as the neural mirror importance score. In the experiment, we find that the number of permutation draws does not affect the result heavily. As detailed in Appendix \ref{mcexperiment}, we find that 10 draws are sufficient and performs stably, and we fix this number in our later experiments.

\subsection{Low-Dimensional Permutation SHAP Gaussian Mirror}
\label{low_dim_group_ngm}

We now introduce our method in neural networks, which combines the block-level Permutation SHAP feature importance in Section \ref{pshap_derivative} with a grouped Neural Gaussian Mirror construction. The mirror variables follow the linear Group Gaussian Mirror. For feature \(j\), we define 
\[
U_j(G_j)=X_j+\widetilde Z_jG_j, \quad
V_j(G_j)=X_j-\widetilde Z_jG_j, \quad
W_j=X_{-j}
\]
where \(\widetilde Z_j\in\mathbb R^{n\times m}\) is the Gaussian noise block and \(G_j\in\mathbb R^{m\times m}\) is the perturbation matrix. The difference is that \(G_j\) can no longer be chosen by the closed-form block orthogonality condition in Section \ref{subsec:low_dim_group_gm}; instead, it is optimized through the kernel-based conditional-dependence objective introduced in Section \ref{ngm_background}. The target remains \(U_j(G_j)\perp V_j(G_j)\mid W_j\), while \(U_j(G_j)\) and \(V_j(G_j)\) are now being block-valued mirror branches.

\begin{restatable}{theorem}{groupngmobjective}
\label{thm:group_ngm_objective}
Applying RKHS log-density decomposition and likelihood-ratio score construction to \((U_j(G_j),V_j(G_j),W_j)\) gives the grouped Neural Gaussian Mirror objective:
\[
I_j^K(G_j)^2
=
\frac{1}{n^2}
\left[
(H_nK_U(G_j)H_n)\circ(H_nK_V(G_j)H_n)\circ K_W
\right]_{++},
\]
where \(K_U\), \(K_V\) and \(K_W\) are kernel matrices computed from the rows of \(U_j(G_j)\), \(V_j(G_j)\) and \(W_j\).
\end{restatable}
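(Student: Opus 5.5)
The plan is to repeat the Neural Gaussian Mirror derivation of Section~\ref{ngm_background}, now with block-valued arguments. The RKHS construction never uses the dimension of the arguments, only the kernels on them. First fix reproducing kernels $k_U$, $k_V$ on $\mathbb R^m$ and $k_W$ on $\mathbb R^{(p-1)m}$. Decompose each marginal space as $\mathcal H_A=\mathcal H_A^0\oplus\mathcal H_A^1$ for $A\in\{U,V,W\}$, with $\mathcal H_A^0$ the constants and $\mathcal H_A^1$ the centered functions. The tensor-product space then splits into the eight components $\mathcal H_{abc}$ exactly as before. Write the log-density as $\eta(u,v,w)=\log p_{U,V,W}(u,v,w)$, where $u,v$ are now rows of $U_j(G_j)$ and $V_j(G_j)$ in $\mathbb R^m$. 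The block versions of $U\perp V\mid W$ and $\eta_{U,V}+\eta_{U,V,W}=0$ are equivalent by the same factorization argument, which is insensitive to dimension. Hence the null subspace is $\mathcal H_0$ and the interaction subspace is $\mathcal H_1=(\mathcal H_U^1\otimes\mathcal H_V^1\otimes\mathcal H_W^0)\oplus(\mathcal H_U^1\otimes\mathcal H_V^1\otimes\mathcal H_W^1)$, verbatim.

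Next I would form the likelihood-ratio (score) statistic for testing $\mathcal H_1$-components equal to zero. Expand the tensor-product penalized log-likelihood around the null fit. The score in direction $g\in\mathcal H_1$ is an empirical centered average of $g$ at the sample rows $(u_t,v_t,w_t)$. Its squared RKHS norm over the unit ball is the squared norm of the empirical mean embedding projected onto $\mathcal H_1$. The reproducing kernel of $\mathcal H_1$ is $\tilde k_U\tilde k_V(1+\tilde k_W)$, or simply $\tilde k_U\tilde k_V k_W$ if $k_W$ includes the constant. Here $\tilde k$ denotes the doubly centered kernel. Evaluated on the sample, centering corresponds to $H_nK_UH_n$ and $H_nK_VH_n$. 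The product kernel becomes the Hadamard product, and the squared norm of the mean embedding is $n^{-2}\sum_{s,t}$ of the kernel entries. This gives $n^{-2}[(H_nK_U(G_j)H_n)\circ(H_nK_V(G_j)H_n)\circ K_W]_{++}$. The dependence on $G_j$ enters only through the rows of $U_j(G_j)$ and $V_j(G_j)$ feeding $K_U$ and $K_V$.

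The main obstacle is justifying that empirical centering by $H_n$ correctly implements the projection onto $\mathcal H_U^1$ and $\mathcal H_V^1$, and that the score computed under the null fit reduces to the plain sample embedding rather than picking up residual null-model terms. I would first follow the scalar argument of \cite{Xing2020}. Treat the null-model contribution as asymptotically negligible after centering, since under $\mathcal H_0$ the fitted $\eta$ has no $UV$ interaction and is annihilated by double centering in both $u$ and $v$. Then note that every step uses only inner products of kernel sections, so replacing scalar $u,v$ by vectors in $\mathbb R^m$ changes nothing beyond the kernel inputs.
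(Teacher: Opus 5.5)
Your proposal follows essentially the same route as the paper's proof. Both decompose the tensor-product RKHS, isolate the interaction space $\mathcal H_1$, take the score to be the empirical mean of the $\mathcal H_1$ kernel sections, expand its squared norm by the reproducing property, and use the factorization $K^1=K_U^1K_V^1K_W$ with $H_n$-centered $U$ and $V$ Gram matrices to obtain the Hadamard-product formula.

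The paper does not resolve the obstacle you flag either; it simply defines $S_n(G_j)=\frac{1}{n}\sum_{i}K^1_{t_i(G_j)}$ and asserts the factorization and the empirical centering. Once the score is taken to be the norm of $\frac{1}{n}\sum_i\tilde\phi_U(u_i)\otimes\tilde\phi_V(v_i)\otimes\phi_W(w_i)$ with empirically centered feature maps, the identity holds exactly, so no asymptotic-negligibility argument is needed.
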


Theorem~\ref{thm:group_ngm_objective} shows that the grouped construction changes the perturbation \(G_j\) from a scalar to a matrix, while the conditional dependence criterion remains a scalar RKHS score norm. In all experiments, \(K_U,K_V,K_W\) are Gaussian RBF kernels with median pairwise-distance bandwidths after column standardization. We therefore choose
\(
\widehat G_j\in\arg\min_{G_j\in\mathcal G} I_j^K(G_j)^2 .
\)

\begin{restatable}{proposition}{linearkernelreduction}
\label{prop:linear_kernel_reduction}
Under linear kernels and ignoring the \(W_j\)-kernel weight, the objective satisfies
\[
I_j^K(G_j)^2
\propto
\left\|
\frac{1}{n}
\bigl(H_nU_j(G_j)\bigr)^\top
\bigl(H_nV_j(G_j)\bigr)
\right\|_F^2,
\]
which reduces to the block-orthogonality target in the linear Group Gaussian Mirror.
\end{restatable}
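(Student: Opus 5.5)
With linear kernels \(K_U=UU^\top\), \(K_V=VV^\top\), and \(K_W\) replaced by the all-ones matrix \(\mathbf 1\mathbf 1^\top\) (our reading of ``ignoring the \(W_j\)-kernel weight''), we expand the Hadamard-sum identity directly. Write \(U=U_j(G_j)\), \(V=V_j(G_j)\), \(\widetilde U=H_nU\), \(\widetilde V=H_nV\). Since \(H_n\) is symmetric and idempotent, \(H_nK_UH_n=\widetilde U\widetilde U^\top\) and \(H_nK_VH_n=\widetilde V\widetilde V^\top\). The objective then becomes
\[
I_j^K(G_j)^2=\frac1{n^2}\sum_{s,t}(\widetilde U\widetilde U^\top)_{st}(\widetilde V\widetilde V^\top)_{st}
=\frac1{n^2}\operatorname{tr}\bigl(\widetilde U\widetilde U^\top\widetilde V\widetilde V^\top\bigr),
\]
using \([A\circ B]_{++}=\operatorname{tr}(A^\top B)\) and symmetry. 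By cyclicity this trace equals \(\operatorname{tr}\bigl((\widetilde U^\top\widetilde V)(\widetilde U^\top\widetilde V)^\top\bigr)=\|\widetilde U^\top\widetilde V\|_F^2\). Pulling the factor \(1/n\) inside gives \(I_j^K(G_j)^2=\|\tfrac1n\widetilde U^\top\widetilde V\|_F^2\), which is the claimed proportionality, in fact with constant one.

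For the reduction, expand \(\widetilde U^\top\widetilde V=(H_nX_j+H_n\widetilde Z_jG_j)^\top(H_nX_j-H_n\widetilde Z_jG_j)\). Setting this \(m\times m\) cross-product to zero is the unconditional, mean-centered version of the block condition in Section~\ref{subsec:low_dim_group_gm}. To match the conditional statement exactly, we would argue that conditioning on \(W_j\) in the linear setting amounts to residualizing on \(X_{-j}\), i.e.\ replacing \(H_n\) by \(M_{X_{-j}}\) (which contains centering when an intercept is present). This gives \((R_j+\widetilde Z_jG_j)^\top(R_j-\widetilde Z_jG_j)\), using \(M_{X_{-j}}\widetilde Z_j=\widetilde Z_j\) by construction. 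The cross terms cancel because \(\widetilde Z_j\perp R_j\) (as \(Q_j\) is projected off \([X_{-j},R_j]\)), leaving \(R_j^\top R_j-G_j^\top\widetilde Z_j^\top\widetilde Z_jG_j=R_j^\top R_j-G_j^\top G_j\). This vanishes for \(G_j=(R_j^\top R_j)^{1/2}\), so the minimizer (objective value zero) coincides with the closed-form choice of Theorem~\ref{thm:low_dim_ggm}.

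The main obstacle is interpretive rather than computational: we must fix what ``ignoring the \(W_j\)-kernel weight'' means, and justify swapping centering \(H_n\) for residualization \(M_{X_{-j}}\). We would state explicitly that the reduction holds with \(W_j\) partialled out linearly, which is the linear analogue of conditioning.
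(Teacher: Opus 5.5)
Your derivation of the identity is the same as the paper's: both write \(H_nK_UH_n=\widetilde U\widetilde U^\top\) and \(H_nK_VH_n=\widetilde V\widetilde V^\top\), turn the Hadamard sum into \(\operatorname{tr}(\widetilde U\widetilde U^\top\widetilde V\widetilde V^\top)\), and use cyclicity to reach \(\|\widetilde U^\top\widetilde V\|_F^2\). Your reading \(K_W=\mathbf 1\mathbf 1^\top\) only fixes the proportionality constant at one, which the paper leaves unspecified.

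You differ from the paper on the ``reduces to'' clause. The paper stops after calling \(\frac1n\widetilde U^\top\widetilde V\) the centered empirical block cross-covariance and asserting that driving it to zero is the block-orthogonality target. You instead replace \(H_n\) by \(M_{X_{-j}}\) and check that the cross-product becomes \(R_j^\top R_j-G_j^\top G_j\), which vanishes at the closed-form \(G_j=(R_j^\top R_j)^{1/2}\) of Theorem~\ref{thm:low_dim_ggm}.

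Your caveat is well placed. With the literal \(H_n\) and an intercept in \(X_{-j}\), you have \(H_n\widetilde Z_j=\widetilde Z_j\) and \(\widetilde Z_j^\top X_j=0\). The zero set is then \(G_j^\top G_j=X_j^\top H_nX_j\), a marginal rather than conditional scaling. So your residualization step is what makes the match with linear GGM exact rather than merely analogical. The paper's version buys brevity; yours pins down what ``reduces to'' actually means.

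One small correction: the zero of the residualized objective is not unique. Every \(G_j\) with \(G_j^\top G_j=R_j^\top R_j\) attains it, for example \(O(R_j^\top R_j)^{1/2}\) with \(O\) orthogonal. Say the closed-form choice is \emph{a} minimizer rather than \emph{the} minimizer. The conclusions of Theorem~\ref{thm:low_dim_ggm} still hold for any such \(G_j\).
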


Proposition~\ref{prop:linear_kernel_reduction} shows that the grouped neural construction reduces to the block-orthogonality principle of linear GGM under linear kernels. Empirically, the objective \(I_j^K(G_j)^2\) is well behaved: full-matrix optimization initialized at \(I_m\) converges stably even with a relatively large learning rate. Appendix~\ref{convergenceexperiment} provides the convergence results, and we fix learning rate \(0.5\) and 200 epochs in later experiments.

\subsection{High-Dimensional Permutation SHAP Gaussian Mirror}
\label{high_dim_group_ngm}

For high-dimensional neural networks, the data-splitting and Group Lasso screening in Section~\ref{highdimggm} are not directly applicable, since they are designed for OLS or GLS inference; exact OLS residualization from Section~\ref{subsec:low_dim_group_gm} is also unavailable. We therefore use a label-free ridge-type projection to construct a stable perturbation basis \(\widetilde Z_{j,\lambda}\). This projection is not meant to remove nonlinear dependence, which is handled by \(I_j^K(G_j)^2\), but to place mirror noise in a stable incremental direction while preserving the sign-flip symmetry required by the mirror principle. We define
\[
Q_{j,\lambda}=A_{j,\lambda}(X_j,W_j)Z_j,
\quad
\widetilde Z_{j,\lambda}
=
Q_{j,\lambda}
\left(
Q_{j,\lambda}^\top Q_{j,\lambda}+\varepsilon I_m
\right)^{-1/2},
\quad \text{with} \quad
\varepsilon>0.
\]
Here, \(A_{j,\lambda}(X_j,W_j)\in\mathbb R^{n\times n}\) is a ridge-type projection operator, and \(Z_j\in\mathbb R^{n\times m}\) is an auxiliary Gaussian noise matrix independent of \((X,\boldsymbol y)\) and \(A_{j,\lambda}(X_j,W_j)\). In later high-dimensional experiments, \(A_{j,\lambda}\) is implemented by ridge residualization using ridge hat matrices, with \(\lambda_W=\lambda_Z=0.1n\).

\begin{restatable}{theorem}{ridgesymmetrysafe}
\label{thm:ridge_symmetry_safe}
For a null grouped feature \(j\), let \(Z_j\in\mathbb R^{n\times m}\) satisfy \(Z_j\overset d=-Z_j\) and \(Z_j\perp (X,\boldsymbol y)\). Assume \(\widetilde Z_{j,\lambda}\) is odd and \(\widehat G_j\) is even in \(Z_j\), if the downstream importance map is swap-equivariant, then the mirror outputs \(O_j^\pm\) are exchangeable and symmetry-safe, namely, \((O_j^+,O_j^-)\overset d=(O_j^-,O_j^+)\).
\end{restatable}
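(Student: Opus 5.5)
The plan is a sign-flip symmetry argument carried out conditionally on the data $(X,\boldsymbol y)$. Write the whole pipeline as a deterministic map of $Z_j$ once the data are fixed.

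\textbf{Step 1: the sign flip swaps the mirror branches.} By assumption $\widetilde Z_{j,\lambda}(-Z_j)=-\widetilde Z_{j,\lambda}(Z_j)$ and $\widehat G_j(-Z_j)=\widehat G_j(Z_j)$, with the data held fixed. Hence the perturbation block $\Delta_j(Z_j):=\widetilde Z_{j,\lambda}\widehat G_j$ is odd in $Z_j$. It follows that
\[
U_j(-Z_j)=X_j-\Delta_j(Z_j)=V_j(Z_j),\qquad V_j(-Z_j)=U_j(Z_j),
\]
while $W_j=X_{-j}$ is unchanged. In words, flipping the sign of $Z_j$ exactly swaps the two mirror branches.

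For the oddness of $\widetilde Z_{j,\lambda}$ in the ridge construction, note that $A_{j,\lambda}$ does not depend on $Z_j$. So $Q_{j,\lambda}$ is linear, hence odd, in $Z_j$. Also $Q_{j,\lambda}^\top Q_{j,\lambda}+\varepsilon I_m$ is even in $Z_j$, so $\widetilde Z_{j,\lambda}$ is odd. Evenness of $\widehat G_j$ would follow if the objective $I_j^K(G_j)^2$ is invariant under the swap $U\leftrightarrow V$. That holds because the Hadamard product of $H_nK_UH_n$ and $H_nK_VH_n$ commutes, and because the optimizer (fixed initialization $I_m$, deterministic gradient steps) depends only on the objective. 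For the theorem itself, these oddness and evenness properties are simply taken as hypotheses.

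\textbf{Step 2: apply swap-equivariance.} Let the downstream procedure take the augmented design $[W_j,U_j,V_j]$ and $\boldsymbol y$, fit the network, and return $(O_j^+,O_j^-)$, for example the aggregated SHAP-derivative vectors. Swap-equivariance means that exchanging the input blocks $U_j$ and $V_j$ exchanges the outputs. Any internal randomness (network initialization, permutation draws, background samples) must be either coupled symmetrically or independent of $Z_j$ with a swap-invariant law, and this is folded into the equivariance assumption. Combining with Step 1 gives the pointwise identity
\[
\bigl(O_j^+,O_j^-\bigr)(-Z_j)=\bigl(O_j^-,O_j^+\bigr)(Z_j).
\]

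\textbf{Step 3: pass to distributions.} Since $Z_j\perp (X,\boldsymbol y)$ and $Z_j\overset d=-Z_j$, the conditional law of $Z_j$ given the data is also sign-symmetric. Therefore
\[
(O_j^+,O_j^-)\mid(X,\boldsymbol y)\;\overset d=\;(O_j^+,O_j^-)(-Z_j)\mid(X,\boldsymbol y)\;=\;(O_j^-,O_j^+)\mid(X,\boldsymbol y).
\]
Integrating over the data yields the unconditional statement $(O_j^+,O_j^-)\overset d=(O_j^-,O_j^+)$.

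\textbf{Step 4 (main obstacle): where the null hypothesis enters, and what "symmetry-safe" means.} The exchangeability above holds even without the null. Its role under $H_{0,j}$ is to guarantee that the symmetrization does not create spurious asymmetry. Exchangeability alone does not make $M_j$ symmetric about zero, since the sign of the inner product is swap-invariant. So I would state "symmetry-safe" as follows: any swap-antisymmetric functional of $(O_j^+,O_j^-)$, such as $\|O_j^+\|-\|O_j^-\|$, is symmetric about $0$. The sign symmetry of $M_j$ itself under the null is then inherited from the null centering of each branch, as in Theorem~\ref{thm:shap_derivative_symmetry}.

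The delicate point I expect to spend the most care on is measurability and tie-breaking in $\arg\min$. If $\widehat G_j$ is a set-valued minimizer, I would fix a deterministic selection rule that depends on $Z_j$ only through swap-invariant quantities. This ensures that the evenness hypothesis is actually met.
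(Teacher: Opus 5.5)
Your proposal is correct and follows the paper's proof essentially step for step. Oddness of \(\widetilde Z_{j,\lambda}\widehat G_j\) makes \(Z_j\mapsto -Z_j\) exchange \(U_j\) and \(V_j\); swap-equivariance turns this into \((O_j^+,O_j^-)(-Z_j)=(O_j^-,O_j^+)(Z_j)\); and \(Z_j\overset{d}{=}-Z_j\) together with \(Z_j\perp(X,\boldsymbol y)\) gives exchangeability, where your explicit conditioning on the data is just a more careful version of the paper's last step. Your side remarks are also accurate: the null is never actually used, and since \(M_j\) is a swap-invariant function of \((O_j^+,O_j^-)\), exchangeability alone says nothing about its sign. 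However, your claim that the sign symmetry of \(M_j\) is ``inherited from null centering'' via Theorem~\ref{thm:shap_derivative_symmetry} would need a joint condition such as \((O_j^+,O_j^-)\overset{d}{=}(O_j^+,-O_j^-)\), which coordinatewise centering does not supply; this lies outside the theorem and does not affect your proof.
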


Theorem~\ref{thm:ridge_symmetry_safe} shows that the validity does not rely on exact OLS orthogonality. This theorem gives a structural requirement that the construction pipeline should be symmetry-safe. The ridge-type operator \(A_{j,\lambda}\) satisfies the requirement that the projection is odd in \(Z_j\), as it is label-free and linear in \(Z_j\). Thus, the ridge-type projection preserves all the geometry required by the mirror principle. Combined with the Permutation SHAP derivative metric from Section \ref{pshap_derivative}, which is swap-equivariant, our high-dimensional Neural Gaussian Mirror construction is legal and valid.

\section{Numerical Simulation}
\label{simulation}

\subsection{Linear Model Simulations}
\label{linear_simulation}

We evaluate our Group Gaussian Mirror (GGM) method on grouped linear designs, using both the OLS and the GLS version and compare it with simple Group Lasso. We also compare it with a Partial-\(F\) BH procedure, where we compute group-level \(p\)-values by partial \(F\)-tests for \(H_{0,j}:\boldsymbol{\beta}_j=\mathbf{0}\), and apply the standard Benjamini--Hochberg procedure to the resulting \(p\)-values \cite{Benjamini1995}. In all settings, data are generated from 
\(\boldsymbol{y}=X\boldsymbol{\beta}+\boldsymbol{\varepsilon}\), 
where the noise sequence \(\{\varepsilon_t\}_{t=1}^n\) follows a stationary AR(1) process \(\varepsilon_t=\phi\varepsilon_{t-1}+u_t\), \(u_t \overset{\text{iid}}{\sim} N(0,1-\phi^2)\), \(\varepsilon_0\sim N(0,1)\) with \(\phi=0.5\). We set grouped feature size \(m=5\), target FDR level \(q=0.1\), and \( | \mathcal S_1 | =30\) relevant grouped features. All weights in null grouped features are set to zero. For relevant grouped features, the signal scale is \(\sigma_\beta=20\sqrt{\log(| \mathcal S_1 |)/n}\). We use an AR grouped design: for each non-null grouped feature \(j\), we draw \(a_j\sim N(0,\sigma_\beta^2)\), then set
\(\beta_{j,\ell}=w_\ell a_j\), where \(w_\ell\) decreases linearly from 1.0 to 0.4 across lags. This creates signals that are spread across the whole lag block but weaker at more distant lags. We consider two dimension settings: a low-dimensional setting with \(n=2500\), \(p=300\), \(m=5\), and a high-dimensional setting with \(n=2500\), \(p=700\), \(m=5\). In the high-dimensional setting, we use the screen-then-mirror version of Group Gaussian Mirror, splitting the observations into two consecutive blocks with a gap of 10 observations between them. All experiments were carried out on a Supermicro AS-4124GO-NART+ server with eight NVIDIA A100-SXM4-80GB GPUs.

Each grouped feature is generated from \(m\) lagged values of a latent stationary AR(1) process. We vary the feature-level AR coefficient \(\rho_w\in\{0.1,0.5,0.9\}\), which controls the temporal dependence among lagged sub-features within the same grouped feature. Cross-feature dependence is introduced through the contemporaneous correlation \(\rho_b=\alpha\rho_w\) between latent processes, with \(\alpha\in\{0.2,0.8\}\).

For each setting, we repeat the experiment 50 times and report grouped-feature FDR and power averaged over repetitions. FDR is computed as the fraction of selected grouped features that are null, and power is computed as the fraction of non-null grouped features selected. The results are summarized in Table~\ref{tab:linear_simulation1}. In all tables below, \textbf{bold} numbers represent the best result and \textcolor{blue}{blue} numbers represent the second best. Empirically, we find the OLS version is sufficient even for correlated noise terms. We also conduct experiments under a block-correlated Gaussian design in Appendix \ref{additionalsimulation}. 

\begin{table}[H]
\centering
\footnotesize
\setlength{\tabcolsep}{5pt}
\renewcommand{\arraystretch}{0.9}
\caption{Linear simulation results under the AR grouped design}
\label{tab:linear_simulation1}
\begin{tabular}{@{}ccccccccccc@{}}
\midrule
\multirow{2}{*}{\textbf{\makecell{Setting \\ (p*m, n)}}} &
\multirow{2}{*}{\textbf{\makecell{Feature \\ Corr}}} &
\multirow{2}{*}{\textbf{\makecell{AR \\ Coef}}} &
\multicolumn{2}{c}{\textbf{GGM-OLS}} &
\multicolumn{2}{c}{\textbf{GGM-GLS}} &
\multicolumn{2}{c}{\textbf{Partial-F BH}} &
\multicolumn{2}{c}{\textbf{Group Lasso}} \\
\cmidrule(lr){4-5}
\cmidrule(lr){6-7}
\cmidrule(lr){8-9}
\cmidrule(lr){10-11}
& & & FDR & Power & FDR & Power & FDR & Power & FDR & Power \\
\midrule

\multirow{6}{*}{$\makecell{p*m = 1500 \\ n = 2500}$}
& \multirow{3}{*}{0.2}
& 0.1 & 0.135 & 1.000 & \textcolor{blue}{0.119} & 1.000 & \textbf{0.102} & 1.000 & 0.412 & 1.000 \\
&
& 0.5 & \textbf{0.020} & 1.000 & 0.069 & 1.000 & \textcolor{blue}{0.053} & 1.000 & 0.402 & 1.000 \\
&
& 0.9 & \textbf{0.026} & 1.000 & 0.090 & 1.000 & \textcolor{blue}{0.069} & 1.000 & 0.457 & 0.999 \\
&
\multirow{3}{*}{0.8}
& 0.1 & \textcolor{blue}{0.077} & 1.000 & 0.078 & 1.000 & \textbf{0.062} & 1.000 & 0.367 & 1.000 \\
&
& 0.5 & \textbf{0.043} & 0.997 & 0.129 & 0.997 & \textcolor{blue}{0.065} & 0.997 & 0.402 & 0.999 \\
&
& 0.9 & \textbf{0.024} & 0.995 & 0.090 & 0.995 & \textcolor{blue}{0.081} & 0.995 & 0.449 & 0.996 \\

\midrule

\multirow{6}{*}{$\makecell{p*m = 3500 \\ n = 2500}$}
& \multirow{3}{*}{0.2}
& 0.1 & \textcolor{blue}{0.104} & 1.000 & 0.114 & 1.000 & \textbf{0.080} & 1.000 & 0.402 & 1.000 \\
&
& 0.5 & \textbf{0.033} & 1.000 & 0.067 & 1.000 & \textcolor{blue}{0.063} & 1.000 & 0.408 & 1.000 \\
&
& 0.9 & \textbf{0.024} & 1.000 & 0.081 & 1.000 & \textcolor{blue}{0.076} & 0.998 & 0.463 & 1.000 \\
&
\multirow{3}{*}{0.8}
& 0.1 & \textcolor{blue}{0.087} & 1.000 & 0.092 & 0.996 & \textbf{0.058} & 1.000 & 0.389 & 0.997 \\
&
& 0.5 & \textbf{0.046} & 1.000 & 0.123 & 1.000 & \textcolor{blue}{0.062} & 0.997 & 0.427 & 1.000 \\
&
& 0.9 & \textbf{0.019} & 0.997 & 0.099 & 0.998 & \textcolor{blue}{0.085} & 1.000 & 0.455 & 0.992 \\

\bottomrule
\end{tabular}
\end{table}

\subsection{Sequential Neural Network Simulations}
\label{neural_simulation}

We evaluate our Permutation SHAP Gaussian Mirror (PSGM) method on nonlinear sequential designs using LSTM, GRU, and Transformer encoder models. The target FDR level is still \(q=0.1\), and we fix \(| \mathcal S_1 |=15\) to mimic a sparse setting. As in linear simulations, each grouped feature has lag-block length consistent with the model's look-back window \(m\). We also consider a low-dimensional setting with \(n=2500\), \(p=300\), \(m=5\), and a high-dimensional setting with \(n=2500\), \(p=700\), \(m=5\).

The covariates are generated from a latent factor design similar to \cite{Zuo2024}. At each time \(t\), we sample a 10-dimensional latent factor vector with correlated coordinates and smooth it over time:
\[
\boldsymbol f_t^{\rm raw}\sim N(\boldsymbol 0,\Sigma_f),
\quad
(\Sigma_f)_{ab}=0.9^{|a-b|},
\quad
\boldsymbol f_t=0.3\boldsymbol f_{t-1}^{\rm raw}+0.7\boldsymbol f_t^{\rm raw}.
\]
Given \(\boldsymbol f_t\), all features are generated by the same logistic factor mechanism. For each feature \(j\), we independently draw \(c_j\sim N(0,1)\) and \(\tilde{\boldsymbol\lambda}_j\sim N(\boldsymbol 0,I_{11})\). Let \(\tilde{\boldsymbol f}_t=(1,\boldsymbol f_t^\top)^\top\in\mathbb R^{11}\). We set
\[
X_{tj}
=
\frac{c_j}{1+\exp(\tilde{\boldsymbol\lambda}_j^\top \tilde{\boldsymbol f}_t)}
+
e_{tj},
\quad \text{where} \quad
e_{tj}\sim N(0,1).
\]
The relevant feature set is then sampled uniformly at random; all features are generated by the same mechanism, and relevance only determines whether a feature enters the response. For each relevant feature \(j\), we assign a signed signal weight \(\beta_j=10s_j\), where \(s_j\in\{-1,1\}\), while null features have \(\beta_j=0\). The response is generated from a lag-weighted nonlinear signal,
\[
\eta_t
=
\sum_{\ell=0}^{m-1} w_\ell\,\boldsymbol x_{t-1-\ell}^{\top}\boldsymbol\beta,
\qquad
y_t=\tanh(\eta_t)+\varepsilon_t,
\qquad
\varepsilon_t\sim N(0,\sigma_\varepsilon^2),
\]
where \(w_\ell\propto 0.9^\ell\) is normalized to have unit \(\ell_2\)-norm. We consider \(\sigma_\varepsilon\in\{1,0.3\}\), corresponding to weak and strong signal-to-noise regimes.

For each dataset, we train LSTM, GRU, and Transformer encoder models. The recurrent models use two layers, the Transformer uses two attention heads, and the hidden dimension is set to \(20\log(p)\). All models are trained with MSE loss using Adam, with learning rate, weight decay, and dropout selected by the same cross-validation procedure. We compare our algorithm using Permutation SHAP derivative with ablations using the fitted derivative at sub-feature for importance score. We also compare with three baselines: AFS \cite{Gui2019}, Adaptive Group Lasso (AGL) \cite{Dinh2020}, and Sawaya's data-splitting FDR-control variant \cite{Sawaya2025}. The first two are feature-selection heuristics and do not guarantee FDR control, while the third is designed for controlled neural feature selection. For each model, dimension setting, and noise level, we repeat the experiment 50 times, and report the averaged grouped-feature FDR and power. The results are summarized in Table~\ref{tab:neural_simulation_lstm}, Table~\ref{tab:neural_simulation_gru}, and Table~\ref{tab:neural_simulation_transformer}.

\begin{table}[H]
\centering
\footnotesize
\setlength{\tabcolsep}{5pt}
\renewcommand{\arraystretch}{0.9}
\caption{LSTM simulation results}
\label{tab:neural_simulation_lstm}
\begin{tabular}{@{}cccccccccccc@{}}
\midrule
\multirow{2}{*}{\textbf{\makecell{Setting \\ (p*m, n)}}} &
\multirow{2}{*}{\textbf{\makecell{Signal \\ Strength}}} &
\multicolumn{2}{c}{\textbf{PSGM}} &
\multicolumn{2}{c}{\textbf{Fitted Derivative}} &
\multicolumn{2}{c}{\textbf{AGL}} &
\multicolumn{2}{c}{\textbf{AFS}} &
\multicolumn{2}{c}{\textbf{Sawaya}} \\
\cmidrule(lr){3-4} 
\cmidrule(lr){5-6} 
\cmidrule(lr){7-8} 
\cmidrule(lr){9-10} 
\cmidrule(lr){11-12}
& & FDR & Power & FDR & Power & FDR & Power & FDR & Power & FDR & Power \\
\midrule

\multirow{2}{*}{$\makecell{p*m = 1500 \\ n = 2500}$}
& Weak   & \textbf{0.161} & \textcolor{blue}{0.879} & \textcolor{blue}{0.213} & 0.842 & 0.916 & \textbf{0.966} & 0.886 & 0.556 & 0.448 & 0.007 \\
& Strong & \textbf{0.145} & \textbf{0.958} & \textcolor{blue}{0.265} & 0.790 & 0.833 & \textcolor{blue}{0.880} & 0.644 & 0.723 & 0.276 & 0.029 \\

\midrule

\multirow{2}{*}{$\makecell{p*m = 3500 \\ n = 2500}$}
& Weak   & \textbf{0.116} & 0.625 & \textcolor{blue}{0.146} & \textcolor{blue}{0.649} & 0.938 & \textbf{0.960} & 0.968 & 0.311 & 0.493 & 0.007 \\
& Strong & \textbf{0.152} & \textbf{0.886} & \textcolor{blue}{0.188} & 0.763 & 0.854 & \textcolor{blue}{0.880} & 0.912 & 0.683 & 0.519 & 0.019 \\

\bottomrule
\end{tabular}
\end{table}

\begin{table}[H]
\centering
\footnotesize
\setlength{\tabcolsep}{5pt}
\renewcommand{\arraystretch}{0.9}
\caption{GRU simulation results}
\label{tab:neural_simulation_gru}
\begin{tabular}{@{}cccccccccccc@{}}
\midrule
\multirow{2}{*}{\textbf{\makecell{Setting \\ (p*m, n)}}} &
\multirow{2}{*}{\textbf{\makecell{Signal \\ Strength}}} &
\multicolumn{2}{c}{\textbf{PSGM}} &
\multicolumn{2}{c}{\textbf{Fitted Derivative}} &
\multicolumn{2}{c}{\textbf{AGL}} &
\multicolumn{2}{c}{\textbf{AFS}} &
\multicolumn{2}{c}{\textbf{Sawaya}} \\
\cmidrule(lr){3-4} 
\cmidrule(lr){5-6} 
\cmidrule(lr){7-8} 
\cmidrule(lr){9-10} 
\cmidrule(lr){11-12}
& & FDR & Power & FDR & Power & FDR & Power & FDR & Power & FDR & Power \\
\midrule

\multirow{2}{*}{$\makecell{p*m = 1500 \\ n = 2500}$}
& Weak   & \textbf{0.133} & \textcolor{blue}{0.931} & \textcolor{blue}{0.239} & 0.821 & 0.837 & 0.899 & 0.555 & \textbf{0.944} & 0.476 & 0.176 \\
& Strong & \textbf{0.162} & \textcolor{blue}{0.979} & \textcolor{blue}{0.249} & 0.854 & 0.764 & 0.840 & 0.220 & \textbf{0.992} & 0.600 & 0.203 \\

\midrule

\multirow{2}{*}{$\makecell{p*m = 3500 \\ n = 2500}$}
& Weak   & \textbf{0.062} & 0.817 & \textcolor{blue}{0.134} & \textcolor{blue}{0.851} & 0.947 & \textbf{1.000} & 0.590 & 0.821 & 0.577 & 0.024 \\
& Strong & \textbf{0.223} & \textcolor{blue}{0.919} & \textcolor{blue}{0.308} & 0.889 & 0.829 & 0.880 & 0.441 & \textbf{1.000} & 0.478 & 0.163 \\

\bottomrule
\end{tabular}
\end{table}

\begin{table}[H]
\centering
\footnotesize
\setlength{\tabcolsep}{5pt}
\renewcommand{\arraystretch}{0.9}
\caption{Transformer simulation results}
\label{tab:neural_simulation_transformer}
\begin{tabular}{@{}cccccccccccc@{}}
\midrule
\multirow{2}{*}{\textbf{\makecell{Setting \\ (p*m, n)}}} &
\multirow{2}{*}{\textbf{\makecell{Signal \\ Strength}}} &
\multicolumn{2}{c}{\textbf{PSGM}} &
\multicolumn{2}{c}{\textbf{Fitted Derivative}} &
\multicolumn{2}{c}{\textbf{AGL}} &
\multicolumn{2}{c}{\textbf{AFS}} &
\multicolumn{2}{c}{\textbf{Sawaya}} \\
\cmidrule(lr){3-4} 
\cmidrule(lr){5-6} 
\cmidrule(lr){7-8} 
\cmidrule(lr){9-10} 
\cmidrule(lr){11-12}
& & FDR & Power & FDR & Power & FDR & Power & FDR & Power & FDR & Power \\
\midrule

\multirow{2}{*}{$\makecell{p*m = 1500 \\ n = 2500}$}
& Weak
& \textbf{0.068} & 0.846
& \textcolor{blue}{0.098} & 0.855
& 0.948 & \textbf{1.000}
& 0.668 & 0.591
& 0.224 & \textcolor{blue}{0.906} \\

& Strong
& \textbf{0.116} & \textcolor{blue}{0.887}
& \textcolor{blue}{0.232} & \textbf{0.893}
& 0.814 & 0.880
& 0.347 & 0.872
& 0.306 & 0.683 \\

\midrule

\multirow{2}{*}{$\makecell{p*m = 3500 \\ n = 2500}$}
& Weak
& \textbf{0.141} & 0.491
& \textcolor{blue}{0.183} & 0.577
& 0.974 & \textbf{1.000}
& 0.739 & 0.475
& 0.278 & \textcolor{blue}{0.667} \\

& Strong
& \textcolor{blue}{0.283} & \textcolor{blue}{0.900}
& \textbf{0.160} & 0.710
& 0.903 & \textbf{0.925}
& 0.720 & 0.843
& 0.432 & 0.619 \\

\bottomrule
\end{tabular}
\end{table}

\section{Real Data Application: CausalRivers}
\label{realdata}

We evaluate our method on the East Germany subset of CausalRivers~\cite{Stein2025} dataset, a real-world river-discharge time-series benchmark with 666 gauge stations and a directed upstream-to-downstream river graph. We aggregate the original 15-minute measurements to 3-hour averages, interpolate missing values, and use first differences to reduce persistent level effects. For each target station \(j\), the real response is \(y_t^{\mathrm{real}}=\Delta R_{t,j}\), and each candidate station \(k\) forms one grouped lag feature \(X_{t,k}=(\Delta R_{t-1,k},\Delta R_{t-2,k},\Delta R_{t-3,k})\). Thus, selection is performed at the station-block level rather than the individual-lag level. We fix the target FDR level \(q=0.2\) for a less conservative selection.

We use the first 2400 differenced observations after January 1, 2021 with a train-test ratio of \(4:1\). We consider six target stations selected by a fixed rule requiring at least 10 graph-relevant groups, 50 eligible graph-null groups, low-dimensional OLS feasibility, and acceptable condition numbers. Direct upstream and upstream-of-upstream stations are treated as graph-proxy relevant groups, while graph-null groups are sampled from stations with no directed path relation to the target. Since the river graph does not define the statistical null in our regression model, FDR and power on real \(y\) are reported only as graph-proxy metrics. To evaluate selection with known support under real covariate dependence, we also generate semi-synthetic responses on the same river covariates, using direct and second-order upstream groups as the true support. For each relevant group \(k\), let \(s_{t,k}=\boldsymbol{w}^\top X_{t,k}\), where \(\boldsymbol{w}\) is an \(\ell_2\)-normalized exponentially decaying lag-weight vector with larger weight on recent lags. We generate
\[
\eta_t
=
\sum_{k\in \mathcal S_{\mathrm{direct}}}\xi_k \tanh(s_{t,k})
+
0.55\sum_{k\in \mathcal S_{\mathrm{second}}}\xi_k \sin(s_{t,k})
+
0.20\sum_{(k,k')\in \mathcal P_{\mathrm{direct}}}
\tanh(s_{t,k}s_{t,k'}/2),
\]

where \(\xi_k\in\{-1,1\}\) and \(\mathcal P_{\mathrm{direct}}\) contains pairs of direct upstream groups. We set \(y_t^{\mathrm{syn}}=\eta_t+\varepsilon_t\), with Gaussian noise calibrated to signal-to-noise ratio \(2.0\). For semi-synthetic \(y\), FDR and power are computed using the constructed true support. Table~\ref{tab:causalrivers_transformer} reports Transformer-based results averaged over six target stations and ten random draws of graph-null groups, including FDR/proxy FDR, power/proxy power, selected-group composition, and held-out test MSE for the selected, full, and oracle upstream-only models. Full dataset description, preprocessing details, target-level results, and linear, LSTM, and GRU results are given in Appendix~\ref{causalriverdetails}.

\begin{table}[H]
\centering
\footnotesize
\setlength{\tabcolsep}{4pt}
\renewcommand{\arraystretch}{0.9}
\caption{Transformer results on the CausalRivers real data application}
\label{tab:causalrivers_transformer}
\begin{tabular}{@{}cccccccccc@{}}
\toprule
Outcome & Method 
& FDR/Proxy
& Power/Proxy
& \% Direct 
& \% 2nd  
& \% Null
& Sel. MSE 
& Full MSE 
& Oracle MSE \\
\midrule
\multirow{4}{*}{\makecell{Real \(y\)}}
& PSGM 
& \textcolor{blue}{0.576} & \textcolor{blue}{0.406} & 0.431 & 0.407 & 0.155 & \textcolor{blue}{0.333} & 0.399 & 0.345 \\
& AGL 
& 0.772 & \textbf{1.000} & 1.000 & 1.000 & 1.000 & 0.407 & 0.399 & 0.345 \\
& AFS 
& 0.616 & 0.219 & 0.350 & 0.132 & 0.122 & \textbf{0.332} & 0.399 & 0.345 \\
& Sawaya 
& \textbf{0.307} & 0.092 & 0.151 & 0.048 & 0.028 & 0.382 & 0.399 & 0.345 \\
\midrule
\multirow{4}{*}{\makecell{Semi-\\synthetic \(y\)}}
& PSGM 
& \textbf{0.258} & \textcolor{blue}{0.897} & 0.992 & 0.825 & 0.115 & \textbf{0.514} & 0.586 & 0.509 \\
& AGL 
& 0.772 & \textbf{1.000} & 1.000 & 1.000 & 1.000 & 0.597 & 0.586 & 0.509 \\
& AFS 
& \textcolor{blue}{0.320} & 0.452 & 0.763 & 0.249 & 0.063 & \textcolor{blue}{0.593} & 0.586 & 0.509 \\
& Sawaya 
& 0.340 & 0.124 & 0.233 & 0.035 & 0.023 & 0.720 & 0.586 & 0.509 \\
\bottomrule
\end{tabular}
\end{table}

\section{Conclusion and Limitations}
\label{conclusion}

We propose a model-agnostic FDR-control framework for grouped-feature selection in sequential models. To the best of our knowledge, this is the first Gaussian-Mirror-based method that extends mirror-statistic FDR control to sequential deep learning architectures while directly targeting grouped-feature null hypotheses. We introduce GGM for linear models and PSGM for neural networks. Across simulations and the CausalRivers application, our method achieves stronger FDR-power tradeoffs than benchmark methods while maintaining competitive predictive performance.

There are several limitations of our method. The neural version depends on the quality of the fitted predictive model. Null symmetry of PSGM mirror statistics may be weakened if the network learns spurious correlations or assigns systematic directional importance to null grouped features, reflecting a broader challenge in neural attribution and black-box inference \cite{Ghorbani2017, Geirhos2020}. Empirically, stronger regularization, especially larger weight decay, improves FDR control. Developing theory for when sequential networks and their attribution maps preserve approximate null symmetry remains an important direction. PSGM also performs best when the grouped-feature size \(m\) is moderate and the relevant support is sparse; larger lag windows or denser signals may weaken mirror-statistic separation. Finally, our current method is relatively power-oriented and can sometimes be less conservative in FDR control, so designing more conservative neural mirror procedures is another useful future direction.

\clearpage

\bibliography{references}
\nocite{*}

\clearpage

\appendix

\section{Algorithms for GGM and PSGM}
\label{algorithms}

\begin{table}[H]
\centering
\begin{tabular}{@{}p{0.95\textwidth}@{}}
\toprule
\textbf{Algorithm 1} Low-Dimensional Group Gaussian Mirror \\
\midrule

\textbf{Input:} Fixed FDR level \(q\), grouped design
\(X=[X_1,\ldots,X_p]\) with \(X_j\in\mathbb R^{n\times m}\), and \(\boldsymbol y\in\mathbb R^n\). \\

\textbf{Output:} Selected grouped feature set \(\widehat{\mathcal S}_1\).

\begin{enumerate}[leftmargin=18pt,label={(\alph*)}]
    \item \textbf{for} \(j=1,\ldots,p\), \textbf{do}:
    \begin{enumerate}[leftmargin=20pt,label={(\arabic*)}]
        \item Let \(X_{-j}\) be the design matrix with group \(j\) removed, and compute \(R_j=M_{X_{-j}}X_j\).

        \item Generate an auxiliary Gaussian noise matrix \(Z_j\in\mathbb R^{n\times m}\), with \(Z_j\perp (X,\boldsymbol y)\).

        \item Orthogonalize and normalize the noise block to get \(Q_j\) and \(\widetilde Z_j\).

        \item Set the group perturbation matrix \(G_j=(R_j^\top R_j)^{1/2}\).

        \item Construct the block-level mirror variables \(X_j^+=X_j+\widetilde Z_jG_j\) and \(X_j^-=X_j-\widetilde Z_jG_j\).

        \item Fit the linear regression of \(\boldsymbol y\) on \([X_{-j},X_j^+,X_j^-]\) and obtain \(\widehat{\boldsymbol\beta}_j^\pm\).

        \item Compute the block-level mirror statistic \(M_j=\operatorname{sign}\left(\langle \widehat{\boldsymbol\beta}_j^+,\widehat{\boldsymbol\beta}_j^- \rangle\right)\left(\|\widehat{\boldsymbol\beta}_j^+\|_2+\|\widehat{\boldsymbol\beta}_j^-\|_2\right)\).
    \end{enumerate}
    \item \textbf{end for}

    \item Compute the mirror threshold \(\tau_q=\min\left\{t>0:\frac{\#\{j:M_j\le -t\}+1}{\#\{j:M_j\ge t\}\vee 1}\le q\right\}\).

    \item Return \(\widehat{\mathcal S}_1=\{j:M_j\ge \tau_q\}\).
\end{enumerate}
\\
\bottomrule
\end{tabular}
\end{table}

\begin{table}[H]
\centering
\begin{tabular}{@{}p{0.95\textwidth}@{}}
\toprule
\textbf{Algorithm 2} High-Dimensional Two-Stage Group Gaussian Mirror \\
\midrule

\textbf{Input:} Fixed FDR level \(q\), grouped design
\(X=[X_1,\ldots,X_p]\) with \(X_j\in\mathbb R^{n\times m}\), response
\(\boldsymbol y\in\mathbb R^n\), and first-stage grouped screener. \\

\textbf{Output:} Selected grouped feature set \(\widehat{\mathcal S}_1\).

\begin{enumerate}[leftmargin=18pt,label={(\alph*)}]
    \item Split the data into two chunks \((X^{(1)},\boldsymbol y^{(1)})\) and \((X^{(2)},\boldsymbol y^{(2)})\). For time-series data, use two consecutive chunks and leave a short buffer gap if needed.

    \item Apply a grouped feature screening method (Group Lasso) on the first chunk and obtain \(\widehat{\mathcal A}\subseteq\{1,\ldots,p\}\), and initialize \(\widetilde M_j=0\), for \(j=1,\ldots,p\).

    \item \textbf{for} \(j\in\widehat{\mathcal A}\), \textbf{do}:
    \begin{enumerate}[leftmargin=20pt,label={(\arabic*)}]
        \item Restrict the second-stage design to the screened model \(X_{\widehat{\mathcal A}}^{(2)}=\{X_k^{(2)}:k\in\widehat{\mathcal A}\}\).

        \item Compute the residualized group block: \(R_j^{(2)}=M_{X_{\widehat{\mathcal A}\setminus\{j\}}^{(2)}}X_j^{(2)}\).

        \item Generate a Gaussian noise matrix \(Z_j^{(2)}\in\mathbb R^{n_2\times m}\), with \(Z_j^{(2)}\perp (X^{(2)},\boldsymbol y^{(2)})\).

        \item Orthogonalize and normalize the noise block to get \(Q_j^{(2)}\), \(\widetilde Z_j^{(2)}\) and \(G_j^{(2)}\).

        \item Construct second-stage mirror variables \(X_j^{\pm,(2)}=X_j^{(2)}\pm\widetilde Z_j^{(2)}G_j^{(2)}\).

        \item Regress \(\boldsymbol y^{(2)}\) on \([X_{\widehat{\mathcal A}\setminus\{j\}}^{(2)},X_j^{+,(2)},X_j^{-,(2)}]\) and obtain \(\widehat{\boldsymbol\beta}_j^{\pm,(2)}\).

        \item Compute \( \widetilde M_j=M_j^{(2)}=\operatorname{sign}\left(\langle \widehat{\boldsymbol\beta}_j^{+,(2)},\widehat{\boldsymbol\beta}_j^{-,(2)} \rangle\right)\left(\|\widehat{\boldsymbol\beta}_j^{+,(2)}\|_2+\|\widehat{\boldsymbol\beta}_j^{-,(2)}\|_2\right)\).

    \end{enumerate}
    \item \textbf{end for}

    \item Compute the mirror threshold \(\tau_q=\min\left\{t>0:\frac{\#\{j:\widetilde M_j\le -t\}+1}{\#\{j:\widetilde M_j\ge t\}\vee 1}\le q\right\}\).

    \item Return \(\widehat{\mathcal S}_1=\{j:\widetilde M_j\ge \tau_q\}\).
\end{enumerate}
\\
\bottomrule
\end{tabular}
\end{table}

\begin{table}[H]
\centering
\begin{tabular}{@{}p{0.95\textwidth}@{}}
\toprule
\textbf{Algorithm 3} Low-Dimensional Permutation SHAP Gaussian Mirror \\
\midrule

\textbf{Input:} Fixed FDR level \(q\), grouped design
\(X=[X_1,\ldots,X_p]\) with \(X_j\in\mathbb R^{n\times m}\), response \(\boldsymbol y\in\mathbb R^n\), model class \(\mathcal F\), kernels \(k_U,k_V,k_W\), permutation draws \(M_n\), smoothing bandwidth \(h_n\). \\

\textbf{Output:} Selected grouped feature set \(\widehat{\mathcal S}_1\).

\begin{enumerate}[leftmargin=18pt,label={(\alph*)}]
    \item \textbf{for} \(j=1,\ldots,p\), \textbf{do}:
    \begin{enumerate}[leftmargin=20pt,label={(\arabic*)}]

        \item Construct an orthogonalized Gaussian noise block \(\widetilde Z_j\in\mathbb R^{n\times m}\), with \(\widetilde Z_j\perp (X,\boldsymbol y)\).

        \item For each candidate \(G_j\in\mathcal G\), construct \(U_j(G_j), V_j(G_j)=X_j\pm\widetilde Z_jG_j\).

        \item Optimize the perturbation matrix and get \(\widehat G_j\in\arg\min_{G_j\in\mathcal G} I_j^K(G_j)^2\).

        \item Construct the final neural mirror branches \(U_j=X_j+\widetilde Z_j\widehat G_j\) and \(V_j=X_j-\widetilde Z_j\widehat G_j\).

    \end{enumerate}
    \item \textbf{end for}

    \item Train a neural model \(\widehat f\in\mathcal F\) using the augmented input \([U_1,V_1,\ldots,U_p,V_p]\) and \(\boldsymbol y\).

    \item \textbf{for} \(j=1,\ldots,p\), \textbf{do}:
    \begin{enumerate}[leftmargin=20pt,label={(\arabic*)}]
        \item For each sub-feature \(k\), estimate the local-quadratic smoothed Permutation SHAP derivatives \(\widehat\psi_{j,k}^\pm(\boldsymbol x_t)\) for each \(t\). Aggregate them and get \(\widehat{\boldsymbol\psi}_j^\pm=\frac{1}{n}\sum_{t=1}^n\widehat{\boldsymbol\psi}_j^\pm(\boldsymbol x_t)\).

        \item Compute the neural mirror statistic \(M_j=\operatorname{sign}\left(\langle \widehat{\boldsymbol\psi}_j^+,\widehat{\boldsymbol\psi}_j^- \rangle\right)\left(\|\widehat{\boldsymbol\psi}_j^+\|_2+\|\widehat{\boldsymbol\psi}_j^-\|_2\right)\).
    \end{enumerate}
    \item \textbf{end for}

    \item Compute the mirror threshold \(\tau_q=\min\left\{t>0:\frac{\#\{j:M_j\le -t\}+1}{\#\{j:M_j\ge t\}\vee 1}\le q\right\}\).

    \item Return \(\widehat{\mathcal S}_1=\{j:M_j\ge \tau_q\}\).
\end{enumerate}
\\
\bottomrule
\end{tabular}
\end{table}

\begin{table}[H]
\centering
\begin{tabular}{@{}p{0.95\textwidth}@{}}
\toprule
\textbf{Algorithm 4} High-Dimensional Ridge-Projected Permutation SHAP Gaussian Mirror \\
\midrule

\textbf{Input:} Fixed FDR level \(q\), grouped design
\(X=[X_1,\ldots,X_p]\) with \(X_j\in\mathbb R^{n\times m}\), response \(\boldsymbol y\in\mathbb R^n\), model class \(\mathcal F\), kernels \(k_U,k_V,k_W\), permutation draws \(M_n\), smoothing bandwidth \(h_n\). \\

\textbf{Output:} Selected grouped feature set \(\widehat{\mathcal S}_1\).

\begin{enumerate}[leftmargin=18pt,label={(\alph*)}]
    \item \textbf{for} \(j=1,\ldots,p\), \textbf{do}:
    \begin{enumerate}[leftmargin=20pt,label={(\arabic*)}]

        \item Generate Gaussian noise matrix \(Z_j\in\mathbb R^{n\times m}\), with \(Z_j\overset d=-Z_j\) and \(Z_j\perp (X,\boldsymbol y)\).

        \item Construct a label-free ridge-type projection operator \(A_{j,\lambda}(X_j,W_j)\in\mathbb R^{n\times n}\).

        \item Form the ridge-projected noise block \(Q_{j,\lambda}\), normalize it and get \(\widetilde Z_{j,\lambda}\).

        \item For each candidate \(G_j\in\mathcal G\), construct \(U_j(G_j), V_j(G_j)=X_j\pm\widetilde Z_{j,\lambda}G_j\).

        \item Initialize the perturbation matrix \(G_j^{(0)}=I_m\), optimize \(\widehat G_j\in\arg\min_{G_j\in\mathcal G} I_j^K(G_j)^2\).

        \item Construct the final mirror branches \(U_j=X_j+\widetilde Z_{j,\lambda}\widehat G_j\) and \(V_j=X_j-\widetilde Z_{j,\lambda}\widehat G_j\).

    \end{enumerate}
    \item \textbf{end for}

    \item Train a neural model \(\widehat f\in\mathcal F\) using the augmented input \([U_1,V_1,\ldots,U_p,V_p]\) and \(\boldsymbol y\).

    \item \textbf{for} \(j=1,\ldots,p\), \textbf{do}:
    \begin{enumerate}[leftmargin=20pt,label={(\arabic*)}]

        \item For each sub-feature \(k\), estimate the local-quadratic smoothed Permutation SHAP derivatives \(\widehat\psi_{j,k}^\pm(\boldsymbol x_t)\) for each \(t\). Aggregate them and get \(\widehat{\boldsymbol\psi}_j^\pm=\frac{1}{n}\sum_{t=1}^n\widehat{\boldsymbol\psi}_j^\pm(\boldsymbol x_t)\).

        \item Compute the neural mirror statistic \(M_j=\operatorname{sign}\left(\langle \widehat{\boldsymbol\psi}_j^+,\widehat{\boldsymbol\psi}_j^- \rangle\right)\left(\|\widehat{\boldsymbol\psi}_j^+\|_2+\|\widehat{\boldsymbol\psi}_j^-\|_2\right)\).

    \end{enumerate}
    \item \textbf{end for}

    \item Compute the mirror threshold \(\tau_q=\min\left\{t>0:\frac{\#\{j:M_j\le -t\}+1}{\#\{j:M_j\ge t\}\vee 1}\le q\right\}\).

    \item Return \(\widehat{\mathcal S}_1=\{j:M_j\ge \tau_q\}\).
\end{enumerate}
\\
\bottomrule
\end{tabular}
\end{table}

\section{Flowchart of GGM and PSGM}
\label{flowchart}

\begin{figure}[H]
    \centering
    \includegraphics[width=1\textwidth]{"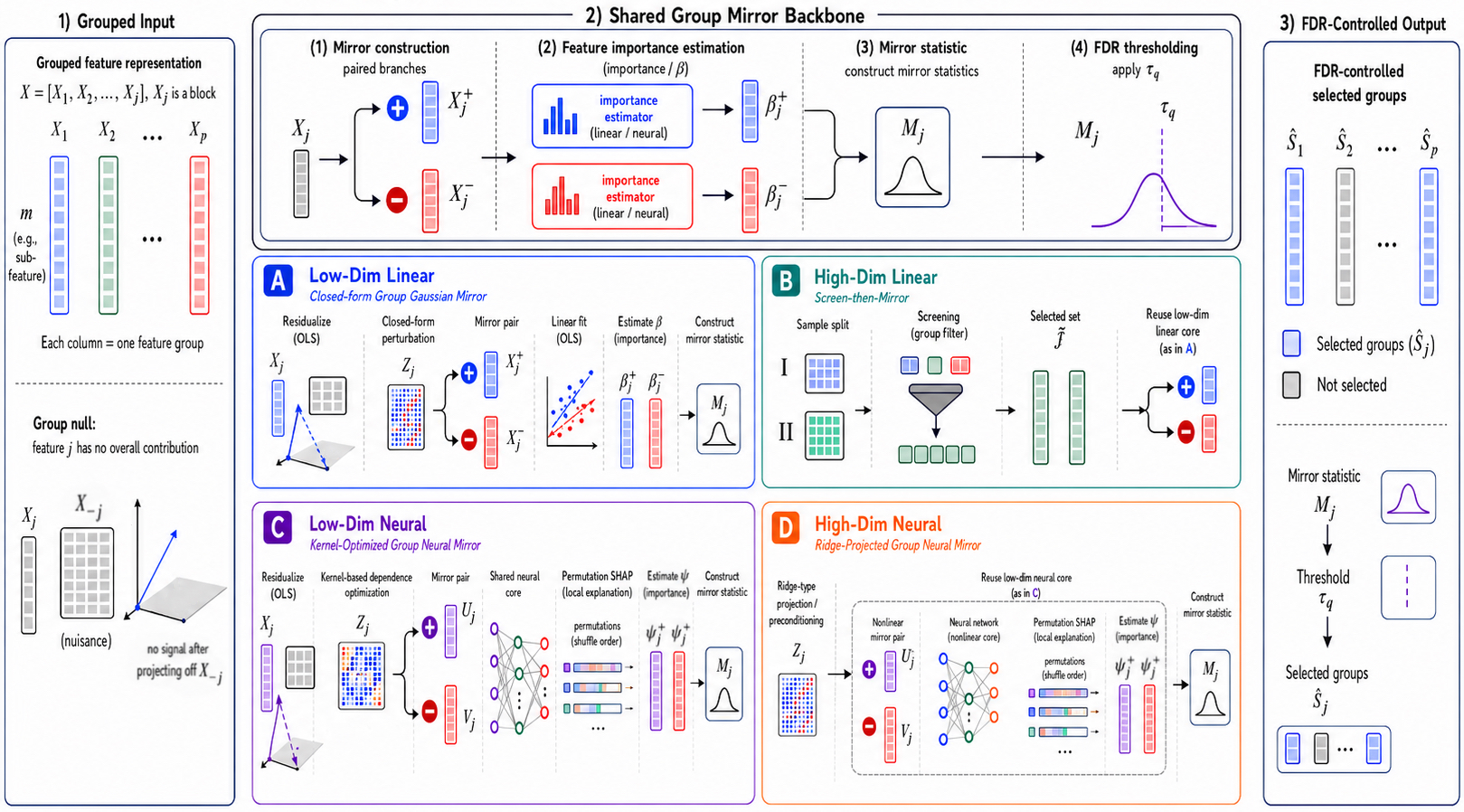"}
    \caption{Flowchart of GGM and PSGM}
    \label{main_graph}
\end{figure}

\section{Proof of Theorems and Propositions}
\label{proofs}

\subsection{Proof of Proposition \ref{prop:gm_fail_grouped}}

\gmfailgrouped*

\begin{proof}
In the grouped design, the \(j\)-th original feature is represented by the
block
\[
X_j=[x_{j,0},x_{j,1},\ldots,x_{j,m-1}]\in\mathbb R^{n\times m},
\]
with coefficient vector \(\boldsymbol\beta_j = (\beta_{j,0},\beta_{j,1},\ldots,\beta_{j,m-1})^\top
\in\mathbb R^m\). Therefore, the feature-level null hypothesis for group \(j\) is \(H_{0,j}:\boldsymbol\beta_j=\boldsymbol 0\), which is equivalent to the joint condition
\[
\beta_{j,0}=\beta_{j,1}=\cdots=\beta_{j,m-1}=0.
\]
If the original Gaussian Mirror is applied separately to each sub-feature \(x_{j,\ell}\) of \(X_j\), then each mirror
construction is associated with a single scalar coefficient \(\beta_{j,\ell}\). Thus the null hypothesis targeted by the \(\ell\)-th coordinate-wise mirror construction is
\[
H_{0,(j,\ell)}:\beta_{j,\ell}=0,
\qquad \ell=0,\ldots,m-1 .
\]
This coordinate-wise null hypothesis is not the same inferential object as the grouped null \(H_{0,j}:\boldsymbol\beta_j=\boldsymbol 0\), which concerns
the simultaneous vanishing of all coefficients in the block.

Consequently, applying the original Gaussian Mirror coordinate by coordinate
targets the collection of coordinate-wise hypotheses
\[
\{H_{0,(j,\ell)}:\ell=0,\ldots,m-1\},
\]
rather than the grouped feature-level null \(H_{0,j}\). This completes the proof.
\end{proof}

\subsection{Proof of Theorem \ref{thm:low_dim_ggm}}

\lowdimggm*

\begin{proof}
Fix a group \(j\). We have \(R_j=M_{X_{-j}}X_j\), \(Q_j=(I_n-P_{[X_{-j},R_j]})Z_j\), \(\widetilde Z_j=Q_j(Q_j^\top Q_j)^{-1/2}\) and \(G_j=(R_j^\top R_j)^{1/2}\). The nonsingularity of \(R_j^\top R_j\) and \(Q_j^\top Q_j\) guarantees that \(G_j\) and \(\widetilde Z_j\) are well-defined. Since \(Q_j\) is obtained by projecting \(Z_j\) onto the orthogonal complement of the column space of \([X_{-j},R_j]\), we have
\[
Q_j^\top X_{-j}=0,
\qquad
Q_j^\top R_j=0 .
\]
By the definition of \(\widetilde Z_j\), we have
\[
\widetilde Z_j^\top \widetilde Z_j
=
(Q_j^\top Q_j)^{-1/2}Q_j^\top Q_j(Q_j^\top Q_j)^{-1/2}
=
I_m .
\]
\[
\widetilde Z_j^\top X_{-j}=0,
\qquad
\widetilde Z_j^\top R_j=0 .
\]
Define the residualized mirror blocks \(V_j^\pm=M_{X_{-j}}X_j^\pm\). Since \(X_j^\pm=X_j\pm\widetilde Z_jG_j\) and \(\widetilde Z_j\) is orthogonal to the column space of \(X_{-j}\), we have
\[
M_{X_{-j}}\widetilde Z_j=\widetilde Z_j.
\]
Therefore,
\[
V_j^\pm
=
M_{X_{-j}}X_j^\pm
=
R_j\pm\widetilde Z_jG_j,
\]
We first establish the key block orthogonality. Since \(G_j=(R_j^\top R_j)^{1/2}\), we have \(G_j^\top=G_j\) and \(G_j^2=R_j^\top R_j\). Using \(R_j^\top\widetilde Z_j=0\) and
\(\widetilde Z_j^\top\widetilde Z_j=I_m\), we obtain
\[
\begin{aligned}
(V_j^+)^\top V_j^-
&=
(R_j+\widetilde Z_jG_j)^\top
(R_j-\widetilde Z_jG_j) \\
&=
R_j^\top R_j
-
R_j^\top \widetilde Z_jG_j
+
G_j\widetilde Z_j^\top R_j
-
G_j\widetilde Z_j^\top \widetilde Z_jG_j \\
&=
R_j^\top R_j-G_j^2 \\
&=
0_{m\times m}.
\end{aligned}
\]
Similarly,
\[
\begin{aligned}
(V_j^+)^\top V_j^+
&=
(R_j+\widetilde Z_jG_j)^\top
(R_j+\widetilde Z_jG_j) \\
&=
R_j^\top R_j+G_j^2 \\
&=
2R_j^\top R_j,
\end{aligned}
\]
and
\[
(V_j^-)^\top V_j^-=2R_j^\top R_j .
\]
Now consider the grouped null \(H_{0,j}:\boldsymbol\beta_j=\boldsymbol 0\). Under this null, the grouped linear model can be written as
\[
Y=X_{-j}\boldsymbol\beta_{-j}+\varepsilon,
\qquad
\varepsilon\sim N(0,\sigma^2 I_n).
\]
Residualizing with respect to \(X_{-j}\) gives
\[
Y^*
:=
M_{X_{-j}}Y
=
M_{X_{-j}}\varepsilon
=:\varepsilon^* .
\]
Hence \(\mathbb E[\varepsilon^*]=0\) and \(\operatorname{Cov}(\varepsilon^*)=\sigma^2 M_{X_{-j}}\). By the Frisch--Waugh--Lovell theorem, the fitted coefficients of
\(X_j^+\) and \(X_j^-\) in the regression on
\([X_{-j},X_j^+,X_j^-]\) are equal to the fitted coefficients from regressing \(Y^*\) on the residualized blocks \([V_j^+,V_j^-]\). Since
\[
(V_j^+)^\top V_j^-=0_{m\times m},
\]
the two residualized mirror blocks are block-orthogonal. Therefore,
\[
\widehat{\boldsymbol\beta}_j^+
=
\bigl((V_j^+)^\top V_j^+\bigr)^{-1}(V_j^+)^\top Y^*
=
(2R_j^\top R_j)^{-1}(V_j^+)^\top \varepsilon^*,
\]
and
\[
\widehat{\boldsymbol\beta}_j^-
=
\bigl((V_j^-)^\top V_j^-\bigr)^{-1}(V_j^-)^\top Y^*
=
(2R_j^\top R_j)^{-1}(V_j^-)^\top \varepsilon^* .
\]

Since \(\mathbb E[\varepsilon^*]=0\), it follows that
\[
\mathbb E[\widehat{\boldsymbol\beta}_j^+]
=
\mathbb E[\widehat{\boldsymbol\beta}_j^-]
=
\boldsymbol 0 .
\]
Moreover, \(V_j^+\) and \(V_j^-\) lie in the residual space of \(X_{-j}\). Hence \(M_{X_{-j}}V_j^\pm=V_j^\pm\). Thus,
\[
\begin{aligned}
\operatorname{Cov}(\widehat{\boldsymbol\beta}_j^+)
&=
(2R_j^\top R_j)^{-1}
(V_j^+)^\top
\operatorname{Cov}(\varepsilon^*)
V_j^+
(2R_j^\top R_j)^{-1} \\
&=
\sigma^2
(2R_j^\top R_j)^{-1}
(V_j^+)^\top
M_{X_{-j}}
V_j^+
(2R_j^\top R_j)^{-1} \\
&=
\sigma^2
(2R_j^\top R_j)^{-1}
(V_j^+)^\top
V_j^+
(2R_j^\top R_j)^{-1} \\
&=
\sigma^2
(2R_j^\top R_j)^{-1}
(2R_j^\top R_j)
(2R_j^\top R_j)^{-1} \\
&=
\sigma^2(2R_j^\top R_j)^{-1}.
\end{aligned}
\]
The same calculation gives
\[
\operatorname{Cov}(\widehat{\boldsymbol\beta}_j^-)
=
\sigma^2(2R_j^\top R_j)^{-1}.
\]
Therefore,
\[
\operatorname{Cov}(\widehat{\boldsymbol\beta}_j^+)
=
\operatorname{Cov}(\widehat{\boldsymbol\beta}_j^-)
=
\sigma^2(2R_j^\top R_j)^{-1}.
\]
For the cross-covariance, we have
\[
\begin{aligned}
\operatorname{Cov}
(\widehat{\boldsymbol\beta}_j^+,\widehat{\boldsymbol\beta}_j^-)
&=
(2R_j^\top R_j)^{-1}
(V_j^+)^\top
\operatorname{Cov}(\varepsilon^*)
V_j^-
(2R_j^\top R_j)^{-1} \\
&=
\sigma^2
(2R_j^\top R_j)^{-1}
(V_j^+)^\top
M_{X_{-j}}
V_j^-
(2R_j^\top R_j)^{-1} \\
&=
\sigma^2
(2R_j^\top R_j)^{-1}
(V_j^+)^\top
V_j^-
(2R_j^\top R_j)^{-1} \\
&=
0_{m\times m}.
\end{aligned}
\]
Furthermore, \(\widehat{\boldsymbol\beta}_j^+\) and
\(\widehat{\boldsymbol\beta}_j^-\) are linear transformations of the Gaussian noise vector \(\varepsilon\). Hence they are jointly Gaussian. Since jointly Gaussian random vectors with zero cross-covariance are independent, we conclude that \(\widehat{\boldsymbol\beta}_j^+\) and \(\widehat{\boldsymbol\beta}_j^-\) are independent centered Gaussian vectors with identical covariance.

We now verify the null symmetry of the block-level mirror statistic. Recall that
\[
M_j
=
\operatorname{sign}\!\left(
\langle
\widehat{\boldsymbol\beta}_j^+,
\widehat{\boldsymbol\beta}_j^-
\rangle
\right)
\left(
\|\widehat{\boldsymbol\beta}_j^+\|_2+
\|\widehat{\boldsymbol\beta}_j^-\|_2
\right).
\]
Since \(\widehat{\boldsymbol\beta}_j^-\) is a centered Gaussian vector, \(\widehat{\boldsymbol\beta}_j^-\overset{d}{=}-\widehat{\boldsymbol\beta}_j^-\). Together with independence, this gives
\[
(\widehat{\boldsymbol\beta}_j^+,\widehat{\boldsymbol\beta}_j^-)
\overset{d}{=}
(\widehat{\boldsymbol\beta}_j^+,-\widehat{\boldsymbol\beta}_j^-).
\]
We define \(\mathcal M(a,b)=\operatorname{sign}(\langle a,b\rangle)(\|a\|_2+\|b\|_2)\), then \(\mathcal M(a,-b)
=
\operatorname{sign}(-\langle a,b\rangle)
(\|a\|_2+\|b\|_2)
=
-\mathcal M(a,b)\), with the convention \(\operatorname{sign}(0)=0\). Therefore,
\[
M_j
=
\mathcal M(\widehat{\boldsymbol\beta}_j^+,
\widehat{\boldsymbol\beta}_j^-)
\overset{d}{=}
\mathcal M(\widehat{\boldsymbol\beta}_j^+,
-\widehat{\boldsymbol\beta}_j^-)
=
-M_j .
\]
Thus \(M_j\) is symmetric about zero under \(H_{0,j}\). Now, it only remains to verify the alternative mean statement. Under \(H_{1,j}:\boldsymbol\beta_j\neq \boldsymbol 0\), the model can be written as
\[
Y=X_{-j}\boldsymbol\beta_{-j}+X_j\boldsymbol\beta_j+\varepsilon.
\]
Residualizing by \(X_{-j}\) gives
\[
Y^*
=
M_{X_{-j}}Y
=
R_j\boldsymbol\beta_j+\varepsilon^* .
\]
Using \(V_j^\pm=R_j\pm\widetilde Z_jG_j\), we obtain
\[
R_j=\frac{1}{2}(V_j^+ + V_j^-).
\]
Therefore,
\[
R_j\boldsymbol\beta_j
=
V_j^+\frac{\boldsymbol\beta_j}{2}
+
V_j^-\frac{\boldsymbol\beta_j}{2}.
\]
Thus, in the residualized regression of \(Y^*\) on \([V_j^+,V_j^-]\), the population coefficients on the two mirror blocks are both \(\boldsymbol\beta_j/2\). Since the noise has mean zero, the OLS fitted coefficient vectors satisfy
\[
\mathbb E[\widehat{\boldsymbol\beta}_j^+]
=
\mathbb E[\widehat{\boldsymbol\beta}_j^-]
=
\frac{1}{2}\boldsymbol\beta_j .
\]
This completes the proof.
\end{proof}

\subsection{Proof of Proposition \ref{prop:gls_whitened_ggm}}

\glswhitenedggm*

\begin{proof}
Consider the grouped linear model
\[
Y=\sum_{k=1}^p X_k\boldsymbol\beta_k+\varepsilon,
\qquad
\varepsilon\sim N(0,\sigma^2\Sigma),
\]
where \(\Sigma\succ 0\). Let \(L\) satisfy \(L^\top L=\Sigma^{-1}\), following the above definition, we have
\[
\bar Y
=
LY
=
\sum_{k=1}^p LX_k\boldsymbol\beta_k+L\varepsilon
=
\sum_{k=1}^p \bar X_k\boldsymbol\beta_k+\bar\varepsilon .
\]
Moreover, \(\mathbb E[\bar\varepsilon]=0\) and \(\operatorname{Cov}(\bar\varepsilon)=L\operatorname{Cov}(\varepsilon)L^\top=\sigma^2 L\Sigma L^\top\). Since \(L\) is non-singular and
\[
\Sigma=(L^\top L)^{-1}=L^{-1}L^{-\top}.
\]
Therefore,
\[
L\Sigma L^\top
=
L(L^{-1}L^{-\top})L^\top
=
I_n .
\]
Hence
\[
\bar\varepsilon\sim N(0,\sigma^2 I_n).
\]
Thus the whitened system is a grouped linear model with spherical Gaussian noise. We then show that OLS in the whitened system is equivalent to GLS in the original
system. For any coefficient vector \(\boldsymbol\beta\), where
\(\bar X=[\bar X_1,\ldots,\bar X_p]\) and \(X=[X_1,\ldots,X_p]\),
\[
\|\bar Y-\bar X\boldsymbol\beta\|_2^2
=
\|L(Y-X\boldsymbol\beta)\|_2^2 .
\]
Expanding the right-hand side gives
\[
\|L(Y-X\boldsymbol\beta)\|_2^2
=
(Y-X\boldsymbol\beta)^\top L^\top L(Y-X\boldsymbol\beta)
=
(Y-X\boldsymbol\beta)^\top\Sigma^{-1}(Y-X\boldsymbol\beta).
\]
This is exactly the GLS objective in the original system. Therefore, minimizing the OLS objective after whitening is equivalent to minimizing the GLS objective before whitening.

Now, it only remains to relate the mirror construction in the whitened system to the \(\Sigma^{-1}\)-inner product in the original system. For any two original-scale
vectors \(a,b\in\mathbb R^n\),
\[
\langle La,Lb\rangle
=
(La)^\top(Lb)
=
a^\top L^\top Lb
=
a^\top\Sigma^{-1}b .
\]
Thus the Euclidean inner product after whitening is exactly the
\(\Sigma^{-1}\)-inner product before whitening. In particular, the Euclidean projection onto the column space of
\(\bar X_{-j}=LX_{-j}\) is
\[
P_{\bar X_{-j}}
=
\bar X_{-j}
(\bar X_{-j}^\top \bar X_{-j})^{-1}
\bar X_{-j}^\top .
\]
Substituting \(\bar X_{-j}=LX_{-j}\), we get
\[
P_{\bar X_{-j}}
=
LX_{-j}
(X_{-j}^\top L^\top L X_{-j})^{-1}
X_{-j}^\top L^\top
=
LX_{-j}
(X_{-j}^\top\Sigma^{-1}X_{-j})^{-1}
X_{-j}^\top L^\top .
\]
Define the GLS projection in the original system by
\[
P_{X_{-j}}^\Sigma
=
X_{-j}
(X_{-j}^\top\Sigma^{-1}X_{-j})^{-1}
X_{-j}^\top\Sigma^{-1}.
\]
Since \(\Sigma^{-1}=L^\top L\), we have
\[
LP_{X_{-j}}^\Sigma
=
LX_{-j}
(X_{-j}^\top\Sigma^{-1}X_{-j})^{-1}
X_{-j}^\top\Sigma^{-1}
=
LX_{-j}
(X_{-j}^\top\Sigma^{-1}X_{-j})^{-1}
X_{-j}^\top L^\top L .
\]
On the other hand, we also have
\[
P_{\bar X_{-j}}L
=
LX_{-j}
(X_{-j}^\top\Sigma^{-1}X_{-j})^{-1}
X_{-j}^\top L^\top L .
\]
Therefore, \(P_{\bar X_{-j}}L=LP_{X_{-j}}^\Sigma\). Consequently, if \(M_{\bar X_{-j}}=I_n-P_{\bar X_{-j}}\) and \(M_{X_{-j}}^\Sigma=I_n-P_{X_{-j}}^\Sigma\), then \(M_{\bar X_{-j}}L=LM_{X_{-j}}^\Sigma\). The same correspondence applies to the residualized group block. In the whitened system,
\[
\bar R_j
=
M_{\bar X_{-j}}\bar X_j
=
M_{\bar X_{-j}}LX_j
=
LM_{X_{-j}}^\Sigma X_j .
\]
Thus, if \(R_j^\Sigma=M_{X_{-j}}^\Sigma X_j\), we have \(\bar R_j=LR_j^\Sigma\). The whitened Gram matrix is therefore
\[
\bar R_j^\top \bar R_j
=
(R_j^\Sigma)^\top L^\top L R_j^\Sigma
=
(R_j^\Sigma)^\top \Sigma^{-1}R_j^\Sigma .
\]
Hence the Euclidean Gram matrix in the whitened system is exactly the \(\Sigma^{-1}\)-metric Gram matrix in the original system.

Now consider the mirror noise construction. In the whitened system, the OLS Group
Gaussian Mirror projects the noise block away from the column spaces of
\(\bar X_{-j}\) and \(\bar R_j\), so that the projected noise block
\(\bar Q_j\) satisfies
\[
\bar Q_j^\top \bar X_{-j}=0,
\qquad
\bar Q_j^\top \bar R_j=0.
\]
Writing \(\bar Q_j=LQ_j\), \(\bar X_{-j}=LX_{-j}\), and
\(\bar R_j=LR_j^\Sigma\), these orthogonality relations become
\[
Q_j^\top L^\top L X_{-j}=0,
\qquad
Q_j^\top L^\top L R_j^\Sigma=0.
\]
Since \(L^\top L=\Sigma^{-1}\), this is equivalent to
\[
Q_j^\top\Sigma^{-1}X_{-j}=0,
\qquad
Q_j^\top\Sigma^{-1}R_j^\Sigma=0.
\]
Thus orthogonality after whitening is the same as \(\Sigma^{-1}\)-orthogonality in the original system. 

Similarly, the normalization in the whitened system, \(\widetilde{\bar Z}_j^\top \widetilde{\bar Z}_j=I_m\), corresponds to \(\widetilde Z_j^\top\Sigma^{-1}\widetilde Z_j=I_m\) in the original system, because \(\widetilde{\bar Z}_j=L\widetilde Z_j\)
implies
\[
\widetilde{\bar Z}_j^\top \widetilde{\bar Z}_j
=
\widetilde Z_j^\top L^\top L\widetilde Z_j
=
\widetilde Z_j^\top\Sigma^{-1}\widetilde Z_j .
\]
Therefore, the OLS Group Gaussian Mirror construction in the whitened system is identical to the GLS Group Gaussian Mirror construction in the original system with all projections, normalizations, and orthogonality relations computed under
the \(\Sigma^{-1}\)-inner product.

Finally, since the whitened error \(\bar\varepsilon\sim N(0,\sigma^2 I_n)\), the OLS Group Gaussian Mirror applies directly to the whitened system. Since OLS estimation in the whitened system is equivalent to GLS estimation in the original system, applying OLS Group Gaussian Mirror to \((\bar Y,\bar X,\bar\varepsilon)\) is equivalent to applying GLS Group Gaussian Mirror to \((Y,X,\varepsilon)\) with the \(\Sigma^{-1}\)-inner product. This completes the proof.
\end{proof}

\subsection{Proof of Proposition \ref{prop:post_selection_not_grouped}}

\postselectionnotgrouped*

\begin{proof}
The first-stage Group Lasso estimator is defined by
\[
\widehat{\boldsymbol\beta}^{\,GL}
\in
\arg\min_{\{\boldsymbol\beta_j\}_{j=1}^p}
\left\{
\frac{1}{2n}
\left\|
Y-\sum_{j=1}^p X_j\boldsymbol\beta_j
\right\|_2^2
+
\lambda\sum_{j=1}^p w_j\|\boldsymbol\beta_j\|_2
\right\}.
\]
Let \(\widehat{\mathcal A}=\{j:\widehat{\boldsymbol\beta}^{\,GL}_j\neq \boldsymbol 0\}\) be the selected grouped feature set, and we define the fitted residual \(\widehat r=Y-\sum_{j=1}^pX_j\widehat{\boldsymbol\beta}^{\,GL}_j\). Then the Karush-Kuhn-Tucker (KKT) conditions for Group Lasso imply that, for each active grouped feature \(j\in\widehat{\mathcal A}\),
\[
X_j^\top \widehat r
=
n\lambda w_j
\frac{\widehat{\boldsymbol\beta}^{\,GL}_j}
{\|\widehat{\boldsymbol\beta}^{\,GL}_j\|_2},
\]
whereas for each inactive grouped feature \(j\notin\widehat{\mathcal A}\),
\[
\|X_j^\top \widehat r\|_2
\le
n\lambda w_j .
\]
Therefore, the event \(\widehat{\mathcal A}=A\) is characterized by constraints involving group \(\ell_2\)-norms and, for active grouped features, normalized direction vectors of the form \(\widehat{\boldsymbol\beta}^{\,GL}_j/
\|\widehat{\boldsymbol\beta}^{\,GL}_j\|_2\). This differs from the coordinate-wise lasso. In the ordinary lasso, the inactive KKT condition is
\[
|x_j^\top \widehat r|\le n\lambda,
\]
which is equivalent to two linear inequalities. Together with a fixed active sign pattern, the lasso selection event can be represented as a finite intersection of affine half-spaces in \(Y\), and hence is polyhedral. For Group Lasso, however, the inactive grouped feature condition
\[
\|X_j^\top \widehat r\|_2\le n\lambda w_j
\]
is a second-order cone constraint whenever the group dimension is larger than one. Such a constraint is generally not representable as a finite intersection of linear inequalities. Moreover, the active grouped feature condition depends on the
continuous direction
\[
\widehat{\boldsymbol\beta}^{\,GL}_j/
\|\widehat{\boldsymbol\beta}^{\,GL}_j\|_2,
\]
rather than on a finite sign pattern as in the coordinate-wise lasso. Therefore, the event \(\widehat{\mathcal A}=A\) induced by Group Lasso is not polyhedral in the response vector. This completes the proof.

Also, this non-polyhedral structure can be seen explicitly in a simple example. Consider a model with a single group \(X_1\in\mathbb R^{n\times m}\), where
\(m>1\), and suppose \(X_1^\top X_1=nI_m\). In this case, the Group Lasso solution is obtained by group soft-thresholding,
and the group is selected if and only if
\[
\|X_1^\top Y\|_2>n\lambda w_1 .
\]
Equivalently, the non-selection event is
\[
\|X_1^\top Y\|_2\le n\lambda w_1 .
\]
This is the inverse image of an \(m\)-dimensional Euclidean ball under the
linear map \(Y\mapsto X_1^\top Y\). For \(m>1\), this set has a curved boundary and is not a polyhedron in \(Y\).
\end{proof}

\subsection{Proof of Theorem \ref{thm:two_stage_ggm}}

\twostageggm*

\begin{proof}
Fix a null grouped feature \(j\in\mathcal S_0\). Let
\(\widehat{\mathcal A}\) denote the first-stage selected grouped feature set. Since \(\widehat{\mathcal A}\) is computed only from the first data chunk \((X^{(1)},\boldsymbol y^{(1)})\), it is measurable with respect to the first-stage data. Recall that the extended two-stage mirror statistic is defined by
\[
\widetilde M_j
=
\begin{cases}
M_j, & j\in\widehat{\mathcal A},\\
0, & j\notin\widehat{\mathcal A},
\end{cases}
\]
where \(M_j\) is the Group Gaussian Mirror statistic computed on the second data chunk within the restricted model indexed by \(\widehat{\mathcal A}\). For any \(t>0\), by the law of total expectation,
\[
\mathbb P(\widetilde M_j>t)-\mathbb P(\widetilde M_j<-t)
=
\mathbb E\left[
\mathbb P(\widetilde M_j>t\mid \widehat{\mathcal A})
-
\mathbb P(\widetilde M_j<-t\mid \widehat{\mathcal A})
\right].
\]
We now analyze the conditional difference inside the expectation. Firstly, we consider the event \(j\notin\widehat{\mathcal A}\). By construction, \(\widetilde M_j=0\). Hence, for every \(t>0\),
\[
\mathbb P(\widetilde M_j>t\mid \widehat{\mathcal A})
=
\mathbb P(\widetilde M_j<-t\mid \widehat{\mathcal A})
=
0.
\]
Thus the conditional contribution is exactly zero on
\(\{j\notin\widehat{\mathcal A}\}\). Next, we consider the event \(j\in\widehat{\mathcal A}\). On this event, \(\widetilde M_j=M_j\), and \(M_j\) is obtained by applying the low-dimensional Group Gaussian Mirror to the second data chunk restricted to the selected grouped features
\(\widehat{\mathcal A}\). We define
\[
\mathcal E_n
=
\left\{
\mathcal S_1\subseteq\widehat{\mathcal A}
\right\}
\cap
\left\{
\text{the second-stage restricted design is low-dimensional and full-rank}
\right\}.
\]
By assumptions (ii) and (iii), we have
\[
\mathbb P(\mathcal E_n)\to1.
\]
On \(\mathcal E_n\), the screened model contains all nonzero grouped features. Therefore, for \(j\in\mathcal S_0\cap\widehat{\mathcal A}\), the coefficient block of \(j\) remains null in the second-stage restricted model, and no omitted-signal bias is introduced by the screening step. Moreover, by assumption (i), the second data chunk is independent of \(\widehat{\mathcal A}\), or at least asymptotically independent in the ordered-data case. Hence, conditional on \(\widehat{\mathcal A}\), the second-stage procedure is equivalent, up to an \(o(1)\) perturbation in the asymptotically independent case, to applying the low-dimensional Group Gaussian Mirror to a fixed deterministic restricted model. On \(\mathcal E_n\), this restricted model satisfies the low-dimensional full-rank conditions required by Theorem~\ref{thm:low_dim_ggm}. Therefore, we have
\[
\sup_{t>0}
\left|
\mathbb P(M_j>t\mid \widehat{\mathcal A})
-
\mathbb P(M_j<-t\mid \widehat{\mathcal A})
\right|
\to0 \quad \text{on} \quad \mathcal E_n.
\]
Since \(\widetilde M_j=M_j\) whenever
\(j\in\widehat{\mathcal A}\), it follows that
\[
\sup_{t>0}
\left|
\mathbb P(\widetilde M_j>t\mid \widehat{\mathcal A})
-
\mathbb P(\widetilde M_j<-t\mid \widehat{\mathcal A})
\right|
\to0 \quad \text{on} \quad \{j\in\widehat{\mathcal A}\}\cap\mathcal E_n.
\]
Combining the two cases, define
\[
Z_n
=
\sup_{t>0}
\left|
\mathbb P(\widetilde M_j>t\mid \widehat{\mathcal A})
-
\mathbb P(\widetilde M_j<-t\mid \widehat{\mathcal A})
\right|.
\]
Then \(0\le Z_n\le1\). On the event \(j\notin\widehat{\mathcal A}\), we have \(Z_n=0\). On the event \(\{j\in\widehat{\mathcal A}\}\cap\mathcal E_n\), the preceding conditional
low-dimensional argument gives \(Z_n\to0\). Since
\(\mathbb P(\mathcal E_n^c)\to0\), we conclude that
\[
Z_n\overset{p}{\to}0.
\]
Because \(Z_n\) is uniformly bounded, convergence in probability to zero implies \(\mathbb E[Z_n]\to0\). Finally, 
\[
\begin{aligned}
&
\sup_{t>0}
\left|
\mathbb P(\widetilde M_j>t)-\mathbb P(\widetilde M_j<-t)
\right| \\
&\qquad
=
\sup_{t>0}
\left|
\mathbb E\left[
\mathbb P(\widetilde M_j>t\mid \widehat{\mathcal A})
-
\mathbb P(\widetilde M_j<-t\mid \widehat{\mathcal A})
\right]
\right| \\
&\qquad
\le
\mathbb E\left[
\sup_{t>0}
\left|
\mathbb P(\widetilde M_j>t\mid \widehat{\mathcal A})
-
\mathbb P(\widetilde M_j<-t\mid \widehat{\mathcal A})
\right|
\right] \\
&\qquad
=
\mathbb E[Z_n]
\to0.
\end{aligned}
\]
This completes the proof.
\end{proof}

\subsection{Proof of Proposition \ref{prop:linear_shap_derivative}}

\linearshapderivative*

\begin{proof}
Consider the grouped linear model
\[
f(\boldsymbol x)
=
\alpha+\sum_{\ell=1}^p \boldsymbol x_\ell^\top\boldsymbol\beta_\ell,
\quad
\boldsymbol x_\ell\in\mathbb R^m.
\]
Fix a grouped feature \(j\), a permutation \(\pi\), and a background draw \(X'\). The two hybrid inputs in the marginal contribution \(\Delta_j(\boldsymbol x;\pi,X')\) differ only in the coordinates of group \(j\). All groups appearing before \(j\) in the permutation are taken from \(\boldsymbol x\) in both hybrid inputs, while all groups appearing after \(j\)
are taken from \(X'\) in both hybrid inputs. Therefore, by linearity, we have
\[
\begin{aligned}
\Delta_j(\boldsymbol x;\pi,X')
&=
f\bigl(\boldsymbol{x}_{P_j(\pi)\cup\{j\}},
X'_{-(P_j(\pi)\cup\{j\})}\bigr)
-
f\bigl(\boldsymbol{x}_{P_j(\pi)},X'_{-P_j(\pi)}\bigr)  \\
&=
\boldsymbol\beta_j^\top \boldsymbol x_j
-
\boldsymbol\beta_j^\top X'_j =
\boldsymbol\beta_j^\top(\boldsymbol x_j-X'_j).
\end{aligned}
\]
This expression does not depend on the permutation \(\pi\). Taking expectation over \(\Pi\) and \(X'\), we obtain
\[
\Phi_j(\boldsymbol x)
=
\mathbb E_{\Pi,X'}\!\left[
\Delta_j(\boldsymbol x;\Pi,X')
\right]
=
\boldsymbol\beta_j^\top
\left(
\boldsymbol x_j-\mathbb E[X'_j]
\right).
\]
\[
\nabla_{\boldsymbol x_j}\Phi_j(\boldsymbol x)
=
\nabla_{\boldsymbol x_j}
\left\{
\boldsymbol\beta_j^\top
\left(
\boldsymbol x_j-\mathbb E[X'_j]
\right)
\right\}
=
\boldsymbol\beta_j.
\]
By the definition of the block-level Permutation SHAP derivative vector, we have
\[
\boldsymbol{\psi}_j(\boldsymbol{x})
=
\left(
\frac{\partial \Phi_j(\boldsymbol{x})}{\partial x_{j,1}},
\ldots,
\frac{\partial \Phi_j(\boldsymbol{x})}{\partial x_{j,m}}
\right)^\top
=
\nabla_{\boldsymbol x_j}\Phi_j(\boldsymbol x)
=
\boldsymbol\beta_j.
\]
This completes the proof.
\end{proof}

\subsection{Proof of Theorem \ref{thm:shap_derivative_symmetry}}

\shapderivativesymmetry*

\begin{proof}
Fix a null grouped feature \(j\). We interpret the null at the model level: there exists a measurable function \(g\) such that
\[
f(\boldsymbol x)=g(\boldsymbol x_{-j}),
\qquad
\forall \boldsymbol x.
\]
Hence the fitted model does not depend on any sub-feature in the grouped block \(j\). We first show that the exact group-level Permutation SHAP value is identically zero. Fix any input \(\boldsymbol x\), any permutation \(\pi\), and any
background draw \(X'\). Since \(f\) does not depend on the grouped feature \(j\), the two hybrid inputs in the marginal contribution have the same value under \(f\):
\[
f\bigl(\boldsymbol{x}_{P_j(\pi)\cup\{j\}},
X'_{-(P_j(\pi)\cup\{j\})}\bigr)
=
f\bigl(\boldsymbol{x}_{P_j(\pi)},X'_{-P_j(\pi)}\bigr).
\]
Therefore, we have \(\Delta_j(\boldsymbol x;\pi,X')=0\) for every realization of \((\pi,X')\). Taking expectation over
\((\Pi,X')\) gives
\[
\Phi_j(\boldsymbol x)
=
\mathbb E_{\Pi,X'}\!\left[
\Delta_j(\boldsymbol x;\Pi,X')
\right]
=
0,
\qquad
\forall \boldsymbol x.
\]
Thus \(\Phi_j\) is the zero function, and every population partial derivative with respect to a sub-feature in group \(j\) is also zero:
\[
\frac{\partial \Phi_j(\boldsymbol x)}{\partial x_{j,k}}
=
0,
\qquad
k=1,\ldots,m.
\]
It now remains to show that the smoothed Monte Carlo estimator of this derivative is asymptotically symmetric. Fix a sub-feature \(x_{j,k}\) and a point \(x_0\). Let
\[
\widehat\Phi_{j,t}
=
\frac{1}{M_n}\sum_{q=1}^{M_n} Z_{tq},
\qquad
t=1,\ldots,n,
\]
denote the Monte Carlo Permutation SHAP estimator at the \(t\)-th explained point. Under the model-level null, the exact value satisfies \(\Phi_j(\boldsymbol x_t)=0\), so the Monte Carlo variables are centered:
\[
\mathbb E[Z_{tq}\mid \boldsymbol x_t]=0.
\]
Assume, as part of the regularity conditions, that conditional on the explained points \(\boldsymbol x_1,\ldots,\boldsymbol x_n\), the variables \(\{Z_{tq}\}_{1\le t\le n,1\le q\le M_n}\) are independent,
\[
\operatorname{Var}(Z_{tq}\mid \boldsymbol x_t)
=
\tau^2(x_{t,j,k}),
\]
and for some \(\delta>0\),
\[
\sup_{t,q}
\mathbb E\!\left[
|Z_{tq}|^{2+\delta}\mid \boldsymbol x_t
\right]
<\infty.
\]
We define \(U_t=\frac{x_{t,j,k}-x_0}{h_n}\), \(r_t=r(U_t)=(1,U_t,U_t^2)^\top\), and \(K_t=K(U_t)\). Then the local-quadratic coefficients \(\widehat b=(\widehat b_0,\widehat b_1,\widehat b_2)^\top\) solve
\[
\widehat b
=
\arg\min_{b\in\mathbb R^3}
\sum_{t=1}^n
K_t
\left(
\widehat\Phi_{j,t}-b^\top r_t
\right)^2.
\]
Let \(S_n=\sum_{t=1}^n K_t r_t r_t^\top\) and \(T_n=\sum_{t=1}^n K_t r_t \widehat\Phi_{j,t}\), the normal equations give \(\widehat b=S_n^{-1}T_n\). Therefore, the local-quadratic derivative estimator can be written as
\[
\widehat\psi_{j,k}(x_0)
=
\frac{\widehat b_1}{h_n}
=
\frac{1}{h_n}e_2^\top S_n^{-1}T_n,
\qquad
e_2=(0,1,0)^\top.
\]
Or equivalently,
\[
\widehat\psi_{j,k}(x_0)
=
\sum_{t=1}^n a_{nt}\widehat\Phi_{j,t},
\qquad
a_{nt}
=
\frac{1}{h_n}e_2^\top S_n^{-1}K_t r_t.
\]
Since
\[
\widehat\Phi_{j,t}
=
\frac{1}{M_n}\sum_{q=1}^{M_n}Z_{tq},
\]
we have the triangular-array representation
\[
\widehat\psi_{j,k}(x_0)
=
\sum_{t=1}^n\sum_{q=1}^{M_n}
\frac{a_{nt}}{M_n}Z_{tq}.
\]
Firstly, we compute the conditional mean. Because
\(\mathbb E[Z_{tq}\mid \boldsymbol x_t]=0\), we have
\[
\mathbb E[\widehat\Phi_{j,t}\mid \boldsymbol x_t]
=
\frac{1}{M_n}
\sum_{q=1}^{M_n}
\mathbb E[Z_{tq}\mid \boldsymbol x_t]
=
0.
\]
Hence
\[
\mathbb E[
\widehat\psi_{j,k}(x_0)
\mid
\boldsymbol x_1,\ldots,\boldsymbol x_n
]
=
\sum_{t=1}^n
a_{nt}
\mathbb E[\widehat\Phi_{j,t}\mid \boldsymbol x_t]
=
0.
\]
Next, we compute the conditional variance. Since the Monte Carlo draws are independent across both \(t\) and \(q\), we have \(\operatorname{Var}(\widehat\Phi_{j,t}\mid \boldsymbol x_t)=\tau^2(x_{t,j,k}) / M_n\). Therefore,
\[
\operatorname{Var}(T_n\mid \boldsymbol x_1,\ldots,\boldsymbol x_n)
=
\frac{1}{M_n}
\sum_{t=1}^n
K_t^2 r_t r_t^\top \tau^2(x_{t,j,k}).
\]
Using \(\widehat\psi_{j,k}(x_0)=h_n^{-1}e_2^\top S_n^{-1}T_n\),
we can obtain that
\[
\operatorname{Var}(
\widehat\psi_{j,k}(x_0)
\mid
\boldsymbol x_1,\ldots,\boldsymbol x_n
)
=
\frac{1}{h_n^2}
e_2^\top S_n^{-1}
\left[
\frac{1}{M_n}
\sum_{t=1}^n
K_t^2 r_t r_t^\top \tau^2(x_{t,j,k})
\right]
S_n^{-1}e_2.
\]
Define \(\Gamma_n=\frac{1}{nh_n}\sum_{t=1}^nK_t r_t r_t^\top\) and \(\Omega_n=\frac{1}{nh_n}\sum_{t=1}^nK_t^2 r_t r_t^\top \tau^2(x_{t,j,k})\), then we have
\[
S_n=nh_n\Gamma_n,
\qquad
\sum_{t=1}^n
K_t^2 r_t r_t^\top \tau^2(x_{t,j,k})
=
nh_n\Omega_n.
\]
Substituting these identities gives
\[
\operatorname{Var}(
\widehat\psi_{j,k}(x_0)
\mid
\boldsymbol x_1,\ldots,\boldsymbol x_n
)
=
\frac{1}{M_nnh_n^3}
e_2^\top
\Gamma_n^{-1}\Omega_n\Gamma_n^{-1}
e_2.
\]
Under the local design regularity conditions, \(\Gamma_n\to\Gamma\) and \(\Omega_n\to\Omega\) with \(\Gamma\) positive definite. Hence
\[
\operatorname{Var}(
\widehat\psi_{j,k}(x_0)
\mid
\boldsymbol x_1,\ldots,\boldsymbol x_n
)
=
\frac{1}{M_nnh_n^3}
V_{j,k}(x_0)
+
o\!\left(
\frac{1}{M_nnh_n^3}
\right),
\]
where
\[
V_{j,k}(x_0)
=
e_2^\top\Gamma^{-1}\Omega\Gamma^{-1}e_2.
\]
We now prove asymptotic normality. We define \(\xi_{ntq}=\frac{a_{nt}}{M_n}Z_{tq}\) and \(\widehat\psi_{j,k}(x_0)=\sum_{t=1}^n\sum_{q=1}^{M_n}\xi_{ntq}\). Conditional on the explained points, the variables \(\xi_{ntq}\) are
independent and centered. Since
\[
S_n^{-1}=(nh_n)^{-1}\Gamma_n^{-1},
\]
we have
\[
a_{nt}
=
\frac{1}{h_n}
e_2^\top
(nh_n)^{-1}\Gamma_n^{-1}K_t r_t
=
\frac{1}{nh_n^2}
e_2^\top\Gamma_n^{-1}K_t r_t.
\]
Because \(K\) is bounded and compactly supported, \(r_t\) is bounded on the support of \(K\). Since \(\Gamma_n^{-1}\) is bounded for all large \(n\), it should follow that
\[
\max_{1\le t\le n}|a_{nt}|
=
O\!\left(\frac{1}{nh_n^2}\right).
\]
We further let \(s_n^2=\operatorname{Var}(\widehat\psi_{j,k}(x_0)\mid\boldsymbol x_1,\ldots,\boldsymbol x_n)\). From the variance calculation above, we know that
\[
s_n^2
\asymp
\frac{1}{M_nnh_n^3}.
\]
To verify Lyapunov's condition, we use the moment bound. There exists \(C>0\) such that
\[
\sup_{t,q}
\mathbb E[
|Z_{tq}|^{2+\delta}
\mid
\boldsymbol x_t
]
\le C.
\]
Therefore,
\[
\begin{aligned}
\sum_{t=1}^n\sum_{q=1}^{M_n}
\mathbb E[
|\xi_{ntq}|^{2+\delta}
\mid
\boldsymbol x_1,\ldots,\boldsymbol x_n
]
&\le
C
\sum_{t=1}^n
M_n
\left(
\frac{|a_{nt}|}{M_n}
\right)^{2+\delta}.
\end{aligned}
\]
Only observations with \(U_t\) in the support of \(K\) contribute. By the local design regularity, the number of such observations is \(O(nh_n)\). Hence
\[
\sum_{t=1}^n\sum_{q=1}^{M_n}
\mathbb E[
|\xi_{ntq}|^{2+\delta}
\mid
\boldsymbol x_1,\ldots,\boldsymbol x_n
]
=
O\!\left(
nh_n\cdot M_n
\left(
\frac{1}{M_nnh_n^2}
\right)^{2+\delta}
\right)
=
O\!\left(
\frac{1}
{M_n^{1+\delta}n^{1+\delta}h_n^{3+2\delta}}
\right).
\]
On the other hand, we have
\[
s_n^{2+\delta}
\asymp
\left(
\frac{1}{M_nnh_n^3}
\right)^{1+\delta/2}
=
\frac{1}
{M_n^{1+\delta/2}n^{1+\delta/2}h_n^{3+3\delta/2}}.
\]
Since \(h_n\to0\), \(nh_n^3\to\infty\), and hence \(nh_n\to\infty\), we have
\[
\frac{
\sum_{t=1}^n\sum_{q=1}^{M_n}
\mathbb E[
|\xi_{ntq}|^{2+\delta}
\mid
\boldsymbol x_1,\ldots,\boldsymbol x_n
]
}
{s_n^{2+\delta}}
=
O\!\left(
\frac{1}{(M_nnh_n)^{\delta/2}}
\right)
\to 0,
\]
Thus, the Lyapunov's condition holds. Then, by Lyapunov's central limit theorem for triangular arrays, conditional on the explained points, we obtain
\[
\frac{\widehat\psi_{j,k}(x_0)}{s_n}
\overset{d}{\longrightarrow}
N(0,1),
\quad
\text{or equivalently},
\quad
\sqrt{M_nnh_n^3}\,
\widehat\psi_{j,k}(x_0)
\overset{d}{\longrightarrow}
N\!\left(0,V_{j,k}(x_0)\right).
\]
Till now, it only remains to translate asymptotic normality into null symmetry. Since \(s_n>0\), for any \(t>0\), we have
\[
\mathbb P(\widehat\psi_{j,k}(x_0)>t)
=
\mathbb P\left(
\frac{\widehat\psi_{j,k}(x_0)}{s_n}
>
\frac{t}{s_n}
\right)
\quad
\text{and}
\quad
\mathbb P(\widehat\psi_{j,k}(x_0)<-t)
=
\mathbb P\left(
\frac{\widehat\psi_{j,k}(x_0)}{s_n}
<
-\frac{t}{s_n}
\right).
\]
The standardized statistic converges in distribution to \(N(0,1)\), whose law is symmetric about zero. Since the limiting distribution function is continuous, the convergence of distribution functions should be uniform. Therefore, we have
\[
\sup_{u>0}
\left|
\mathbb P\left(
\frac{\widehat\psi_{j,k}(x_0)}{s_n}>u
\right)
-
\mathbb P\left(
\frac{\widehat\psi_{j,k}(x_0)}{s_n}<-u
\right)
\right|
\to0.
\]
Taking \(u=t/s_n\) yields
\[
\sup_{t>0}
\left|
\mathbb P\!\left(\widehat\psi_{j,k}(x_0)>t\right)
-
\mathbb P\!\left(\widehat\psi_{j,k}(x_0)<-t\right)
\right|
\to0.
\]
Since \(k\in\{1,\ldots,m\}\) was arbitrary, the same conclusion holds coordinate-wise for every sub-feature in the grouped feature block. This completes the proof.
\end{proof}

\subsection{Proof of Theorem \ref{thm:group_ngm_objective}}

\groupngmobjective*

\begin{proof}
Fix a grouped feature \(j\). For a given perturbation matrix
\(G_j\in\mathbb R^{m\times m}\), we define as before
\[
U_j(G_j)=X_j+\widetilde Z_jG_j,
\qquad
V_j(G_j)=X_j-\widetilde Z_jG_j,
\qquad
W_j=X_{-j}.
\]
For the \(i\)-th observation, we write
\[
t_i(G_j)
=
\bigl(u_i(G_j),v_i(G_j),w_i\bigr),
\]
where \(u_i(G_j)\), \(v_i(G_j)\), and \(w_i\) denote the \(i\)-th rows of \(U_j(G_j)\), \(V_j(G_j)\), and \(W_j\), respectively. The Neural Gaussian Mirror construction tests the conditional-independence target
\[
U_j(G_j)\perp V_j(G_j)\mid W_j.
\]
Let \(\eta(u,v,w)=\log p_{U,V,W}(u,v,w)\) be the log-density of the triplet \((U_j(G_j),V_j(G_j),W_j)\). As in the Neural Gaussian Mirror construction, we place \(\eta\) in the tensor-product RKHS:
\[
\mathcal H
=
\mathcal H_U\otimes\mathcal H_V\otimes\mathcal H_W.
\]
Decompose each marginal RKHS into its constant and centered components, \(\mathcal H_A=\mathcal H_A^0\oplus\mathcal H_A^1\) for \(A\in\{U,V,W\}\), we then have
\[
\mathcal H
=
\bigoplus_{a,b,c\in\{0,1\}}
\mathcal H_U^a\otimes\mathcal H_V^b\otimes\mathcal H_W^c.
\]
The conditional-independence null space should consist of components involving only main effects and interactions with \(W_j\), but no joint interaction between \(U_j\) and \(V_j\). Hence the alternative space is
\[
\mathcal H_1
=
(\mathcal H_U^1\otimes\mathcal H_V^1\otimes\mathcal H_W^0)
\oplus
(\mathcal H_U^1\otimes\mathcal H_V^1\otimes\mathcal H_W^1).
\]
Let \(K^1\) denote the reproducing kernel of this alternative space. Then the likelihood-ratio score in the alternative direction is the empirical RKHS score
\[
S_n(G_j)
=
\frac{1}{n}
\sum_{i=1}^n
K^1_{t_i(G_j)}.
\]
The grouped Neural Gaussian Mirror objective is defined as the squared RKHS norm
of this score:
\[
I_j^K(G_j)^2
=
\|S_n(G_j)\|_{\mathcal H}^2.
\]
By the reproducing property, we obtain
\[
\begin{aligned}
I_j^K(G_j)^2
&=
\left\langle
\frac{1}{n}\sum_{i=1}^nK^1_{t_i(G_j)},
\frac{1}{n}\sum_{\ell=1}^nK^1_{t_\ell(G_j)}
\right\rangle_{\mathcal H}  \\
&=
\frac{1}{n^2}
\sum_{i=1}^n\sum_{\ell=1}^n
K^1\!\left(t_i(G_j),t_\ell(G_j)\right).
\end{aligned}
\]
Because the alternative space requires centered main effects in the two mirror branches and allows arbitrary dependence on \(W_j\), the alternative kernel factorizes as
\[
K^1\!\left((u,v,w),(u',v',w')\right)
=
K_U^1(u,u')K_V^1(v,v')K_W(w,w').
\]
Since centering the \(U\)- and \(V\)-branch kernels is achieved by the centering matrix \(H_n=I_n-\frac{1}{n}\boldsymbol 1\boldsymbol 1^\top\), the Gram matrix of \(K_U^1\) and \(K_V^1\) are \(H_nK_U(G_j)H_n\) and \(H_nK_V(G_j)H_n\), respectively. The \(W_j\)-kernel matrix is denoted
by \(K_W\). Substituting the factorized kernel into the double sum gives
\[
\begin{aligned}
I_j^K(G_j)^2
&=
\frac{1}{n^2}
\sum_{i=1}^n\sum_{\ell=1}^n
\bigl[H_nK_U(G_j)H_n\bigr]_{i\ell}
\bigl[H_nK_V(G_j)H_n\bigr]_{i\ell}
\bigl[K_W\bigr]_{i\ell} \\
&=
\frac{1}{n^2}
\left[
(H_nK_U(G_j)H_n)
\circ
(H_nK_V(G_j)H_n)
\circ
K_W
\right]_{++}.
\end{aligned}
\]
This is exactly the stated grouped Neural Gaussian Mirror objective, thus completes the proof.
\end{proof}

\subsection{Proof of Proposition \ref{prop:linear_kernel_reduction}}

\linearkernelreduction*

\begin{proof}
We consider the linear-kernel special case and ignore the \(W_j\)-kernel weight, as stated in the proposition. Under linear kernels on the two mirror branches, we have
\[
K_U(G_j)=U_j(G_j)U_j(G_j)^\top,
\qquad
K_V(G_j)=V_j(G_j)V_j(G_j)^\top.
\]
Since \(H_n\) is symmetric and idempotent, define the centered mirror matrices \(U_{j,c}(G_j)=H_nU_j(G_j)\) and \(V_{j,c}(G_j)=H_nV_j(G_j)\), we then have
\[
H_nK_U(G_j)H_n
=
H_nU_j(G_j)U_j(G_j)^\top H_n
=
U_{j,c}(G_j)U_{j,c}(G_j)^\top,
\]
\[
H_nK_V(G_j)H_n
=
V_{j,c}(G_j)V_{j,c}(G_j)^\top.
\]
Ignoring the \(W_j\)-kernel weight, the grouped neural mirror objective is proportional to
\[
\left[
(H_nK_U(G_j)H_n)
\circ
(H_nK_V(G_j)H_n)
\right]_{++}.
\]
Substituting the two centered Gram matrices, we obtain
\[
\begin{aligned}
&
\left[
(H_nK_U(G_j)H_n)
\circ
(H_nK_V(G_j)H_n)
\right]_{++} \\
&\qquad =
\sum_{i=1}^n\sum_{\ell=1}^n
\left(
u_{j,c,i}(G_j)^\top u_{j,c,\ell}(G_j)
\right)
\left(
v_{j,c,i}(G_j)^\top v_{j,c,\ell}(G_j)
\right),
\end{aligned}
\]
where \(u_{j,c,i}(G_j)\) and \(v_{j,c,i}(G_j)\) denote the \(i\)-th rows of \(U_{j,c}(G_j)\) and \(V_{j,c}(G_j)\), respectively. The preceding double sum can be rewritten as a trace:
\[
\begin{aligned}
&
\sum_{i=1}^n\sum_{\ell=1}^n
\left(
u_{j,c,i}(G_j)^\top u_{j,c,\ell}(G_j)
\right)
\left(
v_{j,c,i}(G_j)^\top v_{j,c,\ell}(G_j)
\right)  \\
&\qquad =
\operatorname{tr}
\left(
U_{j,c}(G_j)U_{j,c}(G_j)^\top
V_{j,c}(G_j)V_{j,c}(G_j)^\top
\right) \\
&\qquad =
\operatorname{tr}
\left(
U_{j,c}(G_j)^\top V_{j,c}(G_j)
V_{j,c}(G_j)^\top U_{j,c}(G_j)
\right) \\
&\qquad =
\left\|
U_{j,c}(G_j)^\top V_{j,c}(G_j)
\right\|_F^2 .
\end{aligned}
\]
Therefore, we obtain
\[
I_j^K(G_j)^2
\propto
\frac{1}{n^2}
\left\|
U_{j,c}(G_j)^\top V_{j,c}(G_j)
\right\|_F^2
=
\left\|
\frac{1}{n}
\bigl(H_nU_j(G_j)\bigr)^\top
\bigl(H_nV_j(G_j)\bigr)
\right\|_F^2.
\]
The matrix \(\frac{1}{n}\bigl(H_nU_j(G_j)\bigr)^\top\bigl(H_nV_j(G_j)\bigr)\) is the centered empirical block cross-covariance between the two mirror
branches. Hence, in the linear-kernel case, minimizing \(I_j^K(G_j)^2\) is equivalent to driving this block cross-covariance toward zero, which is the block-orthogonality target in linear Group Gaussian Mirror. This completes the proof.
\end{proof}

\subsection{Proof of Theorem \ref{thm:ridge_symmetry_safe}}

\ridgesymmetrysafe*

\begin{proof}
Fix a grouped feature \(j\). Under the null hypothesis, we have \(\boldsymbol y\perp X_j\mid W_j\). Let \(Z_j\in\mathbb R^{n\times m}\) satisfy \(Z_j\overset d=-Z_j\) and \(Z_j\perp (X,\boldsymbol y)\) as described. By assumption, the projected noise block \(\widetilde Z_{j,\lambda}\) is
odd in \(Z_j\), and the learned perturbation matrix \(\widehat G_j\) is even in \(Z_j\), namely
\[
\widetilde Z_{j,\lambda}(-Z_j)
=
-\widetilde Z_{j,\lambda}(Z_j),
\qquad
\widehat G_j(-Z_j)=\widehat G_j(Z_j).
\]
The final mirror branches areconstructed as \(U_j=X_j+\widetilde Z_{j,\lambda}\widehat G_j\) and \(V_j=X_j-\widetilde Z_{j,\lambda}\widehat G_j\). We first show that flipping the mirror noise swaps the two branches. Under the transformation \(Z_j\mapsto -Z_j\),
\[
\begin{aligned}
U_j(-Z_j)
&=
X_j+\widetilde Z_{j,\lambda}(-Z_j)\widehat G_j(-Z_j) \\
&=
X_j-\widetilde Z_{j,\lambda}(Z_j)\widehat G_j(Z_j) \\
&=
V_j(Z_j).
\end{aligned}
\]
\[
\begin{aligned}
V_j(-Z_j)
&=
X_j-\widetilde Z_{j,\lambda}(-Z_j)\widehat G_j(-Z_j) \\
&=
X_j+\widetilde Z_{j,\lambda}(Z_j)\widehat G_j(Z_j) \\
&=
U_j(Z_j).
\end{aligned}
\]
Thus the sign flip of the auxiliary mirror noise does not change the construction except for exchanging the plus and minus mirror branches. Let \(\Psi(\boldsymbol y,W_j,U_j,V_j)=(O_j^+,O_j^-)\) denote the downstream importance map. By swap-equivalence, we have
\[
\Psi(\boldsymbol y,W_j,V_j,U_j)=(O_j^-,O_j^+).
\]
Combining this with the branch-swapping identity above gives
\[
(O_j^+,O_j^-)(-Z_j)
=
(O_j^-,O_j^+)(Z_j).
\]
Because \(Z_j\overset d=-Z_j\), the joint distribution of the full construction under \(Z_j\) is the same as under \(-Z_j\), except for the induced exchange of the two mirror branches. The construction of \(\widetilde Z_{j,\lambda}\) and \(\widehat G_j\) is label-free, and \(Z_j\) is independent of \((X,\boldsymbol y)\). Therefore, under the null,
\[
(O_j^+,O_j^-)
\overset d=
(O_j^-,O_j^+).
\]
This completes the proof.
\end{proof}

\section{Sensitivity Analysis for Monte Carlo Permutation SHAP}
\label{mcexperiment}

Here we conduct experiments to support the implementation choice in Section~\ref{pshap_derivative}. In PSGM, the exact group-level Permutation SHAP value is approximated by Monte Carlo permutation sampling before estimating the smoothed SHAP derivative. We therefore vary the number of permutation draws from 3 to 316 and evaluate whether the resulting FDR and power are sensitive to this approximation. The results show that increasing the number of draws has only a limited effect on selection performance, and that 10 permutation draws are sufficient for stable results in our simulations. We use this value in the main neural experiments.

\begin{table}[H]
\centering
\footnotesize
\setlength{\tabcolsep}{3.5pt}
\renewcommand{\arraystretch}{0.9}
\caption{FDR and power of PSGM under different numbers of Monte Carlo permutation draws}
\label{tab:neural_mc_simulation}
\begin{tabular}{@{}ccccccccccccc@{}}
\midrule
\multirow{2}{*}{\textbf{\makecell{Setting \\ (p*m, n)}}} &
\multirow{2}{*}{\textbf{Model}} &
\multirow{2}{*}{\textbf{\makecell{Signal \\ Strength}}} &
\multicolumn{2}{c}{\textbf{MC3}} &
\multicolumn{2}{c}{\textbf{MC10}} &
\multicolumn{2}{c}{\textbf{MC32}} &
\multicolumn{2}{c}{\textbf{MC100}} &
\multicolumn{2}{c}{\textbf{MC316}} \\
\cmidrule(lr){4-5}
\cmidrule(lr){6-7}
\cmidrule(lr){8-9}
\cmidrule(lr){10-11}
\cmidrule(lr){12-13}
& & & FDR & Power & FDR & Power & FDR & Power & FDR & Power & FDR & Power \\
\midrule

\multirow{6}{*}{$\makecell{p*m = 1500 \\ n = 2500}$}
& \multirow{2}{*}{LSTM}
& Weak
& 0.182 & \textbf{0.896}
& 0.161 & \textcolor{blue}{0.879}
& 0.143 & 0.844
& \textbf{0.123} & 0.812
& \textcolor{blue}{0.127} & 0.851 \\

&
& Strong
& \textcolor{blue}{0.148} & 0.916
& \textbf{0.145} & \textbf{0.958}
& 0.164 & 0.937
& 0.167 & \textcolor{blue}{0.943}
& 0.174 & 0.957 \\

& \multirow{2}{*}{GRU}
& Weak
& 0.132 & \textbf{0.934}
& 0.133 & \textcolor{blue}{0.931}
& 0.099 & 0.897
& \textbf{0.086} & 0.874
& \textcolor{blue}{0.097} & 0.904 \\

&
& Strong
& \textcolor{blue}{0.113} & 0.951
& 0.162 & 0.979
& \textbf{0.102} & \textcolor{blue}{0.985}
& 0.185 & 0.982
& 0.162 & \textbf{0.994} \\

& \multirow{2}{*}{Transformer}
& Weak
& 0.073 & \textbf{0.855}
& 0.068 & \textcolor{blue}{0.846}
& \textcolor{blue}{0.052} & 0.812
& \textbf{0.029} & 0.811
& 0.056 & 0.813 \\

&
& Strong
& 0.106 & 0.855
& 0.116 & 0.887
& \textcolor{blue}{0.083} & 0.893
& \textbf{0.082} & \textbf{0.895}
& 0.128 & \textcolor{blue}{0.893} \\

\midrule

\multirow{6}{*}{$\makecell{p*m = 3500 \\ n = 2500}$}
& \multirow{2}{*}{LSTM}
& Weak
& \textbf{0.111} & 0.615
& \textcolor{blue}{0.116} & \textbf{0.625}
& 0.119 & \textcolor{blue}{0.618}
& 0.149 & 0.617
& 0.168 & 0.616 \\

&
& Strong
& 0.152 & 0.865
& 0.152 & 0.886
& \textcolor{blue}{0.145} & \textcolor{blue}{0.895}
& 0.147 & \textbf{0.899}
& \textbf{0.144} & 0.893 \\

& \multirow{2}{*}{GRU}
& Weak
& 0.063 & 0.798
& \textcolor{blue}{0.062} & \textcolor{blue}{0.817}
& \textbf{0.041} & \textbf{0.820}
& 0.079 & 0.806
& 0.089 & 0.812 \\

&
& Strong
& 0.235 & 0.884
& \textbf{0.223} & 0.919
& \textcolor{blue}{0.224} & 0.911
& 0.243 & \textbf{0.931}
& 0.251 & \textcolor{blue}{0.926} \\

& \multirow{2}{*}{Transformer}
& Weak
& \textbf{0.104} & 0.492
& 0.141 & 0.491
& \textcolor{blue}{0.112} & \textcolor{blue}{0.503}
& 0.155 & \textbf{0.510}
& 0.128 & 0.501 \\

&
& Strong
& 0.237 & 0.874
& 0.283 & \textcolor{blue}{0.900}
& 0.254 & 0.889
& \textcolor{blue}{0.208} & \textbf{0.921}
& \textbf{0.196} & 0.892 \\

\bottomrule
\end{tabular}
\end{table}

\section{Convergence Analysis for Kernel-Based Objective Optimization}
\label{convergenceexperiment}

Here we conduct experiments to support the perturbation-matrix optimization step introduced in Section~\ref{ngm_background} and Section~\ref{neuralnetwork}. In the grouped neural mirror construction, the scalar perturbation in Neural Gaussian Mirror is replaced by a matrix \(G_j\), which is optimized by minimizing the kernel-based conditional-dependence objective \(I_j^K(G_j)^2\). We track the standardized objective value during optimization to verify that this step is numerically stable and does not require heavy tuning. The results show that initializing \(G_j=I_m\) and using a fixed learning rate and number of epochs yields stable convergence, motivating the fixed choice used in the later neural experiments.

\begin{figure}[H]
    \centering
    \includegraphics[width=0.9\textwidth]{"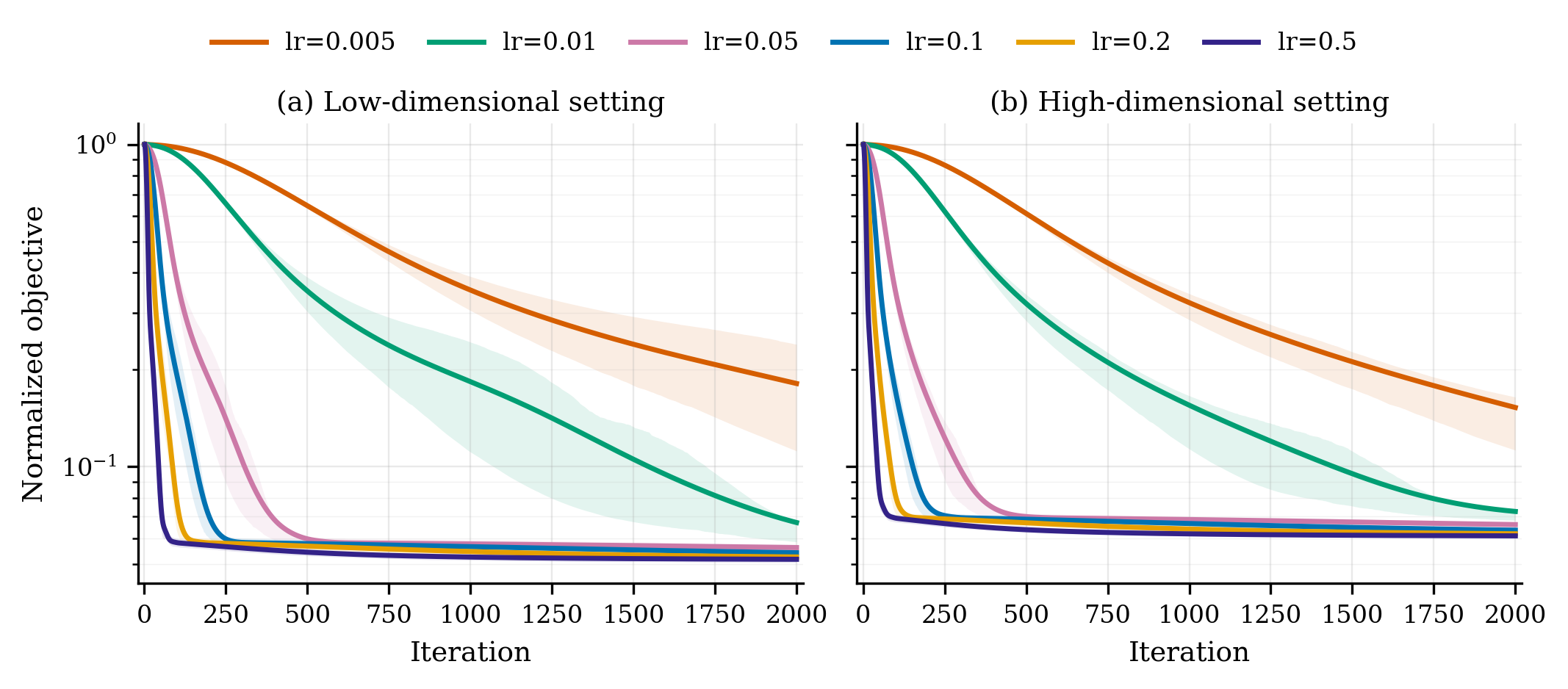"}
    \caption{Convergence curve of the kernel-based objective optimization}
    \label{gj_lr_neurips_combined}
\end{figure}

\section{Additional Linear Simulation Results}
\label{additionalsimulation}

This appendix reports additional linear simulation results under a block-correlated Gaussian grouped design. This design complements the AR grouped design in Section~\ref{linear_simulation}: instead of generating each grouped feature from lagged values of a latent AR process, we generate a static Gaussian design with controlled within-group and between-group correlations. This allows us to check whether the performance of GGM depends on the temporal AR construction used in the main linear simulations.

In this design, each row of \(X\) is sampled from a multivariate Gaussian distribution. For each grouped feature \(X_j\), the \(m\) sub-features have within-group correlation \(\rho_w\in\{0.1,0.5,0.9\}\). Columns from different grouped features have among-group correlation \(\rho_b=\alpha\rho_w\), with \(\alpha\in\{0.2,0.8\}\). For each non-null grouped feature, the sub-feature weights are sampled independently from \(N(0,\sigma_\beta^2)\), rather than following the decreasing lag profile used in the AR grouped design. Table~\ref{tab:block_correlated_gaussian_linear} reports the averaged grouped-feature FDR and power over 50 repetitions. The results show that GGM maintains stable FDR control and high power under this alternative block-correlated Gaussian design.

\begin{table}[H]
\centering
\footnotesize
\setlength{\tabcolsep}{5pt}
\renewcommand{\arraystretch}{0.9}
\caption{Linear simulation results under the block-correlated Gaussian grouped design}
\label{tab:block_correlated_gaussian_linear}
\begin{tabular}{@{}ccccccccccc@{}}
\midrule
\multirow{2}{*}{\textbf{\makecell{Setting \\ (p*m, n)}}} &
\multirow{2}{*}{\textbf{\makecell{Among-Group \\ Corr}}} &
\multirow{2}{*}{\textbf{\makecell{In-Group \\ Corr}}} &
\multicolumn{2}{c}{\textbf{GGM-OLS}} &
\multicolumn{2}{c}{\textbf{GGM-GLS}} &
\multicolumn{2}{c}{\textbf{Partial-F BH}} &
\multicolumn{2}{c}{\textbf{Group Lasso}} \\
\cmidrule(lr){4-5}
\cmidrule(lr){6-7}
\cmidrule(lr){8-9}
\cmidrule(lr){10-11}
& & & FDR & Power & FDR & Power & FDR & Power & FDR & Power \\
\midrule

\multirow{6}{*}{$\makecell{p*m = 1500 \\ n = 2500}$}
& \multirow{3}{*}{0.2}
& 0.1 & \textbf{0.076} & 1.000 & \textcolor{blue}{0.077} & 1.000 & 0.090 & 1.000 & 0.316 & 1.000 \\
&
& 0.5 & \textcolor{blue}{0.088} & 1.000 & 0.091 & 1.000 & \textbf{0.086} & 1.000 & 0.402 & 1.000 \\
&
& 0.9 & 0.104 & 0.997 & \textbf{0.078} & 0.997 & \textcolor{blue}{0.087} & 0.999 & 0.453 & 0.993 \\
&
\multirow{3}{*}{0.8}
& 0.1 & \textcolor{blue}{0.075} & 1.000 & 0.080 & 1.000 & \textbf{0.061} & 1.000 & 0.404 & 1.000 \\
&
& 0.5 & 0.111 & 1.000 & 0.124 & 1.000 & \textbf{0.076} & 1.000 & 0.381 & 1.000 \\
&
& 0.9 & \textbf{0.084} & 0.999 & \textcolor{blue}{0.085} & 0.999 & \textbf{0.084} & 0.999 & 0.395 & 1.000 \\

\midrule

\multirow{6}{*}{$\makecell{p*m = 3500 \\ n = 2500}$}
& \multirow{3}{*}{0.2}
& 0.1 & 0.085 & 1.000 & \textcolor{blue}{0.069} & 1.000 & \textbf{0.065} & 1.000 & 0.318 & 1.000 \\
&
& 0.5 & \textbf{0.083} & 1.000 & 0.097 & 1.000 & \textcolor{blue}{0.094} & 1.000 & 0.442 & 1.000 \\
&
& 0.9 & 0.111 & 1.000 & \textbf{0.065} & 1.000 & \textcolor{blue}{0.095} & 0.994 & 0.378 & 1.000 \\
&
\multirow{3}{*}{0.8}
& 0.1 & \textcolor{blue}{0.067} & 1.000 & 0.074 & 0.995 & \textbf{0.063} & 1.000 & 0.415 & 0.997 \\
&
& 0.5 & \textcolor{blue}{0.104} & 0.998 & 0.142 & 1.000 & \textbf{0.058} & 0.997 & 0.397 & 1.000 \\
&
& 0.9 & \textbf{0.086} & 0.996 & 0.091 & 1.000 & \textcolor{blue}{0.088} & 1.000 & 0.383 & 0.999 \\

\bottomrule
\end{tabular}
\end{table}

\section{Additional Details for the CausalRivers Real Data Application}
\label{causalriverdetails}

\subsection{Dataset and Preprocessing}

CausalRivers~\cite{Stein2025} is a large-scale real-world benchmark for causal discovery from river-discharge time series. In this paper, we use the East Germany subset, which contains 666 gauge stations observed at 15-minute resolution from January 2019 to December 2023, together with a directed upstream-to-downstream graph with 666 nodes and 651 edges. The dataset is useful for our setting because it combines realistic temporal dependence with physically meaningful graph information. However, the graph should only be treated as proxy relevance information in our regression problem. It should not be interpreted as a perfect statistical ground truth for our regression problem, since river discharge may also be affected by nonlinear flow, seasonality, weather-driven common shocks, variable causal lags, and unobserved confounding.

Let \(R_{t,k}\) denote the discharge measurement at station \(k\). We aggregate the raw 15-minute measurements to 3-hour averages, interpolate missing values, and use first differences \(\Delta R_{t,k}=R_{t,k}-R_{t-1,k}\) to reduce persistent level effects and common low-frequency trends. We use the first 2400 differenced observations after January 1, 2021, and split them chronologically with a train-test ratio of \(4:1\). For a target station \(j\), the real response is \(y_t^{\mathrm{real}}=\Delta R_{t,j}\). For each candidate station \(k\), we construct one grouped lag feature \(X_{t,k}=(\Delta R_{t-1,k},\Delta R_{t-2,k},\Delta R_{t-3,k})\), so the selected unit is a station-level lag block rather than an individual lagged covariate. All covariates and responses are standardized using the training split.

\subsection{Target and Grouped Feature Construction}

We select six target stations using a fixed \(X\)-only screening rule. A target is retained if it has at least 10 graph-relevant candidate groups, 50 eligible graph-null groups, a low-dimensional feasible design for the linear benchmark, and an acceptable condition number. The selected target station IDs are \(176,166,305,304,730,\) and \(167\). As shown in Table~\ref{tab:causalrivers_design_summary}, each target design contains 64--66 station groups, including 14--16 graph-relevant groups and 50 sampled graph-null groups. Since each station group has three lags, this gives 192--198 scalar lagged covariates.

\begin{table}[H]
\centering
\footnotesize
\setlength{\tabcolsep}{5pt}
\renewcommand{\arraystretch}{0.9}
\caption{Summary of the six CausalRivers target designs}
\label{tab:causalrivers_design_summary}
\small
\begin{tabular}{@{}cccccccccc@{}}
\toprule
Target ID & River system & \# Direct & \# 2nd & \# Null & \# Groups & \# Lagged covariates & Cond. no. \\
\midrule
176 & Elbe  & 10 & 6  & 50 & 66 & 198 & 49.284 \\
166 & Elbe  & 5  & 11 & 50 & 66 & 198 & 64.500 \\
305 & Havel & 6  & 9  & 50 & 65 & 195 & 24.152 \\
304 & Havel & 4  & 10 & 50 & 64 & 192 & 24.750 \\
730 & Saale & 6  & 8  & 50 & 64 & 192 & 25.281 \\
167 & Elbe  & 4  & 10 & 50 & 64 & 192 & 25.753 \\
\bottomrule
\end{tabular}
\end{table}

We define graph-proxy relevance using the directed river graph. Direct upstream stations are labeled as first-order relevant groups, and upstream stations of those direct upstream stations are labeled as second-order relevant groups. Graph-null groups are sampled from stations with no directed path relation to the target, excluding the target itself, its ancestors, and its descendants. Candidate stations with near-zero differenced variance are removed before fitting.

\begin{figure}[H]
    \centering
    \includegraphics[width=1\textwidth]{"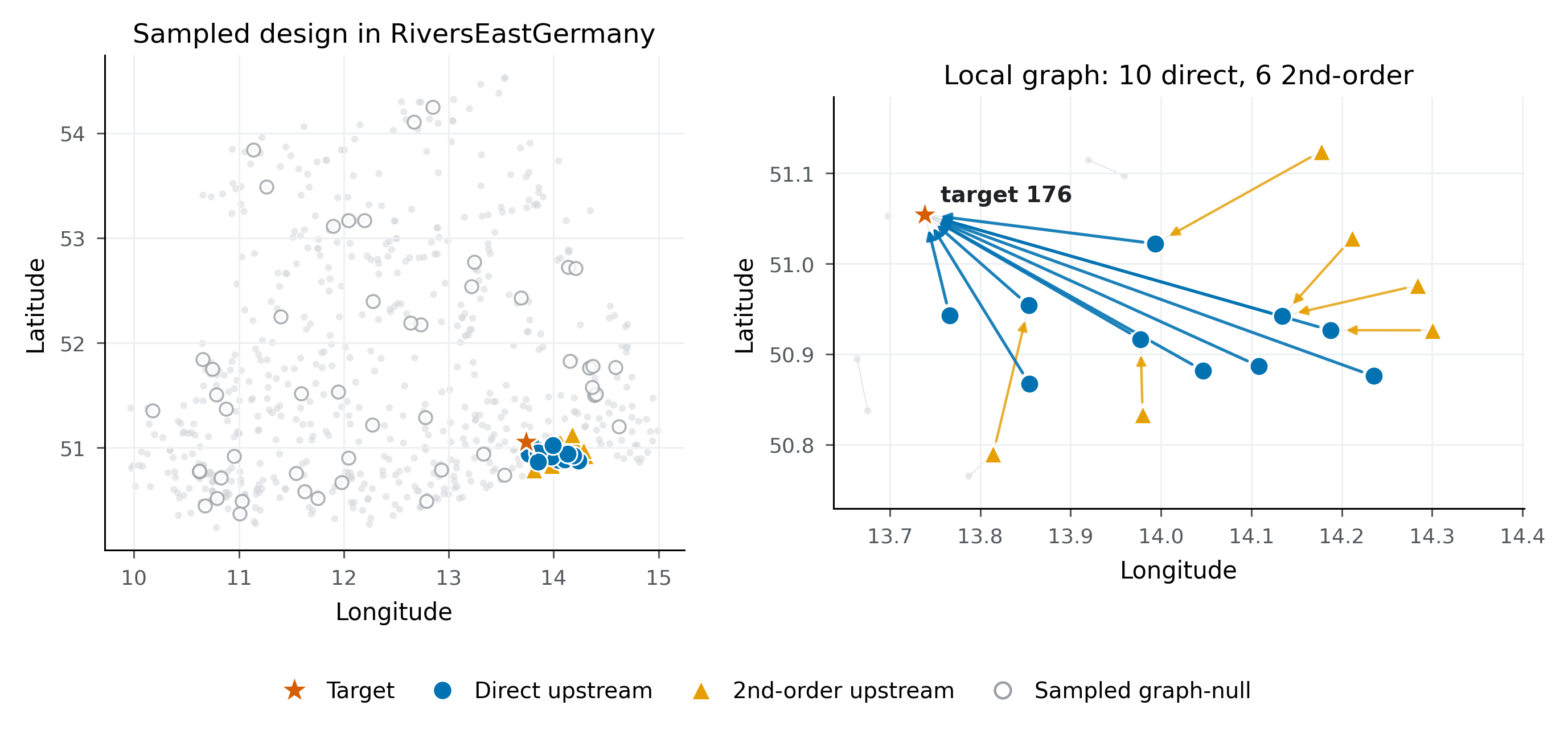"}
    \caption{Graph-proxy relevance construction example for Target ID 176}
    \label{causalriver_example_graph}
\end{figure}

\subsection{Real and Semi-synthetic Responses}

We evaluate both real and semi-synthetic responses, summarized in Table~\ref{tab:causalrivers_response_types}. The real response is \(y_t^{\mathrm{real}}=\Delta R_{t,j}\). Since the directed river graph does not fully determine the statistical null in our regression model, FDR and power for real \(y\) are reported as graph-proxy.

For semi-synthetic responses, we keep the real river covariates \(X\) fixed and define the true relevant set as \(S_1=\mathcal S_{\mathrm{direct}}\cup\mathcal S_{\mathrm{second}}\), where \(\mathcal S_{\mathrm{direct}}\) contains direct upstream groups and \(\mathcal S_{\mathrm{second}}\) contains second-order upstream groups. This preserves the real covariate dependence while allowing exact FDR and power evaluation.

For the linear semi-synthetic response, used in the linear benchmark, we generate
\[
y_t^{\mathrm{lin}}=\sum_{k\in S_1}X_{t,k}^{\top}\boldsymbol\beta_k+\varepsilon_t.
\]
Direct upstream groups have coefficient scale \(1.00\), second-order upstream groups have coefficient scale \(0.55\), and the lag profile is \((1.00,0.65,0.35)\) up to an independent random sign per relevant group. Gaussian noise is calibrated to signal-to-noise ratio \(=2.0\).

For the nonlinear semi-synthetic response, used in the neural benchmark, we first compute \(s_{t,k}=\boldsymbol w^\top X_{t,k}\), where \(\boldsymbol w\) is an \(\ell_2\)-normalized exponentially decaying lag-weight vector with larger weight on recent lags. We then generate
\[
\eta_t
=
\sum_{k\in \mathcal S_{\mathrm{direct}}}\xi_k \tanh(s_{t,k})
+
0.55\sum_{k\in \mathcal S_{\mathrm{second}}}\xi_k \sin(s_{t,k})
+
0.20\sum_{(k,k')\in \mathcal P_{\mathrm{direct}}}
\tanh(s_{t,k}s_{t,k'}/2),
\]
where \(\xi_k\in\{-1,1\}\) and \(\mathcal P_{\mathrm{direct}}\) contains pairs of direct upstream groups. We set \(y_t^{\mathrm{nonlin}}=\eta_t+\varepsilon_t\), with Gaussian noise calibrated to signal-to-noise ratio \(=2.0\). The interaction term uses only direct upstream groups, so the true support remains unambiguous.

\begin{table}[H]
\centering
\caption{Response types used in the CausalRivers application}
\label{tab:causalrivers_response_types}
\small
\begin{tabular}{llll}
\toprule
Outcome & Covariates & Support label & Main purpose \\
\midrule
Real \(y\) & Real river \(X\) & Graph-proxy & Graph-consistent real-data selection \\
Linear synthetic \(y\) & Real river \(X\) & Known support & Linear grouped-feature FDR evaluation \\
Nonlinear synthetic \(y\) & Real river \(X\) & Known support & Neural grouped-feature FDR evaluation \\
\bottomrule
\end{tabular}
\end{table}

\subsection{Methods and Evaluation Metrics}

For the CausalRivers experiments, we use target FDR level \(q=0.2\) to allow a less conservative real-data selection. For the linear benchmark, we compare GGM with Partial-\(F\) BH and cross-validated Group Lasso. Partial-\(F\) BH computes one group-level partial-\(F\) \(p\)-value for each station block and applies the Benjamini--Hochberg procedure to the resulting grouped-feature \(p\)-values. Group Lasso is treated as a feature-selection baseline rather than an FDR-control method.

For the neural benchmark, we train LSTM, GRU, and Transformer encoder models on the same station-level lag-block design. The training split is further divided into fitting and validation blocks. We use hidden dimension 32, one layer, dropout 0.1, Adam optimizer with learning rate \(10^{-3}\), weight decay \(10^{-4}\), batch size 128, maximum 150 epochs, and early stopping patience 20. The Transformer uses 4 attention heads. For PSGM selection, we use grouped Permutation-SHAP importance with 10 permutation draws and train the selection model for 120 epochs.

For each method, we report FDR or proxy FDR, power or proxy power, selected-group rates for direct upstream, second-order upstream, and null groups, and test MSE. The columns \(\%\) Direct, \(\%\) 2nd, and \(\%\) Null denote the fraction of groups selected within each category. We also report the full-model MSE and an oracle upstream-only MSE. For real \(y\), this oracle is a graph-oracle model using direct and second-order upstream groups; for semi-synthetic \(y\), it is the true-support oracle.

\begin{table}[H]
\centering
\caption{Hyperparameters for CausalRivers experiments}
\label{tab:causalrivers_hyperparameters}
\small
\begin{tabular}{ll}
\toprule
Component & Value \\
\midrule
Resampling frequency & 3 hours \\
Look-back length & \(L=3\) \\
Target FDR level & \(q=0.2\) \\
Number of target stations & 6 \\
Graph-null groups per target-seed case & 50 \\
Train-test split & \(4:1\), chronological \\
Neural hidden dimension & 32 \\
Neural layers & 1 \\
Dropout & 0.1 \\
Learning rate & \(10^{-3}\) \\
Weight decay & \(10^{-4}\) \\
Batch size & 128 \\
Maximum epochs & 150 \\
Early stopping patience & 20 \\
Transformer heads & 4 \\
PSGM permutation draws & 10 \\
PSGM selection epochs & 120 \\
Synthetic SNR & 2.0 \\
\bottomrule
\end{tabular}
\end{table}

\subsection{Linear CausalRivers Results}

Table~\ref{tab:causalrivers_linear} reports the linear CausalRivers results for both real \(y\) and the linear semi-synthetic response. For real \(y\), FDR and power are graph-proxy metrics based on the river graph labels. For semi-synthetic \(y\), FDR and power are computed using the constructed true support. The table also reports category-wise selection rates and test MSE for selected, full, and oracle upstream-only models.

\begin{table}[H]
\centering
\footnotesize
\setlength{\tabcolsep}{3.5pt}
\renewcommand{\arraystretch}{0.9}
\caption{Linear model results on the CausalRivers real data application}
\label{tab:causalrivers_linear}
\begin{tabular}{@{}cccccccccc@{}}
\toprule
Outcome & Method 
& FDR/Proxy
& Power/Proxy
& \% Direct 
& \% 2nd  
& \% Null
& Sel. MSE 
& Full MSE 
& Oracle MSE \\
\midrule
\multirow{3}{*}{\makecell{Real \(y\)}}
& GGM 
& \textcolor{blue}{0.508} & 0.194 & 0.312 & 0.137 & 0.116 & 1.437 & 1.485 & 1.342 \\
& Partial-F BH 
& 0.576 & \textcolor{blue}{0.342} & 0.431 & 0.293 & 0.148 & \textcolor{blue}{1.302} & 1.485 & 1.342 \\
& Group Lasso 
& \textbf{0.415} & \textbf{0.426} & 0.490 & 0.390 & 0.238 & \textbf{1.275} & 1.485 & 1.342 \\
\midrule
\multirow{3}{*}{\makecell{Semi-\\synthetic \(y\)}}
& GGM 
& \textcolor{blue}{0.219} & \textcolor{blue}{0.922} & 0.974 & 0.885 & 0.144 & 0.400 & 0.394 & 0.358 \\
& Partial-F BH 
& \textbf{0.127} & 0.885 & 0.956 & 0.825 & 0.050 & \textcolor{blue}{0.397} & 0.394 & 0.358 \\
& Group Lasso 
& 0.668 & \textbf{0.976} & 0.978 & 0.972 & 0.627 & \textbf{0.381} & 0.394 & 0.358 \\
\bottomrule
\end{tabular}
\end{table}

\subsection{Neural CausalRivers Results}

Table \ref{tab:causalrivers_lstm} and Table \ref{tab:causalrivers_gru} reports the full neural comparison across LSTM and GRU. The main text reports only the Transformer results, since it gives the best predictive fit among the neural architectures. Here, we include the other two architectures to show that the selection behavior is not specific to a single backbone. The real-response rows use graph-proxy FDR and power, while the nonlinear semi-synthetic rows use the constructed true support.

\begin{table}[H]
\centering
\footnotesize
\setlength{\tabcolsep}{4pt}
\renewcommand{\arraystretch}{0.9}
\caption{LSTM results on the CausalRivers real data application}
\label{tab:causalrivers_lstm}
\begin{tabular}{@{}cccccccccc@{}}
\toprule
Outcome & Method 
& FDR/Proxy
& Power/Proxy
& \% Direct 
& \% 2nd  
& \% Null
& Sel. MSE 
& Full MSE 
& Oracle MSE \\
\midrule
\multirow{4}{*}{\makecell{Real \(y\)}}
& PSGM 
& \textcolor{blue}{0.739} & \textcolor{blue}{0.299} & 0.331 & 0.276 & 0.212 & \textbf{0.385} & 0.436 & 0.365 \\
& AGL 
& 0.772 & \textbf{1.000} & 1.000 & 1.000 & 1.000 & \textcolor{blue}{0.427} & 0.436 & 0.365 \\
& AFS 
& 0.795 & 0.140 & 0.131 & 0.150 & 0.155 & 0.598 & 0.436 & 0.365 \\
& Sawaya 
& \textbf{0.405} & 0.029 & 0.036 & 0.017 & 0.020 & 0.613 & 0.436 & 0.365 \\
\midrule
\multirow{4}{*}{\makecell{Semi-\\synthetic \(y\)}}
& PSGM 
& \textbf{0.306} & \textcolor{blue}{0.896} & 0.992 & 0.826 & 0.143 & \textbf{0.512} & 0.575 & 0.501 \\
& AGL 
& 0.772 & \textbf{1.000} & 1.000 & 1.000 & 1.000 & \textcolor{blue}{0.592} & 0.575 & 0.501 \\
& AFS 
& 0.919 & 0.056 & 0.064 & 0.059 & 0.178 & 0.865 & 0.575 & 0.501 \\
& Sawaya 
& \textcolor{blue}{0.342} & 0.040 & 0.044 & 0.025 & 0.023 & 0.837 & 0.575 & 0.501 \\
\bottomrule
\end{tabular}
\end{table}

\begin{table}[H]
\centering
\footnotesize
\setlength{\tabcolsep}{4pt}
\renewcommand{\arraystretch}{0.9}
\caption{GRU results on the CausalRivers real data application}
\label{tab:causalrivers_gru}
\begin{tabular}{@{}cccccccccc@{}}
\toprule
Outcome & Method 
& FDR/Proxy
& Power/Proxy
& \% Direct 
& \% 2nd  
& \% Null
& Sel. MSE 
& Full MSE 
& Oracle MSE \\
\midrule
\multirow{4}{*}{\makecell{Real \(y\)}}
& PSGM 
& \textcolor{blue}{0.700} & \textcolor{blue}{0.379} & 0.403 & 0.367 & 0.245 & \textbf{0.373} & 0.439 & 0.378 \\
& AGL 
& 0.772 & \textbf{1.000} & 1.000 & 1.000 & 1.000 & \textcolor{blue}{0.437} & 0.439 & 0.378 \\
& AFS 
& 0.757 & 0.162 & 0.172 & 0.156 & 0.142 & 0.650 & 0.439 & 0.378 \\
& Sawaya 
& \textbf{0.458} & 0.057 & 0.051 & 0.062 & 0.022 & 0.513 & 0.439 & 0.378 \\
\midrule
\multirow{4}{*}{\makecell{Semi-\\synthetic \(y\)}}
& PSGM 
& \textbf{0.264} & \textcolor{blue}{0.902} & 0.983 & 0.854 & 0.110 & \textbf{0.508} & 0.592 & 0.523 \\
& AGL 
& 0.772 & \textbf{1.000} & 1.000 & 1.000 & 1.000 & \textcolor{blue}{0.587} & 0.592 & 0.523 \\
& AFS 
& 0.916 & 0.068 & 0.111 & 0.045 & 0.198 & 0.841 & 0.592 & 0.523 \\
& Sawaya 
& \textcolor{blue}{0.347} & 0.052 & 0.078 & 0.043 & 0.017 & 0.819 & 0.592 & 0.523 \\
\bottomrule
\end{tabular}
\end{table}

\clearpage

\newpage

\end{document}